\documentclass[11pt,reqno]{article}

\usepackage{amsthm}
\usepackage{amsmath}
\usepackage[margin=1in]{geometry}
\usepackage[a-1a]{pdfx}
\usepackage{amsxtra}
\usepackage{amsfonts}
\usepackage{amssymb}
\usepackage[T1]{fontenc}

\usepackage{graphicx}
\usepackage{subfig}
\usepackage{cite}
\usepackage{authblk}
\hypersetup{colorlinks,breaklinks,
             linkcolor=blue,urlcolor=blue,
             anchorcolor=blue,citecolor=blue}
\usepackage{color}
\usepackage{xcolor}
\usepackage{hyperref}
\usepackage{array}
\usepackage{booktabs}
\usepackage{enumerate}
\usepackage{dsfont}
\usepackage{mathrsfs,amsfonts,amsmath,amssymb,amsthm}
\usepackage{dirtytalk}
\usepackage{authblk}
\usepackage{todonotes}

\newtheorem{theorem}{Theorem}
\newtheorem{lemma}[theorem]{Lemma}
\newtheorem{corollary}[theorem]{Corollary}

\newtheorem{proposition}[theorem]{Proposition}
\theoremstyle{definition}
\newtheorem{remark}[theorem]{Remark}
\newtheorem{definition}[theorem]{Definition}

\newtheorem{assumption}[theorem]{Assumption}

\numberwithin{equation}{section}
\numberwithin{figure}{section}
\numberwithin{theorem}{section}

\DeclareMathOperator*{\KL}{KL}
\DeclareMathOperator*{\dist}{dist}

\DeclareMathOperator{\Lip}{Lip}
\DeclareMathOperator{\supp}{supp}
\let\div\relax
\DeclareMathOperator{\div}{div}

\renewcommand{\bar}[1]{{\overline{#1}}}
\renewcommand{\tilde}[1]{\widetilde{#1}}
\newcommand{\one}{\mathds{1}}

\newcommand{\eps}{\varepsilon}
\renewcommand{\epsilon}{\varepsilon}
\renewcommand{\phi}{\varphi}

\renewcommand{\d}[1]{\,{\rm d}#1}
\newcommand{\dx}{\d x}
\newcommand{\dy}{\d y}
\newcommand{\dz}{\d z}
\newcommand{\dt}{\d t}

\newcommand{\h}{\Theta}
\renewcommand{\hat}[1]{\widehat{#1}}
\newcommand{\ddt}{\frac{d}{dt}\Big\vert_{t=0}}
\renewcommand{\H}{{\rm H}}
\newcommand{\A}{{\mathcal A}}

\newcommand{\E}{{\mathbb E}}
\renewcommand{\H}{{\mathcal H}}

\renewcommand{\O}{{\mathcal O}}
\renewcommand{\P}{{\mathbb P}}

\newcommand{\R}{\mathbb{R}}

\def\Xint#1{\mathchoice
{\XXint\displaystyle\textstyle{#1}}%
{\XXint\textstyle\scriptstyle{#1}}%
{\XXint\scriptstyle\scriptscriptstyle{#1}}%
{\XXint\scriptscriptstyle\scriptscriptstyle{#1}}%
\!\int}
\def\XXint#1#2#3{{\setbox0=\hbox{$#1{#2#3}{\int}$ }
\vcenter{\hbox{$#2#3$ }}\kern-.6\wd0}}

\def\dashint{\Xint-}

\newcommand{\Ac}{\mathcal{A}}
\newcommand{\Bc}{\mathcal{B}}
\newcommand{\Rc}{\mathcal{R}}
\newcommand{\Ic}{\mathcal{I}}
\newcommand{\Jc}{\mathcal{J}}

\newcommand{\Fc}{\mathcal{F}}
\newcommand{\Ec}{\mathcal{E}}
\newcommand{\Dc}{\mathcal{D}}

\newcommand{\lmb}{\left[}
\newcommand{\rmb}{\right]}

\newcommand{\be}{\begin{equation}}
\newcommand{\ee}{\end{equation}}

\newcommand\restr[2]{{
  \left.\kern-\nulldelimiterspace 
  #1 
  \vphantom{\big|} 
  \right|_{#2} 
  }}

\graphicspath{{./figs/}}

\title{On the continuum limit of t-SNE for data visualization\thanks{ {\bf Funding:} RM acknowledges partial funding from NSF DMS 230797 and the Simons Foundation TSM. JC acknolwedges funding from NSF DMS 2436333, and an Albert and Dorothy Marden Professorship. {\bf Source Code:} \url{https://github.com/jwcalder/tSNELimit}}}
\author[1]{Jeff Calder}
\affil[1]{School of Mathematics, University of Minnesota}
\author[2]{Zhonggan Huang}
\affil[2]{Department of Mathematics, University of Utah}
\author[3]{Ryan Murray}
\affil[3]{Department of Mathematics, North Carolina State University}
\author[3]{Adam Pickarski}

\graphicspath{{./code/}}

\begin{document} 
\maketitle

\begin{abstract}
This work is concerned with the continuum limit of a graph-based data visualization technique called the t-Distributed Stochastic Neighbor Embedding (t-SNE), which is widely used for visualizing data in a variety of applications, but is still poorly understood from a theoretical standpoint. The t-SNE algorithm produces visualizations by minimizing the Kullback-Leibler divergence between similarity matrices representing the high dimensional data and its low dimensional representation. We prove that as the number of data points $n \to \infty$, after a natural rescaling and in applicable parameter regimes, the Kullback-Leibler divergence is consistent 
with a continuum variational problem that involves a \emph{non-convex} gradient regularization term and a penalty on the magnitude of the probability density function in the visualization space. These two terms represent the continuum limits of the attraction and repulsion forces in the t-SNE algorithm. 

Due to the lack of convexity in the continuum variational problem, the question of well-posedeness is only partially resolved. We show that when both dimensions are $1$, the problem admits a unique smooth minimizer, along with an infinite number of discontinuous minimizers (interpreted in a relaxed sense). This aligns well with the empirically observed ability of t-SNE to separate data in seemingly arbitrary ways in the visualization. The energy is also very closely related to the famously ill-posed Perona-Malik equation, which is used for denoising and simplifying images. We present numerical results validating the continuum limit, provide some preliminary results about the delicate nature of the limiting energetic problem in higher dimensions, and highlight several problems for future work. 
\end{abstract}

\tableofcontents




\section{Introduction}\label{sec:intro}

Data visualization refers to techniques for embedding high dimensional data points $x_1,\dots,x_n\in \R^d$ into a low dimensional space $\R^m$, where typically $m=2$ or $m=3$, so the data can be viewed and explored by the end-user. The goal is to produce a visualization, i.e., embedding points $y_1,\dots,y_n\in \R^m$, that preserves important structures in the data, such as clusters and pairwise data similarities. There are a plethora of techniques for dimension reduction, which can also be utilized in visualization; a short list includes principal component analysis (PCA), multi-dimensional scaling (MDS), and graph-based spectral embedding techniques like Laplacian eigenmaps \cite{belkin2003laplacian} and diffusion maps \cite{coifman2006diffusion} (see also \cite{calder2025book} for an elementary overview). Many of these techniques work well when the embedding dimension $m$ is sufficiently large (though still small compared to $d$), but fail to give useful results for visualization in $\R^2$ or $\R^3$, since they aim to preserve global structures in the data and this is exceedingly difficult when the embedding dimension is small. 

More recently, a class of visualization algorithms based on local neighborhood embeddings have been developed, starting with the seminal Stochastic Neighbor Embedding (SNE) \cite{hinton2002stochastic} and t-SNE embedding \cite{maaten2008visualizing}, and including variants like the Uniform Manifold Approximation and Projection (UMAP) method \cite{mcinnes2018umap}. These methods have found a wide variety of applications in data visualization, including in bioinformatics  \cite{kobak2019art,linderman2019fast,cieslak2020t}, cancer research \cite{mandel2020sequential,sharma2021stochastic,bocker2022toward} and  natural language processing \cite{miyamoto2022natural,sharma2023optimization}, among many others. 

The important realization underlying this class of methods is that some distortion in the metric must be allowed, since isometric embeddings are not possible in the restrictive setting where $\{2,3\} \ni m \ll d$. Instead of spreading the distortion around equally, neighborhood embeddings prioritize the \emph{local} structure, while allowing distortion of large distances. In practice, this is achieved by building a similarity graph matrix $P$ over the high dimensional data $x_1,\dots,x_n\in \R^d$ (similar to a sparse $k$-nearest neighbor graph), building a related similarity graph matrix $Q$ over the locations of the embedding points $y_1,\dots,y_n\in \R^m$ (though typically with heavier tails as in \cite{maaten2008visualizing}), and then choosing the locations of the embedding points to maximally align $P$ and $Q$. For t-SNE this is done by minimizing the Kullback-Leibler divergence between $P$ and $Q$, while other methods like UMAP use related loss functions, like fuzzy cross-entropy. Since the weight matrix $P$ strongly prioritizes local nearest neighbor information, aligning $P$ and $Q$ preserves local distances while placing fewer restrictions on distortion at large scales. In any case, the loss function is minimized with gradient descent until a suitable local minimum is found.

While t-SNE and related methods are extremely well-used in practice, their theoretical properties are poorly understood. Empirically, it has been observed that t-SNE can generate clusters that are not present in the data, and that choices of hyperparameters can strongly affect the visualization and its interpretation \cite{kobak2019art,wattenberg2016use}. Several recent works have begun to explore the theoretical properties of t-SNE  \cite{steinerberger2022t,jeong2024convergence,auffinger2023equilibrium,murray2024large,linderman2022dimensionality}. Some work has shown that t-SNE can generate meaningful clusters \cite{linderman2019clustering,arora2018analysis,cai2022theoretical}, while other work shows clusters can be exaggerated \cite{bergam2025t}. Recent work also suggests that the variational formulation of t-SNE is not necessary for visualization, and the key mechanism is the attraction/repulsion dynamics induced by the gradient flow \cite{lu2025attraction}. 

An important question, which remains less understood, is whether visualizations derived from t-SNE are ``reproducible'', in the sense that they will give similar outputs for $i.i.d.$ inputs, especially as the number of data points increases. This is directly related to the question of \emph{consistency}, which we pose as a question of large data limits and is the central concern of this paper. Two papers that consider this are \cite{auffinger2023equilibrium, murray2024large}. The former identifies a rigorous large data limit in certain parameter regimes, particularly in the very large perplexity regimes which correspond to strongly connected similarity graphs. The latter is most closely related to our work, and establishes a continuum limit for a modification of the t-SNE energy that uses a stronger attraction term, and identifies the failure of convergence to a limit for unscaled minimizers of the t-SNE energy.

\subsection{Outline and summary of contributions}

In this paper, we study the continuum limit of t-SNE as the number of data points $n\to \infty$ and the graph remains sparse. We prove that, for smooth embedding functions and after a natural spatial rescaling, the Kullback-Leibler divergence used in t-SNE converges to a gradient-regularized variational energy with two terms that represent the attraction and repulsion forces in t-SNE. The continuum energy varies with embedding dimension $m$ and depends on some quantities in the graph construction in general. However, for the case of practical interest where the embedding dimension is $m=2$, the continuum limit energy for t-SNE can be written in the form
\[\Ec_{\textup{t-SNE}}[T] = \int_\Omega \dashint_{\partial B_1}\log(|DT(x)w|^2)\,dS(w) \, \rho_X \dx  + \log\left(\|\rho_Y\|_{L^2(\R^m)}^2 \right), \]
up to an additive constant (see Remark \ref{rem:attraction} for a derivation). Here, $T:\Omega \to \R^m$ is the embedding map, $DT$ is the Jacobian matrix, $\rho_X$ is the data density, and $\rho_Y$ is the density of the embedded data, i.e., the pushforward of $\rho_X$ under the mapping $T$. Hence $\rho_Y$ depends on $T$ in a rather complicated way. We also use $\dashint$ in the usual way, to represent the averaged integral, while $\partial B_1$ is the unit sphere, $dS(z)$ the surface area element, and $\Omega \subset \R^d$ the data domain. 

The first term in $\Ec_{\textup{t-SNE}}$ represents the continuum limit of the t-SNE attraction energy, while the second term represents repulsion (see Section \ref{sec:consistency} for precise definitions). The logarithmic dependence on $DT$ is quite similar to the famously ill-posed Perona-Malik gradient flow \cite{perona1990scale}, which has been widely used for image denoising and simplification and whose ill-posedness has been the subject several different mathematical works \cite{catte1992image,kichenassamy1997perona}. The repulsion term in our continuum limit energy is a nonstandard term, involving the squared $L^2$ norm of $\rho_Y$---the probability density of the visualization data in $\R^m$.\footnote{This is true only for $m=1,2$. For $m\geq 3$ the form of the repulsion is different; we refer to Section \ref{sec:consistency}.}  This term encourages data points to spread out in the embedding. 

The continuum energy for embedding dimension $m=1$ has a slightly different attraction term (see Section \ref{sec:consistency}), though it has the same logarithmic growth. While this logarithmic dependence on $DT$ is too weak to apply standard calculus of variations tools to establish existence of a minimizer, when $d=m=1$ we are able to exploit a simplified form of the energy and a delicate balance between the attraction and repulsion terms to obtain existence and uniqueness of a Lipschitz embedding map $T$.  This may be unexpected, given the ill-posedness of the Perona-Malik equation in this setting. Our well-posedness result is, however, quite subtle. While we show existence of a unique smooth minimizer, we also prove that we can make discontinuous perturbations of the minimizer that are also globally optimal in a relaxed sense. This fits well with empirical observations that t-SNE can ``cut'' the data in various ways and introduce discontinuities into the embedding map. We present numerical results in the toy setting of $d=m=1$ illustrating t-SNE on sparse graphs is well-approximated by the solution of our continuum limit equation. The scaling limits and consistency results are established in Section \ref{sec:consistency}, while Section \ref{sec:continuum} establishes well-posedness when $d=m=1$ and presents the numerical results. 

In Section \ref{sec:higherdimensionanalysis} we study the continuum limit energy in the higher dimensional setting where $d > m$ (i.e., the embedding dimension is strictly smaller than the data dimension), which is the practical setting. There we show that, even after rescaling space in the natural way, the emergence of microstructure leads to a situation where the limiting energy does not admit a minimizer; see Theorem \ref{thm:sublinear} and Corollary \ref{cor:sublinear}. This type of microstructure can be observed in certain empirical settings; see, for example, Figure \ref{fig:tsneplots}. We do not completely resolve the question of the appropriate way to understand the large data limit in this setting, instead showing that one can regain existence of minimizers by modestly strengthening the attraction forces; see Theorem \ref{thm:existence}. We also identify a singular perturbation associated with the attraction energy applied to discontinuous embedding maps that may need to be accounted for in understanding large data limits; see Lemma \ref{lem:nonEh}. 

Throughout the paper, we also track how the continuum limit energy differs between t-SNE and the symmetric version of the original SNE algorithm \cite{hinton2002stochastic}, which is quite similar to t-SNE except that the repulsion weight matrix is far more localized (exponential versus inverse square distance). It turns out that for the SNE algorithm, the continuum limit energy has the form (see Remark \ref{rem:SNERepulsion})
\[\Ec_{\textup{SNE}}[T] = \int_\Omega |DT|^2 \rho_X^{1-2/d} \dx +\log\left(\|\rho_Y\|_{L^2(\R^m)}^2 \right).\]
in all embedding dimensions $m\geq 1$. In particular, the continuum repulsion term is unchanged, and the localized repulsion weights in SNE materialize as a stronger (quadratic) attraction term. The SNE energy $\Ec_{\textup{SNE}}$ admits a minimizer in the Sobolev space $W^{1,2}(\Omega;\R^m)$ using standard methods (see Section \ref{sec:existence}). Generally speaking, the performance of SNE is worse than t-SNE due to a crowding phenomenon whereby clusters pack tightly together with no separation and sometimes significant overlap. This fits well with the continuum energy $\Ec_{\textup{SNE}}$, whose attraction term is highly smoothing --- when $\rho_X$ is constant or $d=2$ it seeks harmonic component functions. This strongly discourages rapid changes in $T$, which should occur between clusters. 

In Section \ref{sec:conclusion} we conclude and outline some open problems for future work. 


\section{Scaling limits of the t-SNE energy}\label{sec:consistency}

In this section we establish scaling limits for the t-SNE energy as the number of data points $n\to \infty$ and the bandwidth for the graph construction $h\to 0$. Previous work \cite{murray2024large} has shown that to obtain a meaningful limit one needs to additionally rescale the embedding $T$ to obtain meaningful limits: we identify a family of possible limiting energies which arise from such scalings. We also study symmetries of these limiting energies, which suggests the scaling for $T$ will necessarily differ depending upon the embedding dimension.

\subsection{Stochastic Neighbor Embeddings}

First, we recall the original stochastic neighbor embedding (SNE) \cite{hinton2002stochastic} along with the improved t-SNE algorithm \cite{van2008visualizing}, both of which can be viewed in the same general framework.

Let $x_1,\dots,x_n\in \R^d$ be $n$ datapoints that we wish to embed into a low dimensional space $\R^m$, where usually $m=2$ or $m=3$ for visualization purposes. The first step is to construct a similarity graph over the data points. 
\begin{equation}\label{eq:pijgraph}
p_{ij}=\frac{p_{i|j}+p_{j|i}}{2n},\qquad p_{j|i}=\frac{\eta_{h_i}(|x_i-x_j|)}{\sum_{k}\eta_{h_i}(|x_i-x_k|)}
\end{equation}
where $\eta:[0,\infty) \to [0,\infty)$ is a user-specified function and $\eta_h(t) = \frac{1}{h^d}\eta\left( \frac{t}{h}\right)$. Both the symmetric SNE algorithm (formulated in \cite{van2008visualizing}) and the t-SNE algorithm use $\eta(t) = \exp(-\tfrac12 t^2)$, but we will proceed in general since the analysis is similar. The bandwidths $h_i$ are set locally based on the density of points nearby $x_i$. The t-SNE algorithm uses the notion of perplexity, which at a high level is not far removed from a $k$-nearest neighbor graph construction. In fact, in practice $p_{j|i}$ is explicitly constructed to only be supported on $k$-nearest neighbors of $x_i$ (i.e., $\eta$ is a truncated Gaussian). Later on, we will assume the specific form $h_i = \sigma(x_i) h$ for a positive function $\sigma$ and common length scale $h>0$. 

Associated with each data point $x_i$, we have an embedded point $y_i \in \R^m$. To determine the locations of the embedded points $y_1,\dots,y_n$, we construct a similarity graph as follows:
\begin{equation}\label{eq:qijgraph}
q_{ij}=\frac{\psi(|y_i-y_j|)}{\sum_{k\ne l}\psi(|y_k-y_l|)},
\end{equation}
where $\psi:[0,\infty)\to [0,\infty)$ is a user-specified function, which is decreasing with $\lim_{t\to \infty}\psi(t)=0$. The original SNE algorithm used highly localized weights $\psi(t) = \exp(-t^2)$, which led to a \emph{crowding problem}, in which clusters were pulled tightly together. The improved algorithm t-SNE uses the heavy-tailed weights $\psi(t) = (1 + t^2)^{-1}$, which is known as \emph{Student's t-distribution}; hence the name. The heavier tails alleviate the crowding problem and lead to a better visualization technique. 

The locations of the embedding points $y_1,\dots,y_n$ are determined by aligning the weight matrices $P$ and $Q$ as closely as possible. This is done by minimizing the Kullback-Leibler (KL) divergence between the two matrices, which is given by 
\begin{equation}\label{eq:KL}
\KL(P\|Q)=\sum_{i,j=1}^np_{ij}\log\frac{p_{ij}}{q_{ij}}.
\end{equation}
The KL divergence is well-known to be nonnegative, and provides a measure of similarity between probability distributions (indeed, one can check that both $P$ and $Q$ are probability matrices in the sense that $\sum_{i,j}p_{ij} = 1 = \sum_{i,j} q_{ij})$. However, the KL-divergence is not symmetric, so it is not a metric. While one can easily symmetrize the KL divergence, the non-symmetry is in fact a feature of the algorithm, and is part of the design of the SNE and t-SNE algorithm. The whole point of the stochastic neighbor embeddings is to preserve the local nearest neighbor structure in the embedding, while allowing distortions at larger scales. The KL divergence allows this to occur, since whenever $p_{ij}=0$ (or is close to zero), the energy does not penalize a mismatch between $p_{ij}$ and $q_{ij}$ --- indeed, the energy prioritizes matching weights only when $p_{ij}\gg 0$, which indicates that $i$ and $j$ are strong neighbors.

The KL divergence is minimized by gradient descent, with various clever optimization tricks to improve the convergence and embedding results (but which are not directly relevant to this paper). In the end, the KL divergence depends on the embedding points $y_1,\dots,y_n\in \R^m$, so let us write $E(y_1,\dots,y_d) = \KL(P\|Q)$. To perform gradient descent, we must compute the gradient of $E$ with respect to each $y_i$. The computation is direct, though somewhat tedious, and clearly depends the choice of $\psi$. For the original SNE algorithm where $\psi(t) = \exp(-t^2)$, the gradient is 
\begin{equation}\label{eq:SNEgrad}
\nabla_{y_i}E = 4\sum_{j=1}^n (p_{ij} - q_{ij})(y_i - y_j),
\end{equation}
while for the t-SNE algorithm where $\psi(t) = (1+t^2)^{-1}$ the gradient is 
\begin{equation}\label{eq:tSNEgrad}
\nabla_{y_i}E = 4Z\sum_{j=1}^n q_{ij}(p_{ij} - q_{ij})(y_i - y_j),
\end{equation}
where $Z = \sum_{i\neq j}(1 + |y_i-y_j|^2)^{-1}$. In both cases, the gradient has two terms corresponding to \emph{attraction}  and \emph{repulsion}. For the SNE algorithm \eqref{eq:SNEgrad} the attraction term has weight $p_{ij}$ and the repulsion has weight $q_{ij}$. This means that moving in the negative gradient direction pulls $y_i$ toward $y_j$ if $p_{ij}>q_{ij}$ (attraction) and pushes away from $y_j$ when $q_{ij} > p_{ij}$ (repulsion), which occurs when the points get too close together. The main difference in the t-SNE algorithm is that the attraction weight $p_{ij}$ is also multiplied by $q_{ij}$. This effectively ensures that only points $y_i$ and $y_j$ that are nearby in the embedding experience attraction, and alleviates the crowding problem. 

\begin{figure}[!t]
\centering
\subfloat[$k=0$]{\includegraphics[width=0.32\textwidth]{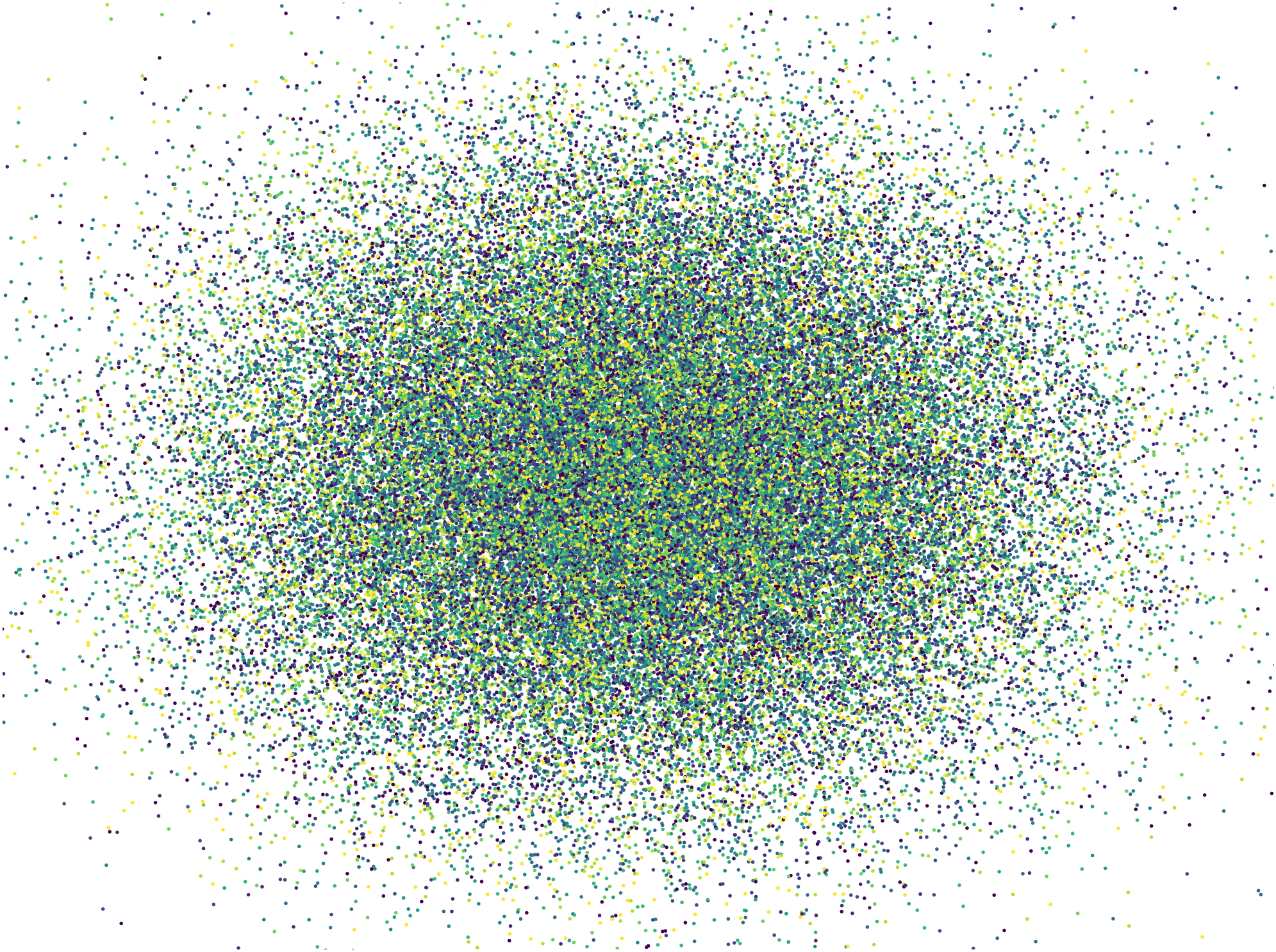}}
\subfloat[$k=100$]{\includegraphics[width=0.32\textwidth]{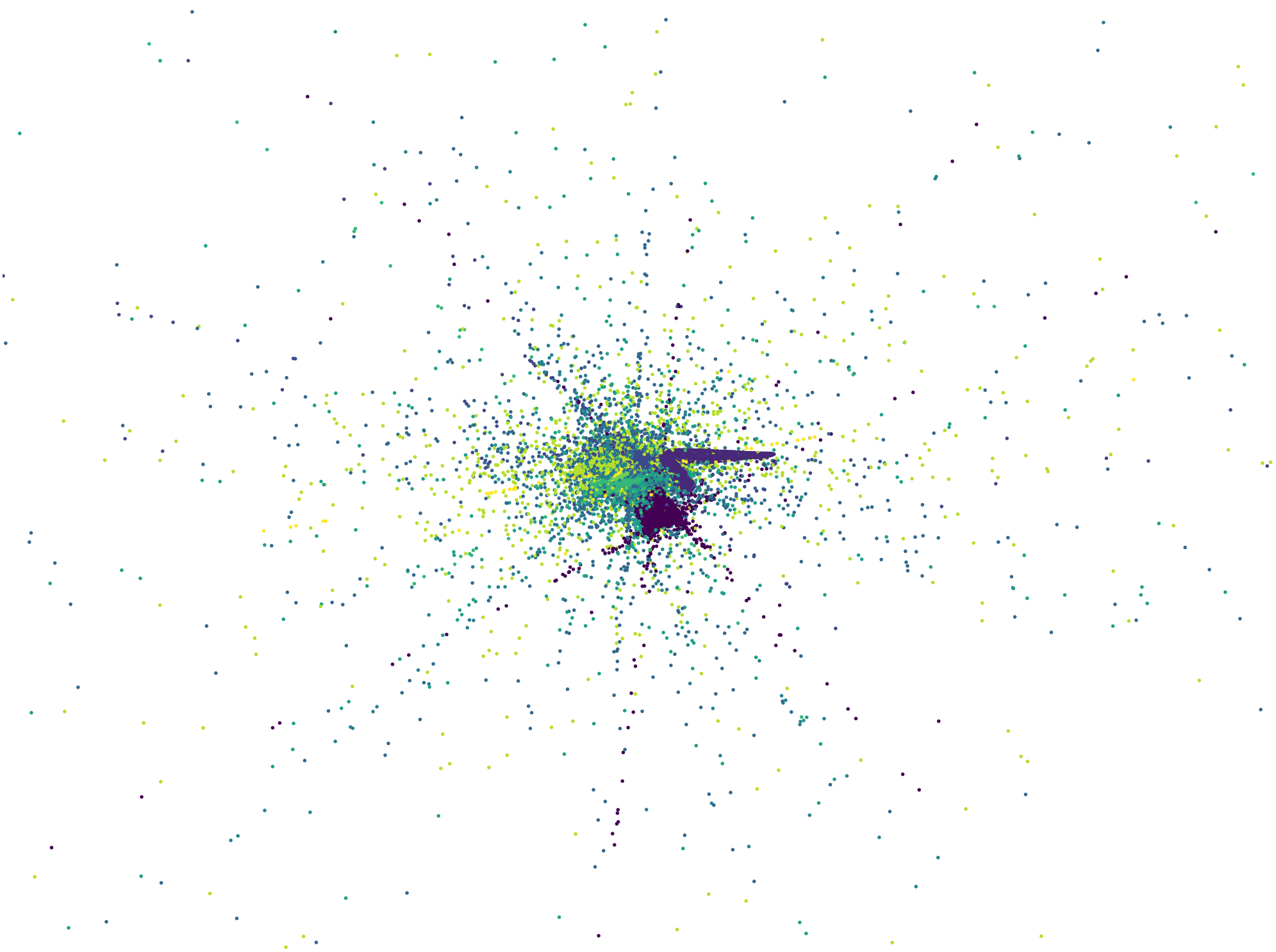}}
\subfloat[$k=125$]{\includegraphics[width=0.32\textwidth]{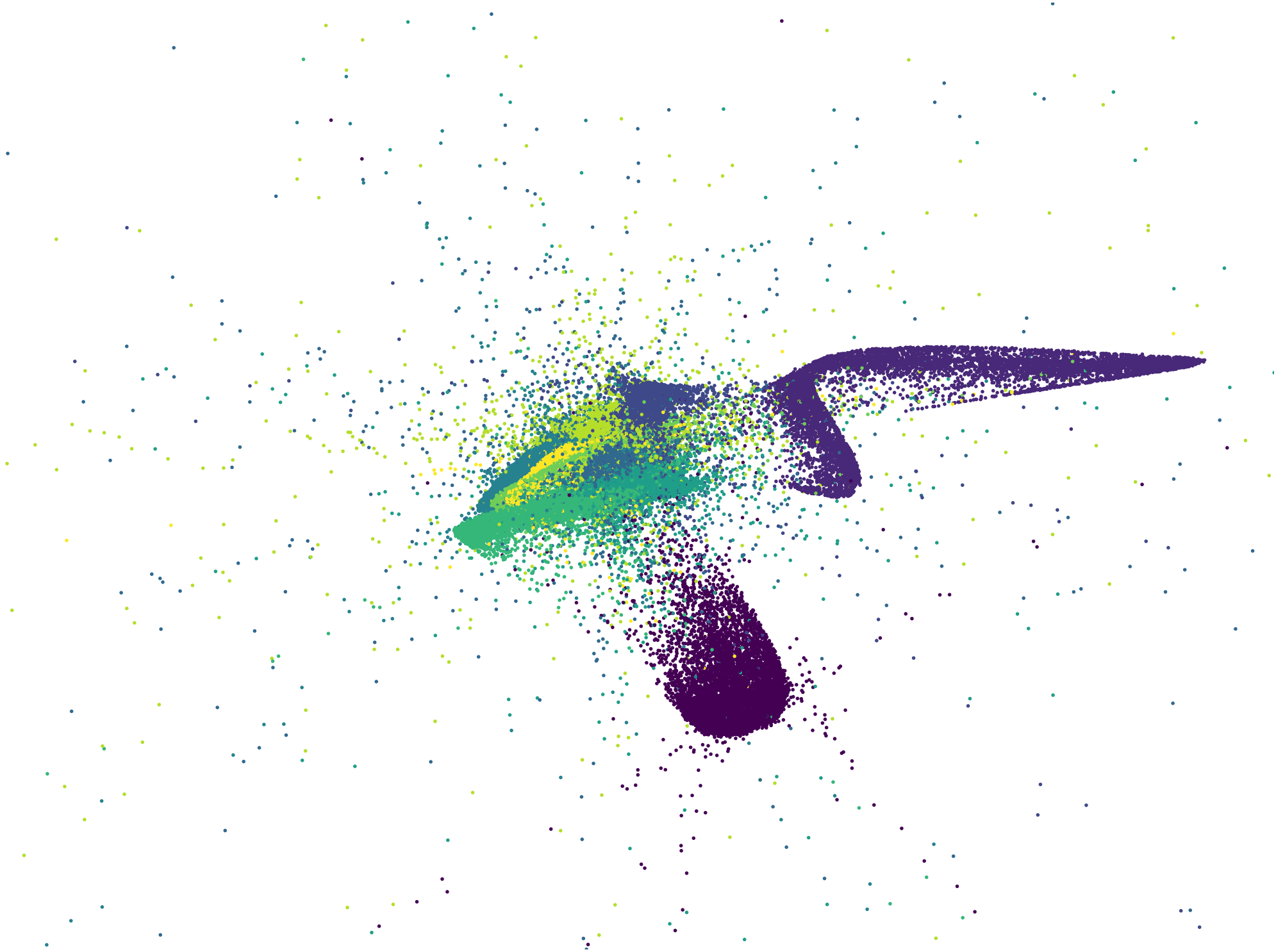}}\\
\subfloat[$k=250$]{\includegraphics[width=0.32\textwidth]{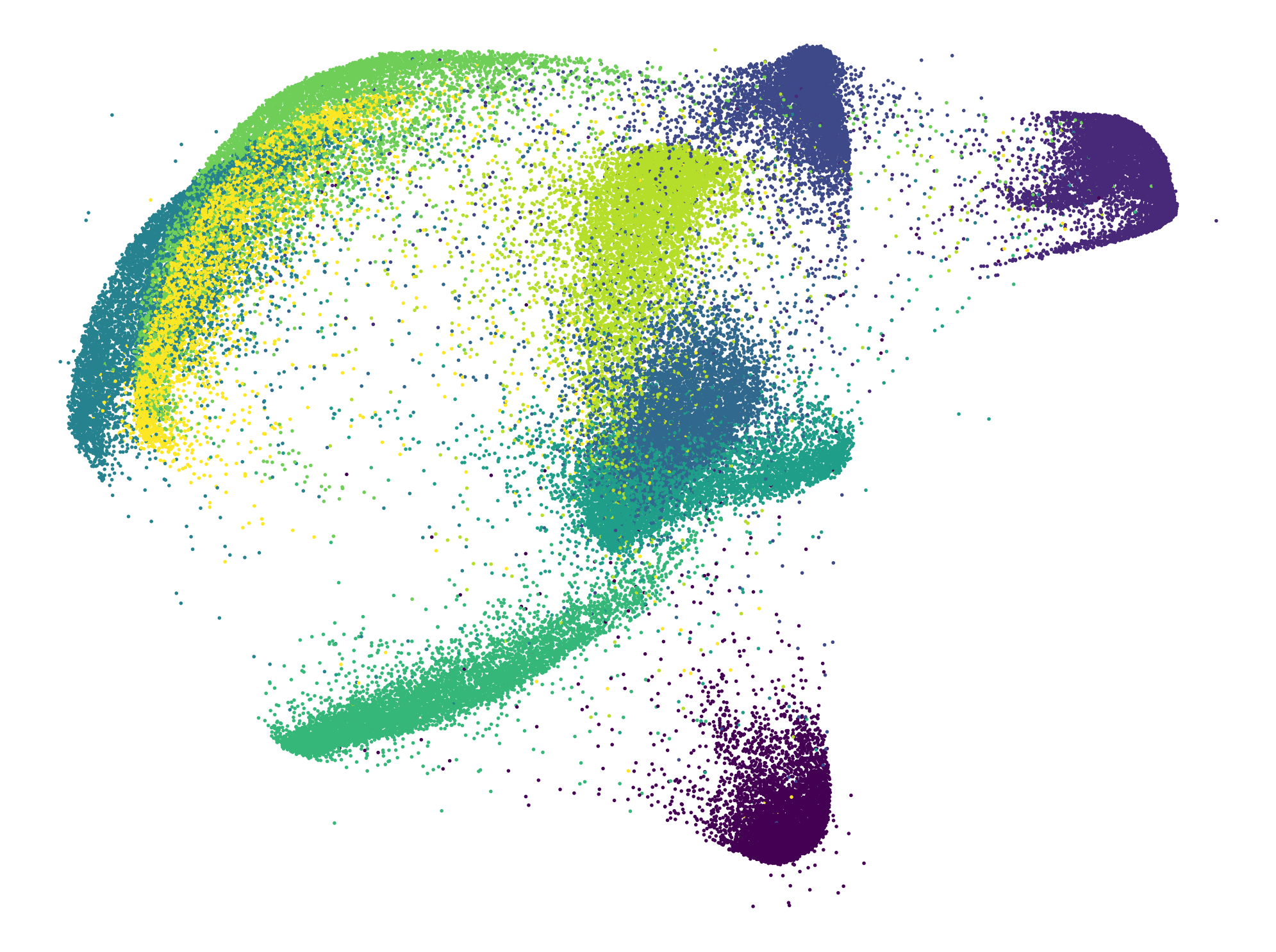}\label{fig:tsne_mnist_evod}}
\subfloat[$k=260$]{\includegraphics[width=0.32\textwidth]{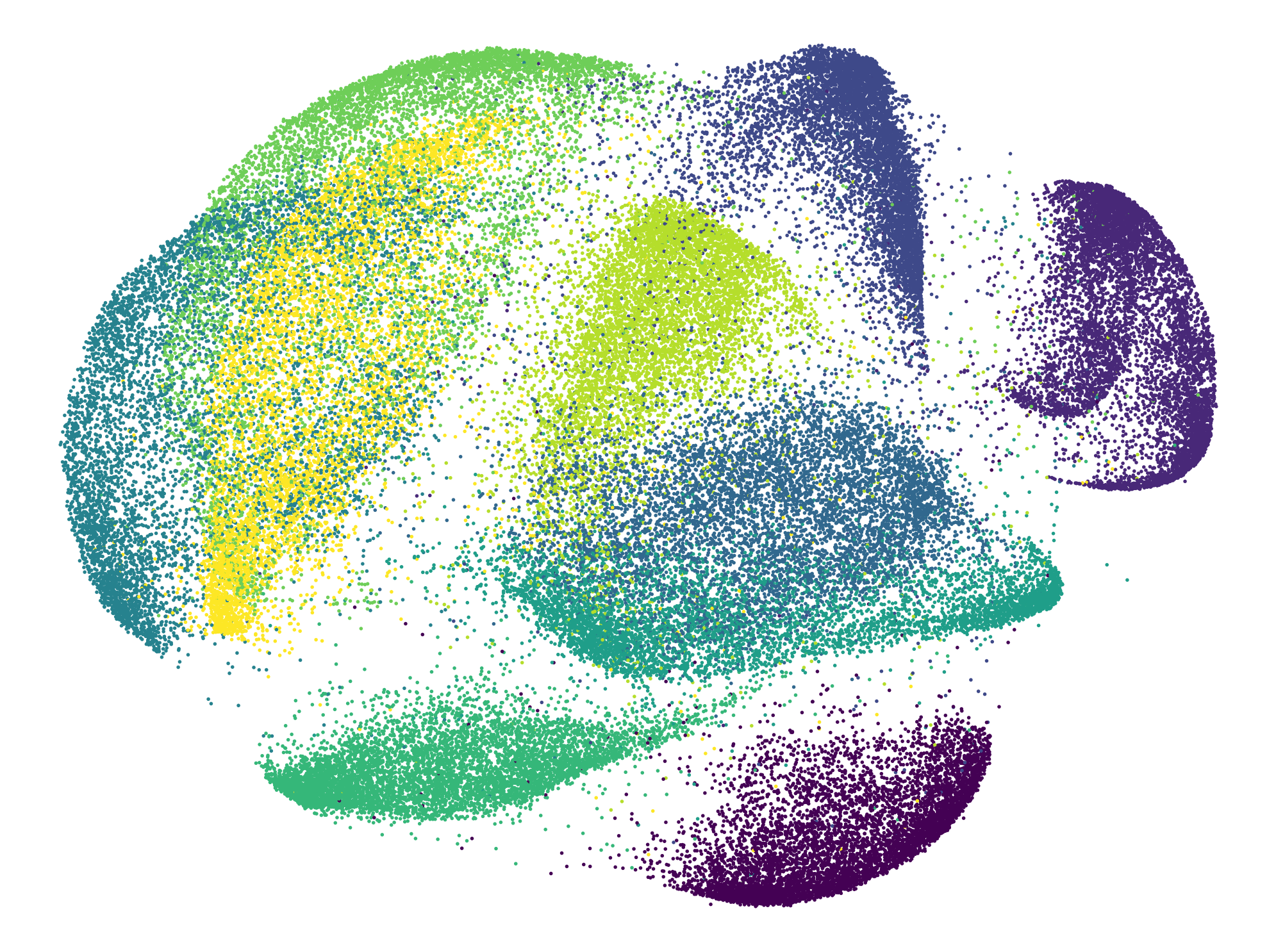}}
\subfloat[$k=500$]{\includegraphics[width=0.32\textwidth]{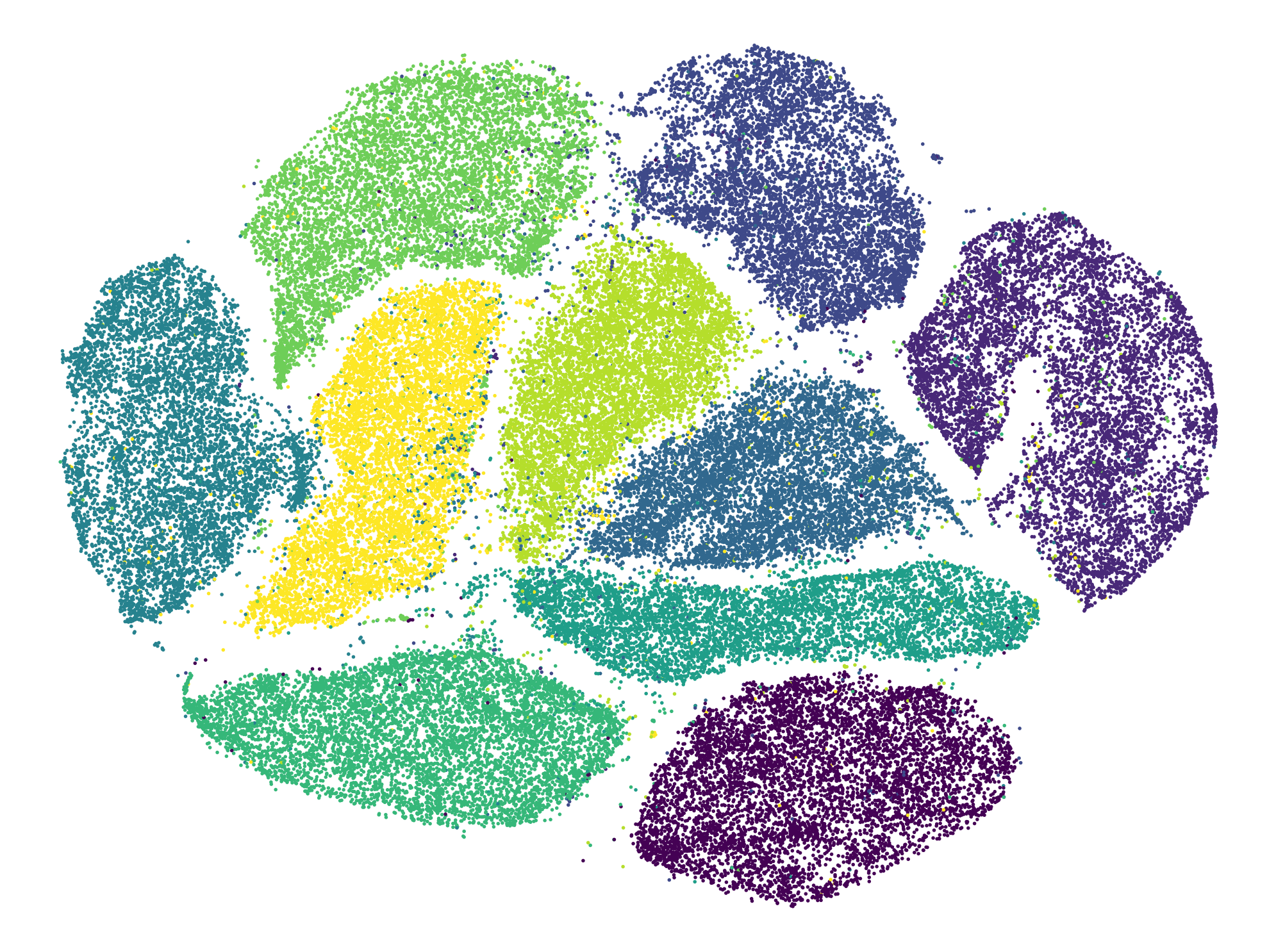}}
\caption{Evolution of the t-SNE embedding over the iterations of gradient descent, taken from \cite{calder2025book}. The result after $k=250$ iterations in (d) is the end of the early exaggeration phase. Results do not change significantly after $500$ iterations of gradient descent.}
\label{fig:tsne_mnist_evo}
\end{figure}
We show in Figure \ref{fig:tsne_mnist_evo} the evolution of the t-SNE during gradient descent on the KL divergence for the task of embedding the MNIST dataset of handwritten digits \cite{lecun2002gradient} into $\R^2$. The optimization procedure used is standard, and described in \cite{van2008visualizing}.

\subsubsection{Scaling t-SNE} \label{subsec:scaling-motivation}

We are interested in the limiting behavior of the t-SNE visualization technique as the number of data points $n\to \infty$ and the graph remains sparse and localized, so the bandwidths $\sigma_i \to 0$. We will assume throughout this paper that $\sigma_i = \sigma(x_i)h$ for a fixed function $\sigma:\R^d\to \R$ and parameter $h$ and we will take $h\to 0$: this limit is consistent with the practical setting where the perplexity is chosen small relative to $n$, see Proposition 2.2 in \cite{murray2024large}. We discuss here some preliminary observations, also mostly drawn from \cite{murray2024large}, about how to scale $T$ with $h$ as $h\to 0$ and $n\to \infty$.

We first note that, following \cite{murray2024large}, the KL divergence can be rewritten as 
\begin{equation}\label{eq:KLrewrite}
\KL(P\|Q) = \sum_{i,j=1}^np_{ij}\log p_{ij} + \frac{1}{n}\sum_{i,j=1}^n p_{j|i}\log(\psi(|y_i-y_j|)^{-1}) + \log\left( \sum_{i\neq j}\psi(|y_i - y_j|)\right).
\end{equation}
The first term is constant with respect to $y_1,\dots,y_n$, and can be ignored since we are concerned with minimizers. Since $\psi$ is decreasing, the second term represents attraction, which prefers $y_i\approx y_j$, while the second term represents repulsion, and prefers $y_i$ to be far from $y_j$. 

It is useful to do two things now; we view the embedding points $y_i=T(x_i)$ as being the result of applying an embedding map $T:\R^d\to \R^m$ to the data points, and we also specialize to the case of t-SNE where $\psi(t) = (1+t^2)^{-1}$. Neglecting the first term in \eqref{eq:KLrewrite}, which is constant, we arrive at the energy
\begin{equation}\label{eq:Tenergy}
E(T) = \frac{1}{n}\sum_{i,j=1}^n p_{j|i} \log\left(1 + |T(x_i) - T(x_j)|^2\right) + \log\left(\sum_{i\neq j}(1 + |T(x_i) - T(x_j)|^2)^{-1}\right),
\end{equation}
where we recall that $p_{j|i}$ is defined in \eqref{eq:pijgraph}. Right now, the specific form of $p_{j|i}$ is not important; the key is to note that $\sum_{j=1}^n p_{j|i} = 1$, and so with the $\frac{1}{n}$ factor, the first term in \eqref{eq:Tenergy} is a weighted average of the terms $\log\left(1 + |T(x_i) - T(x_j)|^2\right)$. 

\begin{figure}
    \centering
    \includegraphics[width=0.24\linewidth]{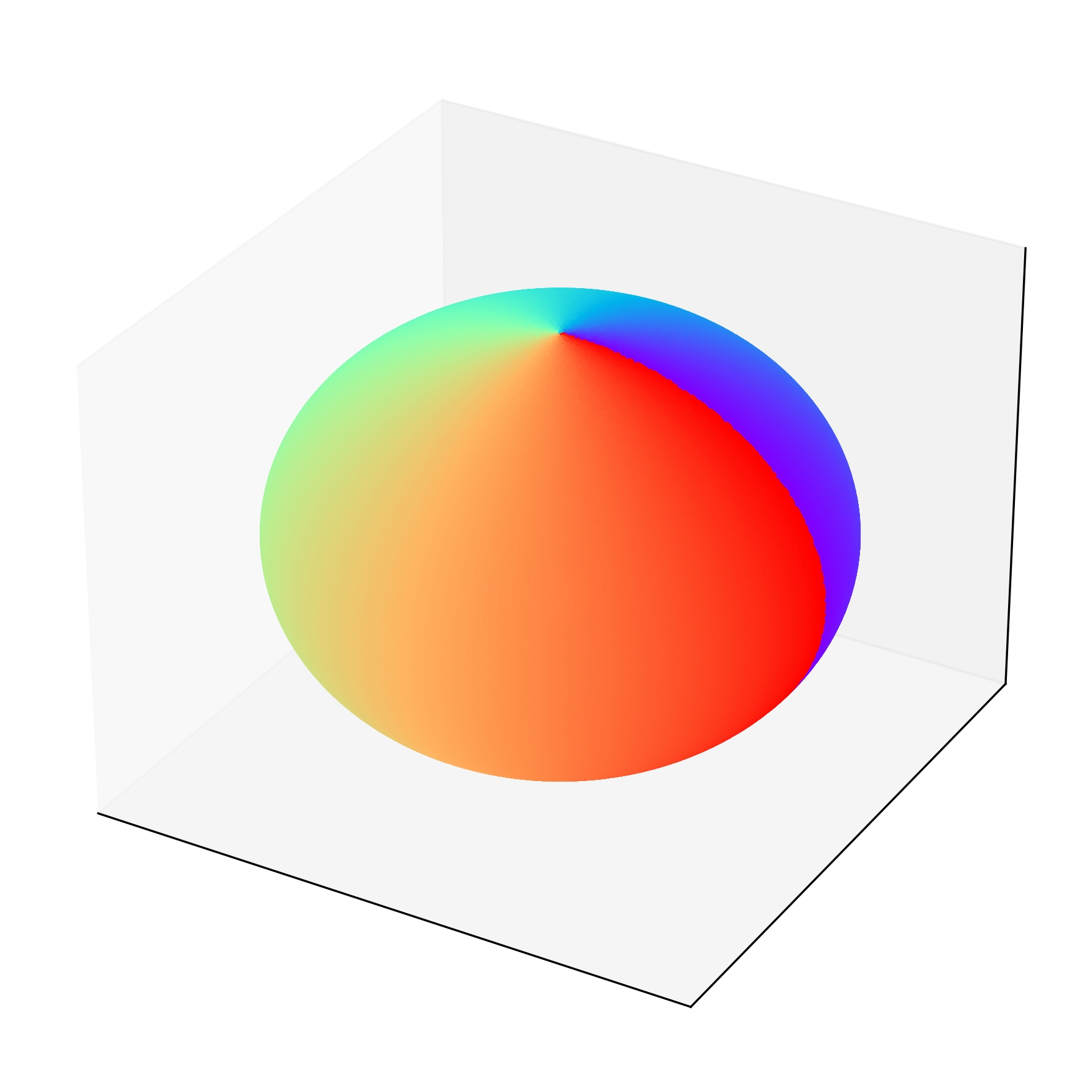}
\includegraphics[width=0.74\linewidth]{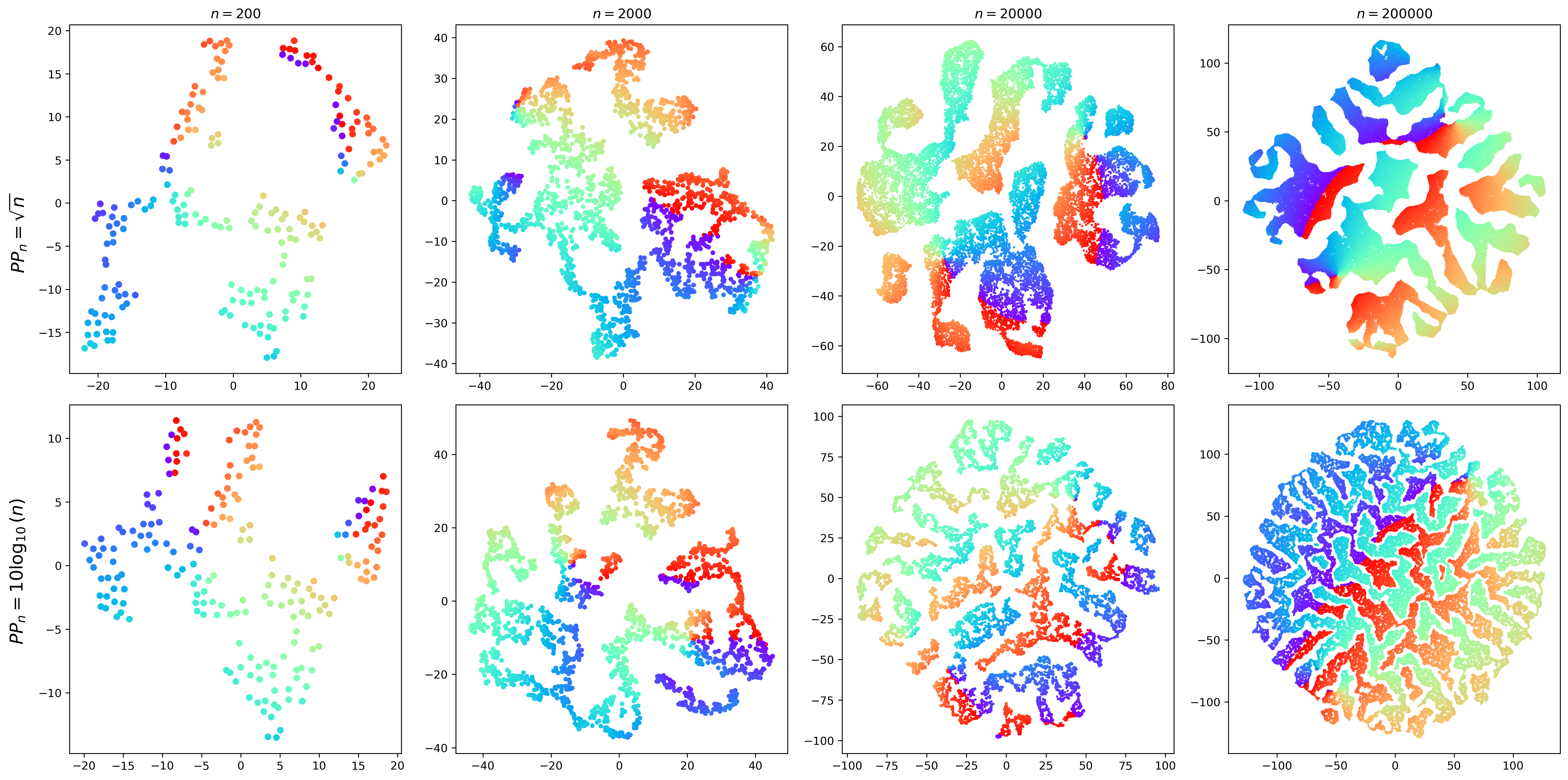}
    \caption{Embeddings obtained by using tSNE on points distributed on a sphere, taken from \cite{murray2024large}, for different choices of $n$. The scaling of the embedding plots grows in $n$. The way in which the sphere is cut up to create microstructure is partially explained later in our paper, in Corollary \ref{cor:sublinear}.}
    \label{fig:tsneplots}
\end{figure}
For the moment, let us assume that $\eta$ has compact support, in which case $p_{j|i}>0$ only when $|x_i - x_j| \leq Ch$ for some constant $C>0$. Using the fact that $\log(1+z)\sim z$ for $z$ small, and assuming that the embedding map $T$ is Lipschitz continuous, we find that the attraction term (the first term in \eqref{eq:Tenergy}) is $O(h^2)$, while the second term (the repulsion) is $O(\log(n^2))$. This striking disparity in scaling between attraction and repulsion terms as $n\to \infty$ and $h\to 0$ was first pointed out in \cite{murray2024large}, and suggests that the attraction forces become substantially weaker as $h\to 0$. This implies that the scale of the t-SNE embedding will increase to $\infty$ in any reasonable limit. We refer to \cite{murray2024large} for a more careful analysis of this phenomenon, which is depicted in Figure \ref{fig:tsneplots}.

Thus, if we want to capture the continuum limit of t-SNE, we cannot use a fixed map $T$ as $h\to 0$ and $n\to \infty$, and instead we must use a scaled version of $T$. A natural idea is to consider scaling $T$ by $h^{-1}$, which would increase the attraction term to $O(1)$. That is, we consider the rescaled energy 
\begin{equation}\label{eq:rescaled_energy}
E(h^{-1}T) = \frac{1}{n}\sum_{i,j=1}^n p_{j|i} \log\left(1 + h^{-2}|T(x_i) - T(x_j)|^2\right) + \log\left(\sum_{i\neq j}(1 + h^{-2}|T(x_i) - T(x_j)|^2)^{-1}\right).
\end{equation}
This appears to fix the issue with the attraction term, which is now $O(1)$ when $T$ is Lipschitz. However, the scaling of the repulsion term is more subtle, and we will see later that it varies in a delicate way with the dimension $m$ of the embedding. 

Our starting point for analyzing scaling limits of t-SNE will be the natural $h^{-1}T$ scaling. The purpose of this section will be to determine scaling limits of the t-SNE energies under scalings on the order of $h^{-1}$ or faster, with the goal being the identification of the correct scaling rate for t-SNE. We will find that the scaling $h^{-1}$ appears to be correct when $m=1$, although it is slightly too slow when $m\geq 2$, for reasons that will be explained at the end of this section.


\begin{remark}\label{rem:SNEKL}
The same arguments applied to the SNE energy with $\psi(t) = e^{-t^2}$ yield the rescaled energy 
\begin{equation}\label{eq:SNEdiscrete}
\frac{1}{nh^2}\sum_{i,j=1}^n p_{j|i} |T(x_i) - T(x_j)|^2 + \log\left(\sum_{i\neq j} \exp\left(-\tfrac{1}{h^2}|T(x_i) - T(x_j)|^2\right)\right).
\end{equation}
The first term is the Dirichlet energy on the graph defined by the edge weights $p_{j|i}$, which is connected to the graph Laplacian and appears in many works on graph-based learning. This is the first main difference we see between t-SNE and SNE; the attraction term in the latter reduces to a quadratic function (i.e.  the Dirichlet energy). 
\end{remark}

\subsection{Definitions of nonlocal and continuum energies}\label{subsection:continuum_energies}

We now proceed with a careful analysis of the scaling limits of the t-SNE energy, which requires some modeling assumptions on the datapoints $x_i$ and the embedding map $T$. We start out more informally, and gather all the assumptions precisely in Section \ref{sec:ass}. 

Let $\Omega \subset \R^d$ be an open bounded set, and let $x_1,\dots,x_n\in \Omega$ be $i.i.d.$ random variables with probability density $\rho_X:\Omega\to [0,\infty)$. Let $T:\Omega \to \R^m$ be bounded and Borel measurable, and define $y_i = T(x_i)$. Then $y_1,\dots,y_n\in \R^m$ are $i.i.d.$ random variables with distribution $\mu = T_\#\rho_X \dx$, where $\!\dx$ is the Lebesgue measure on $\R^d$, that is $\mu(A) = \int_{T^{-1}(A)} \rho_X \dx$. To define the edge weights on the graph, let $\eta,\sigma:[0,\infty)\to [0,\infty)$ with 
\begin{equation}\label{eq:massone}
\int_{\R^d}\eta(|z|)\dz = 1,
\end{equation}
and for a length scale $h>0$ we define the tuned length scales 
\begin{equation}\label{eq:tuned_ls}
h_i = \sigma(x_i)h.
\end{equation}
The edge weights $p_{ij}$ are then defined as in \eqref{eq:pijgraph}. We also define the normalized kernel $\eta_{h_i}(t) = \frac{1}{h_i^d}\eta(h_i^{-1}t)$, and the degree of $x_i$ by 
\begin{equation}\label{eq:degree}
d_i = \sum_{j=1}^n \eta_{h_i}(|x_i-x_j|).
\end{equation}
Following the motivation from Section \ref{subsec:scaling-motivation}, we now define the \emph{rescaled} discrete t-SNE energy as 
\begin{equation}\label{eq:discrete_tsne}
\Ec_n[T] = \Ac_n[T] + \Rc_n[T],
\end{equation}
where the attraction $\Ac_n$ and repulsion $\Rc_n$ terms are given by
\begin{equation}\label{eq:discrete_attraction}
\Ac_n[T] = \frac{1}{n}\sum_{i=1}^n \frac{1}{d_i}\sum_{j=1}^n \eta_{h_i}(|x_i-x_j|)\log(1 + h^{-2}|T(x_i)-T(x_j)|^2), 
\end{equation}
and
\begin{equation}\label{eq:discrete_repulsion}
\Rc_n[T] = \log\left( \frac{1}{n(n-1)} \sum_{i\neq j} (1 + h^{-2}|T(x_i)-T(x_j)|^2)^{-1}\right).
\end{equation}

We pause now to make a couple of remarks about our discrete formulation.
\begin{remark}\label{rem:tsne}
The mapping $T$ is scaled by $h^{-1}$ in our rescaled t-SNE energy, and we have included a normalization factor of $\frac{1}{n(n-1)}$ in the repulsion term, which can be factored out of the logarithm as a constant. Without the $h^{-1}$ rescaling, the repulsion term dominates as $h\to 0$ and the continuum limit degenerates \cite{murray2024large}. By our reformulation in \eqref{eq:rescaled_energy}, the standard t-SNE energy, i.e., the Kullback-Leibler divergence between $P$ and $Q$, differs from $\Ec_n[T]$ by a constant depending on $n$ and $h$, so it is equivalent to minimize either energy.
\end{remark}
\begin{remark}\label{rem:perplexity}
As mentioned above, in the t-SNE algorithm in practice, the bandwidth $h_i$ is defined implicitly by the $\{x_i\}$ using a hyperparameter called \emph{perplexity}, which is a smoothed version of a $k$-nearest neighbors graph. In \cite{murray2024large}, it is proven that, given moderate values of perplexity (namely ones which grow faster than $\log(n)$ but slower than $n$), the bandwidths used in the original algorithm converge almost surely to $\sigma(x) h_n$, where $h_n$ depends upon the chosen perplexity and $\sigma(x) = \rho_X(x)^{-1/d}$. Hence, our choice of length scales $h_i$ includes the asymptotic form of the t-SNE construction, and is convenient for analysis. It is also interesting to note that this is the same scaling that arises from a $k$-nearest neighbor graph construction. Formulas for $\sigma$ in large data limits have also been derived for the very large perplexity case, namely when perplexity grows of order $n$\cite{auffinger2023equilibrium}, but the energy is somewhat different and we do not seek to address it directly here. 
\end{remark}

We now introduce corresponding nonlocal and continuum limit energies. The nonlocal energies are obtained by taking the $n\to \infty$ limit of the discrete energies.  For $h>0$ we define the nonlocal t-SNE energy by
\begin{equation}\label{eq:}
\Ec^h[T] = \Ac^h[T] + \Rc^h[T]
\end{equation}
where the nonlocal attraction is given by 
\begin{equation}\label{eq:nonlocal_attraction}
\Ac^h[T] = \int_\Omega \int_\Omega \eta_{h\sigma(x)}(|x-x'|) \log(1 + h^{-2}|T(x) - T(x')|^2) \rho_X(x) \dx \dx',
\end{equation}
while the nonlocal repulsion term is 
\begin{equation}\label{eq:nonlocal_repulsion}
\Rc^h[T] = \log\left(\int_\Omega \int_\Omega (1 + h^{-2}|T(x)-T(x')|^2)^{-1} \rho_X(x)\rho_X(x') \d x \dx'\right).
\end{equation}

The continuum limit energies are obtained by taking the limit $h\to 0^+$ of the nonlocal energies. Depending on the scaling of $T$ with $h$, we can obtain different limits for the attraction term. In all cases, the continuum limit of the attraction has the form
\begin{equation}\label{eq:continuum_attraction}
\Ac[T;\Phi_s] = \int_\Omega \Phi_s(\sigma DT) \rho_X\dx,
\end{equation}
where $DT$ is the Jacobian matrix of $T$ and for $s\in (0,\infty]$, $\Phi_s:\R^{m\times d}\to \R$ is given by 
\begin{equation}\label{eq:Phi}
\Phi_s(A) = \int_{\R^d} \eta(|z|) \log(s^{-2} + |Az|^2) \dz,
\end{equation}
where we take $s^{-2}=0$ when $s=\infty$. We will see later on that, in a certain sense, any $s\in (0,\infty)$ will do for $m=1$, while we must take $s=\infty$ for $m\geq 2$. The parameter $s$ represents how much faster we scale $T$ than $h^{-1}$, with finite $s$ corresponding to $O(s/h)$ scaling, while $s=\infty$ corresponds to any scaling strictly faster than $O(1/h)$ (see Section \ref{sec:scaling_inv} for precise details).

\begin{remark}\label{rem:attraction}
The continuum attraction energy has logarithmic, and in particular, sublinear growth. Indeed, using Jensen's inequality we can write 
\begin{equation}
\begin{aligned}\label{eqn:Phi-s-logarithmic-bound}
\Phi_s(A) &\leq \log\left(\int_{\R^d}\eta(|z|)\left(s^{-2} +  |Az|^2 \right) \d z\right) = \log\left( s^{-2} + c_\eta |A|^2\right).
\end{aligned}
\end{equation}
where
\begin{equation}\label{eq:ceta}
c_\eta = \int_{\R^d}\eta(|z|)z_1^2\dz.
\end{equation}
In particular, our attraction energy $\Ac$ has a very similar form to the celebrated Perona-Malik energy \cite{perona1990scale}, which has been utilized for image smoothing and segmentation; we discuss this further in Section \ref{sec:perona}.

A particularly important case is $s=\infty$, in which case we can write the integral defining $\Phi_\infty$ in polar coordinates to obtain 
\begin{align*}
\Phi_\infty(A) &= \int_{\R^d} \eta(|z|) \log(|Az|^2) \dz = \dashint_{\partial B_1} \log(|Aw|^2) \d S(w) + C_1
\end{align*}
where $\dashint_{\partial B_1} := \frac{1}{d\omega_d}\int$ is the average over the sphere, and 
\[C_1= \int_0^\infty \eta(|z|)\log(|z|^2) \dz.\]
Thus, the dependence on $\eta$ can be factored out into the constant $C_1$, which is finite provided any moments of $\eta$ (beyond the mean) exist (see Section \ref{sec:ass}). Hence, the attraction energy can be viewed as a spherically averaged logarithm of the Jacobian.

We also mention that in this case ($s=\infty$) the attraction term can be further simplified to read
\begin{equation}\label{eq:simplified_attraction}
\Ac[T;\Phi_\infty] = \int_\Omega \Phi_\infty(\sigma DT) \rho_X\dx = \int_\Omega \dashint_{\partial B_1}\log(|DT(x)z|^2)\,dS(z) \, \rho_X \dx + C_2
\end{equation}
where 
\[C_2 = C_1 + \int_\Omega \log(\sigma^2)\rho_X \dx. \]
In particular, the function $\sigma$ that controls the local tuning of the graph bandwidth only appears in the constant $C_2$, which does not affect the minimization problem.
\end{remark}

To introduce the continuum repulsion energy, we need to introduce some additional ingredients. Throughout this paper we assume\footnote{This assumption could likely be relaxed by extending $\Rc$ to be $+\infty$ in the cases when $\mu$ does not have a density, but would require additional analysis in the proofs in Section \ref{sec:repulsion}, which we do not pursue here.} that $\mu = T_\# \rho_X \dx$ is absolutely continuous with respect to the Lebesgue measure on $\R^m$, so that $\mu$ has a density which we denote by $\rho_Y\in L^1(\R^m)$, and we assume that, in fact, $\rho_Y\in L^2(\R^m)$. Letting $\dy$ denote the Lebesgue measure on $\R^m$ we thus have $T_\# \rho_X\dx = \rho_Y \dy$, and can write
\begin{equation}\label{eq:change_of_variables}
\int_{\R^m} \phi \cdot \rho_Y \dy = \int_{\Omega} (\phi \circ T) \rho_X\dx,
\end{equation}
for any Borel measurable and bounded $\phi:\R^m\to \R$. Notice that, with this definition of $\rho_Y$, we can also write the nonlocal repulsion term as 
\begin{equation}\label{eq:alt_repulsion}
\Rc^h[T] = \log\left(\int_{\R^m}\int_{\R^m} (1 + h^{-2}|y-y'|^2)^{-1} \rho_Y(y)\rho_Y(y') \d y \dy'\right).
\end{equation}

Using the definition of $\rho_Y$, the continuum limit of the repulsion term, which varies with dimension $m$, is given by 
\begin{equation}\label{eq:continuum_repulsion12}
\Rc[T] = \log\left( \int_{\R^m}\rho_Y^2 \dy\right) \ \ \text{ when } m=1,2,
\end{equation}
and
\begin{equation}\label{eq:continuum_repulsion3}
\Rc[T] = \log\left( \int_{\R^m}\int_{\R^m}\frac{\rho_Y(y)\rho_Y(y')}{|y-y'|^2} \dy\dy'\right) \ \ \text{ when } m\geq 3.
\end{equation}
When $m=1,2$, the continuum limit repulsion term $\Rc[T]$ penalizes the squared $L^2$ norm of the embedding density $\rho_Y$, which encourages the visualization to spread points out. It is interesting to note that this is also the free energy of the porous medium equation $u_t=\Delta(u^2)$ \cite{otto_geometry_2001}.  We postpone a similar interpretation of the repulsion when $m\geq 3$ until later in this section.

Combining the continuum attraction and repulsion terms, we arrive at a family of continuum energies
\begin{equation}\label{eq:continuum_tSNE}
\Ec[T;\Phi_s] = \Ac[T;\Phi_s] + \Rc[T],
\end{equation}
parameterized by $s\in (0,\infty]$. We will see throughout this section that all of these energies can be obtained by different scaling limits of t-SNE, and particular choices of $s$ are relevant in different dimensions $m$.

\begin{remark}\label{rem:coarea}\textbf{Alternative representation formulas:}
While the expression for $\Rc[T]$ is relatively concise, it does not make the $T$ dependence immediately clear when $m=1,2$, since $T$ appears in a complicated way via the pushforward condition $T_\#\rho_X \dx = \rho_Y \dy$. There are alternative formulas for the repulsion that give a more explicit dependence on $T$. When $m=1$ we can use the coarea formula to write 
\begin{equation}\label{eq:rhoy_coarea}
\rho_Y(y) = \int_{T^{-1}(\{y\})} |\nabla T|^{-1}\rho_X \d \H_{d-1},
\end{equation}
where $\H_{d-1}$ is the $d-1$ dimensional Hausdorff measure. When $m = 1,2$, we can use Parseval's identity for Fourier transforms and the change of variables $y=T(x)$ to write the repulsion as an oscillatory integral:
\begin{equation}\label{eq:repulsion-parseval}
\Rc[T] = \log \left[ \int_{\R^m}\left|\int_{\Omega} e^{-2\pi i \xi \cdot T(x)} \rho_X(x) dx\right|^2d\xi \right].
\end{equation}
When $m \geq 3$, using the same change of variables we have
\begin{equation}\label{eq:repulsion-conv-form}
\Rc[T] = \log \left( \int_{\Omega} \int_{\Omega} \frac{\rho_X(x) \rho_X(x')}{|T(x)-T(x')|^2} \dx \dx' \right).
\end{equation}
Each of these formulas give alternative means of representing the repulsion, and will be used in the remainder of the paper.
\end{remark}

In some ways \eqref{eq:repulsion-conv-form}  does not make the domain of $\Rc$ clear in terms of specific conditions upon $T$. This issue is clarified by rewriting the repulsion $\Rc[T]$ using techniques from harmonic analysis. Namely, we can write
\[\Rc[T]=\log C_{m,m-2}\int _{\R^m} I_{m-2}[\rho_Y](y)\rho_Y(y)\dy\qquad  \text{when }m \geq 3\]  
where $I_\alpha$ is the Riesz potential, i.e.\[
I_\alpha [f](x)=\frac{1}{C_{m,\alpha}}\int_{\R^m}\frac{f(y)}{|x-y|^{m-\alpha}}\dy,\qquad C_{m,\alpha}=\pi^{m/2} 2^\alpha\frac{\Gamma(\alpha/2)}{\Gamma((m-\alpha)/2)}.
\]The Fourier transform of the Riesz potential is $\widehat{I_\alpha[f]}(\xi)=|\xi|^{-\alpha}\hat f(\xi),$ and thus by Plancherel's formula, one has\[
    \int _{\R^m} I_{\alpha}[f](x)f(x)\dx=\int_{\R^m}|\xi|^{-\alpha} |\hat f(\xi)|^2\d \xi=\|f\|_{\dot H^{-\frac{\alpha}{2}}(\R^m)}^2,
\]where the rightmost expression is the square of the homogeneous negative Sobolev seminorm of $f$. For convenience sake, throughout the paper we abuse notation slightly to instead write\[
\|f\|_{\dot H^{-\frac{\alpha}{2}}(\R^m)}^2:=\frac{1}{C_{m,\alpha}}\int_{\R^m}|\xi|^{-\alpha} |\hat f(\xi)|^2\d \xi
\]so that we can succinctly express\begin{equation}
        \label{eq:Neg_sobolev_1}
        \Rc[T]=\log \|\rho_Y\|_{\dot H^{-\frac{m-2}{2}}(\R^m)}^2.
\end{equation}
It is worth remarking on the qualitative differences between the $m\leq 2$ and $m\geq 3$ cases for the repulsion, in terms of what will be encouraged during minimization. For $m\leq 2$ we have $\|\rho_Y\|_{L^2(\R^m)}^2=\int_{\R^m} |\widehat{\rho_Y}(\xi)|^2\d \xi$, and so all all frequencies are penalized uniformly. In physical space, this is the usual preference against concentration: large peaks in  $\rho_Y$ are expensive, and spreading mass more evenly lowers the energy. In short, the $L^2$ penalty mainly discourages \emph{local} crowding/pointwise concentration.

Contrast this to the case when $m\geq 3$ where \[
\|\rho_Y\|_{\dot H^{-\frac{m-2}{2}}(\R^m)}^2 \propto \int_{\R^m}|\xi|^{2-m}|\widehat{\rho_Y}(\xi)|^2\d \xi.\]
Here, since the frequency variable appears with a negative exponent, there is a bias against low frequency structure. As low frequencies encode broad, slowly varying features of the density (\emph{global} clumps, long-wave inhomogeneities, large regions of excess mass, macroscopic aggregation) minimizing the negative Sobolev norm pushes harder against those large-scale modes than against fine-scale oscillations. A more fine grained empirical/theoretical analysis comparing the qualitative differences of t-SNE embeddings in $\R^2$ versus $\R^3$ is an interesting direction that we do not pursue here. 

We also note that the repulsion for UMAP, a commonly utilized variant of t-SNE, has an effective repulsion energy \cite{damrich2021umap} that shares some similarities with this continuum limit. Rigorously understanding the large data limit of such algorithms is an important direction which we leave to future work.


\subsubsection{Notation and Assumptions}\label{sec:ass}

Since many different mathematical objects were introduced, we pause here to collect all the notation and assumptions that will be used throughout the paper. 
\begin{assumption}\label{ass:main}
We make the following assumptions. 
\begin{enumerate}[(i)]
\item {\bf Data:} The set $\Omega \subset \R^d$ is open and bounded with Lipschitz boundary. The $i.i.d.$ random variables $x_1,\dots,x_n$ are sampled from a Lipschitz continuous density $\rho_X\in C^{0,1}(\Omega)$ satisfying 
\begin{equation}\label{eq:rhobounds}
\rho_{min} \leq \rho_X \leq \rho_{max}
\end{equation}
for some $0 < \rho_{min} \leq \rho_{max} < \infty$. 
\item {\bf Graph Construction:} We assume $\eta:[0,\infty) \to [0,\infty)$ is monotonically decreasing with $\eta(0)>0$, unit mass (i.e., \eqref{eq:massone} holds),   and that the moments 
\begin{equation}\label{eq:momentone}
m_k(\eta) = \int_{\R^d}\eta(|z|)|z|^k\dz
\end{equation}
are finite for $1 \leq k \leq 4$. We assume $\sigma:[0,\infty)\to (0,\infty)$ is a continuous function satisfying 
\begin{equation}\label{eq:sigmabounds}
\sigma_{min} \leq \sigma \leq \sigma_{max}
\end{equation}
for some $0 < \sigma_{min} \leq \sigma_{max}$. The graph bandwidths are defined by $h_i = \sigma(x_i)h$ for a standard length scale $h>0$.
\item {\bf Embedding Map and Distribution:} We assume that $T:\Omega \to \R^m$ is bounded and Borel measurable, and that the pushforward measure $\mu=T_\# \rho_X \dx$ is absolutely continuous with respect to the Lebesgue measure $\!\dy$ on $\R^m$, with density denoted $\rho_Y$. We also assume $\rho_Y\in L^2(\R^m)$.\footnote{A perhaps simpler way to state this would be to define $\rho_Y$ to be the Radon-Nikodym derivative $d\mu/\dy$ and assume $\rho_Y$ is square integrable. }
\end{enumerate}
\end{assumption}
Assumption \ref{ass:main} is a set of standing assumptions we make throughout the rest of the paper. We will specify when additional assumptions on any quantities are required. 

{\bf Notation:} We collect here some common notation used in the paper. We let $B(x,r)$ denote the ball of radius $r$ centered at $x$, and will sometimes write $B_r=B(0,r)$. We denote by $\omega_m$ the measure of the unit ball in $\R^m$. We let $C\geq 1$ and $0 < c \leq 1$ denote constants that depend on any of the quantities in Assumption \ref{ass:main}, which may change from line to line in proofs.
We use big O notation $g=\O(f)$ to denote that $|g| \leq C|f|$. We also always assume $n\geq 2$ and $nh^d \geq 1$. We write $f \gg g$ to mean that $f\geq Cg$ and $f\ll g$ to mean $f \leq cg$. We denote the real part of a complex number $z$ by $\Re(z)$, and the Fourier transform of a function $f$ by $\hat f$.

\subsection{Consistency of the attraction term} 
\label{sec:attraction}

We first prove consistency of the attraction term. This requires a careful analysis of the tail behavior of the kernel $\eta$, which may not be compactly supported. For this, we define 
\begin{equation}\label{eq:gammar}
\gamma(r) = 1 - \int_{B_r} \eta(|z|) \dz = \int_{|z|\geq r}\eta(|z|)\dz.
\end{equation}
By the dominated convergence theorem $\lim_{r\to \infty}\gamma(r) = 0$.
\begin{remark}\label{rem:Gaussian}
For t-SNE we have $\eta(z) = (2\pi)^{-\frac{d}{2}}e^{-\frac{1}{2}|z|^2}$, in which case $\gamma$ decays exponentially, since
\[\gamma(r) = (2\pi)^{-\frac{d}{2}} \int_{|z|\geq r} e^{-\frac{1}{2}|z|^2} \dz \leq 2^\frac{d}{2}e^{-\frac{1}{4}r^2}. \]
\end{remark}

The first step in our consistency result is to relate the discrete and nonlocal energies via concentration of measure. Similar proofs appear in many places (e.g., \cite[Lemma 3.6]{bungert2024convergence}) so we give a brief proof for completeness. Several proofs rely on concentration of measure results, which we summarize for convenience in Appendix \ref{app:ustat}.
\begin{lemma}\label{lem:A_dton}
Assume $T$ is bounded and Borel measurable. Then for $h\ll 1$, $nh^d \gg 1$, $r \geq h$, and $0 \leq \lambda^2 \leq cnh^d$ 
\begin{equation}\label{eq:A_dton}
|\Ac_n[T] - \Ac^h[T]| \leq C\log(1+2h^{-2}\|T\|_\infty)\left(\frac{\lambda}{\sqrt{nh^d}} +\gamma\left( \tfrac{r}{h\sigma_{max}}\right)  + r\right)
\end{equation}
holds with probability at least $1-3n\exp(-\lambda^2)$. 
\end{lemma}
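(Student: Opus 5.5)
Write $L = \log(1+2h^{-2}\|T\|_\infty)$. The plan is to compare $\Ac_n[T]$ and $\Ac^h[T]$ through an intermediate nonlocal-in-one-variable quantity. Set $g(x,x') = \log(1+h^{-2}|T(x)-T(x')|^2)$, which is bounded by $C L$ up to harmless constants (using $|T(x)-T(x')|\le 2\|T\|_\infty$ and absorbing the square into the constant $C$ via $\log(1+a^2)\le 2\log(1+a)$). Then
\[
\Ac_n[T] = \frac1n\sum_{i} \frac{\frac1n\sum_j \eta_{h_i}(|x_i-x_j|)g(x_i,x_j)}{\frac1n d_i},
\]
after multiplying numerator and denominator by $1/n$. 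Two concentration facts are needed for each fixed $i$, obtained by conditioning on $x_i$ and applying Bernstein's inequality (Appendix \ref{app:ustat}) to the i.i.d. sum over $j\neq i$.

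\textbf{Step 1 (degree concentration).} Conditioned on $x_i$, the terms $\eta_{h_i}(|x_i-x_j|)$ are bounded by $\eta(0)(\sigma_{min}h)^{-d}$ and have variance $O(h^{-d})$. Bernstein's inequality with $\lambda^2\le cnh^d$ gives
\[
\tfrac1n d_i = \tau_h(x_i) + O(\lambda/\sqrt{nh^d}), \qquad \tau_h(x)=\int_\Omega \eta_{h\sigma(x)}(|x-x'|)\rho_X(x')\dx',
\]
with probability at least $1-2\exp(-\lambda^2)$. The diagonal term $j=i$ contributes $O(1/(nh^d))$, which is dominated by the same error. The same argument, with the extra bounded factor $g$, gives the analogous statement for the numerator with error $O(L\lambda/\sqrt{nh^d})$. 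A union bound over $i$ produces the $3n\exp(-\lambda^2)$ probability, using one more event for the outer average below.

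\textbf{Step 2 (lower bound on the degree).} This is the main obstacle, since it is where $r$ and $\gamma$ enter. To divide safely we need $\tau_h(x)\ge c$ uniformly. This fails near $\partial\Omega$ unless we localize, and we also need to relate $\tau_h$ to $1$ to match the normalization in \eqref{eq:nonlocal_attraction}, which has no degree denominator. Split $\Omega$ into $\Omega_r=\{x:\dist(x,\partial\Omega)>r\}$ and the boundary strip, which has measure $O(r)$ by the Lipschitz boundary. For $x\in\Omega_r$, change variables $x'=x+h\sigma(x)z$. The mass outside $B(x,r)$ is at most $\rho_{max}\gamma(r/(h\sigma_{max}))$, and the Lipschitz continuity of $\rho_X$ together with the first moment $m_1(\eta)$ gives
\[
\tau_h(x)=\rho_X(x)+O\bigl(h+\gamma(r/(h\sigma_{max}))\bigr).
\]
On the strip we only use $\tau_h\ge c$ from $\eta$ decreasing with $\eta(0)>0$ and the Lipschitz boundary (an interior cone condition), and we bound the contribution of these points crudely by $CL$ times the fraction of sample points in the strip, which is $O(r)$ plus a fluctuation. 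Dividing the numerator by $\rho_X(x_i)$ times $\frac1n d_i$ then cancels the $\rho_X(x')$ weight consistently with the form of $\Ac^h$, up to errors $O(L(h+\gamma))$. Since $r\ge h$, the term $h$ is absorbed into $r$.

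\textbf{Step 3 (outer average).} The remaining quantity $\frac1n\sum_i F(x_i)$, with $F$ bounded by $CL$, is compared with $\int F\rho_X\dx$ by Hoeffding's inequality. This costs $O(L\lambda/\sqrt n)\le O(L\lambda/\sqrt{nh^d})$. Collecting all the errors gives \eqref{eq:A_dton}.
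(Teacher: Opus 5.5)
Your argument is correct, but the concentration step uses a different tool than the paper. Both proofs begin with conditional Bernstein for the degrees plus a union bound over $i$. Both end with the same deterministic estimate $\rho_{h\sigma(x)}(x)=\rho_X(x)+\O\bigl(\gamma(\tfrac{r}{h\sigma_{max}})+h\bigr)$ on $\Omega_r$; your $\tau_h$ is the paper's $\rho_{h\sigma(x)}(x)$.

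In between, the paper replaces $d_i/(n-1)$ by $\rho_{h_i}(x_i)$. It then treats the resulting double sum as a single second-order U-statistic, controlled by the Bernstein inequality for U-statistics. You instead concentrate each row of the numerator around $N_h(x_i)$, where $N_h(x)=\int_\Omega\eta_{h\sigma(x)}(|x-x'|)g(x,x')\rho_X(x')\dx'$. This reduces everything to the empirical mean of the deterministic function $F=N_h/\tau_h$, which is bounded by $2L$, and plain Hoeffding finishes.

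Your route needs no U-statistic machinery and spells out the bias step that the paper leaves mostly implicit. The paper's route is cheaper in probability: it adds only one event to the $n$ degree events. This is why your failure probability is really about $(4n+2)e^{-\lambda^2}$, not $3ne^{-\lambda^2}$ as you claim. That is harmless, because the statement is vacuous unless $\lambda^2>\log(3n)$. So you can run your argument with $\lambda^2+\log 3$ in place of $\lambda^2$ and adjust $c$ and $C$.

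Two further remarks on the bias step. The separate count of sample points in the boundary strip is unnecessary. And your sentence about ``dividing the numerator'' should be made precise as follows. Set $G_h(x)=\int_\Omega\eta_{h\sigma(x)}(|x-x'|)g(x,x')\dx'$, so that $\Ac^h[T]=\int_\Omega G_h\rho_X\dx$.
\begin{itemize}
\item On $\Omega_r$, Lipschitz continuity of $\rho_X$ and $m_1(\eta)<\infty$ give $N_h=\rho_X G_h+\O(Lh)$. Combined with your estimate for $\tau_h$, this yields $F=G_h+\O\bigl(L(h+\gamma)\bigr)$.
\item On the strip, both $F$ and $G_h$ are at most $CL$, and the strip has measure $\O(r)$.
\end{itemize}
Hence $\bigl|\int_\Omega F\rho_X\dx-\Ac^h[T]\bigr|\le CL(r+\gamma)$ deterministically, and no extra random event is needed.
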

\begin{proof}
By the Bernstein inequality we have 
\[\frac{1}{n-1}|d_i - \E_{x_i}d_i| \leq \frac{C\lambda}{\sqrt{(n-1)h^d}}\]
with probability at least $1 - 2\exp(-\lambda^2)$ for $0 \leq \lambda^2 \leq c(n-1)h^d$, where $\E_{x_i}$ is the expectation conditioned on $x_i$. We define
\begin{equation}\label{eq:rhoh}
\rho_{h}(x) =\int_\Omega \eta_{h}(|x-x'|)\rho_X(x')\dx',
\end{equation}
and note that $\E_{x_i}d_i = (n-1)\rho_{h_i}(x_i)$. Therefore, using $n-1\geq n/2$ we have
\begin{equation}\label{eq:degree_asymptotic}
\frac{1}{n-1}d_i = \rho_{h_i}(x_i) + \O\left(\frac{\lambda}{\sqrt{nh^d}}\right).
\end{equation}
By union bounding over $i=1,\dots,n$, \eqref{eq:degree_asymptotic} holds for all $i$ with probability at least $1-2n\exp(-\lambda^2)$ for $0 \leq \lambda^2 \leq cnh^d$. Then for  $nh^d \gg 1$, and using the lower bound on $\rho$, it holds that
\begin{equation}\label{eq:degree_asymptotic2}
\frac{n-1}{d_i} = \frac{1}{\rho_{h_i}(x_i)} + \O\left( \frac{\lambda}{\sqrt{nh^d}}\right).
\end{equation}
Since $T$ is bounded and $d_i\leq Cn$ we have
\begin{align*}
\frac{1}{n}\sum_{j=1}^n \eta_{h_i}(|x_i-x_j|)\log(1 + h^{-2}|T(x_i)-T(x_j)|^2) \leq CK_h
\end{align*}
where $K_h= \log(1+2h^{-2}\|T\|_\infty)$.  Therefore
\begin{equation}\label{eq:Ac1}
\Ac_n[T] =  U_n + \O\left( \frac{\lambda K_h}{\sqrt{nh^d}}\right),
\end{equation}
where
\[U_n = \frac{1}{n(n-1)}\sum_{i,j=1}^n\frac{\eta_{h_i}(|x_i-x_j|)}{\rho_{h_i}(x_i)}\log(1 + h^{-2}|T(x_i)-T(x_j)|^2).\]
Since $\partial \Omega$ is Lipschitz, for $h\ll 1$ we have $\rho_{h_i}(x_i) \geq c\rho_{min}$ for all $i$. By the Bernstein inequality for U-statistics \eqref{eq:BernsteinU2} we have
\begin{equation}\label{eq:bernsteinU}
U_n = \int_\Omega\int_\Omega\frac{\eta_{h\sigma(x)}(|x-x'|)}{\rho_{h\sigma(x)}(x)}\log(1 + h^{-2}|T(x)-T(x')|^2)\rho_X(x)\rho_X(x') \dx\dx'+ \O\left( \frac{\lambda K_h}{\sqrt{nh^d}}\right)
\end{equation}
with probability at least $1-2\exp(-\lambda^2)$ whenever $0 \leq \lambda^2 \leq cnh^d$. 

To complete the proof we note that since $\rho_X$ is Lipschitz continuous and $m_1(\eta)<\infty$ we have
\[\rho_{h\sigma(x)}(x) = \rho_X(x)\int_\Omega \eta_{h\sigma(x)}(|x-x'|) \dx' + \O(h), \]
and hence $\rho_{h\sigma(x)}(x) \leq \rho_X(x) + \O(h)$.  If $\dist(x,\partial \Omega) \geq r>0$ then we have 
\[\rho_{h\sigma(x)}(x) \geq \rho_X(x)\left(1-\gamma\left( \tfrac{r}{h\sigma(x)}\right)\right) + \O(h), \]
and so 
\[\rho_{h\sigma(x)}(x) = \rho_X(x) + \O\left(\gamma\left( \tfrac{r}{h\sigma_{max}}\right) + h \right) \ \ \text{whenever} \ \ \dist(x,\partial \Omega) \geq r.\qedhere\]
\end{proof}
\begin{remark}\label{rem:Gaussian_pc}
Following Remark \ref{rem:Gaussian}, when $\eta(z) = (2\pi)^{-\frac{d}{2}}e^{-\frac{1}{2}|z|^2}$ we can choose $r^2 = 4\sigma_{max}^2h^2\log(h^{-1})$ to obtain 
\[|\Ac_n[T] - \Ac^h[T]| \leq C\log(1+2h^{-2}\|T\|_\infty)\left(\frac{\lambda}{\sqrt{nh^d}} +h\log h^{-1}\right).\]
\end{remark}
\begin{remark}\label{rem:A_dton_imp}
If $T$ is Lipschitz continuous, then the conclusion of Lemma \ref{lem:A_dton} can be improved to read 
\begin{equation}\label{eq:A_dton_imp}
|\Ac_n[T] - \Ac^h[T]| \leq C\log(1+\Lip(T)^2)\left(\frac{\lambda}{\sqrt{nh^d}} +\gamma\left( \tfrac{r}{h\sigma_{max}}\right)  + r\right).
\end{equation}
\end{remark}

We now connect the nonlocal to local via Taylor expansion.
\begin{lemma}\label{lem:A_ntol}
Assume $T$ and $DT$ are Lipschitz continuous. Then for $h\leq 1$ and $r\geq h$ we have
\begin{equation}\label{eq:A_ntol}
|\Ac^h[T] - \Ac[T;\Phi_1]| \leq C(\Lip(T)^2 + \Lip(DT)^2) \left( r + \gamma\left( \tfrac{r}{h\sigma_{max}}\right)\right).
\end{equation}
\end{lemma}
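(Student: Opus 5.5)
We compare $\Ac^h[T]$ with $\Ac[T;\Phi_1]$ pointwise in $x$. Substituting $x' = x + h\sigma(x) z$ in the inner integral of \eqref{eq:nonlocal_attraction} gives
\[\Ac^h[T] = \int_\Omega \int_{\Omega_{x,h}} \eta(|z|)\log\bigl(1 + h^{-2}|T(x+h\sigma(x)z)-T(x)|^2\bigr)\dz\,\rho_X(x)\dx,\]
where $\Omega_{x,h} = \{z : x+h\sigma(x)z\in\Omega\}$. For comparison, $\Phi_1(\sigma DT(x)) = \int_{\R^d}\eta(|z|)\log(1+|\sigma(x)DT(x)z|^2)\dz$.

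Split $\Omega$ into the interior region $\Omega_r = \{x : \dist(x,\partial\Omega)\geq r\}$ and the boundary layer $\Omega\setminus\Omega_r$.

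\textbf{Interior region.} For $x\in\Omega_r$, $\Omega_{x,h}$ contains the ball $|z|\le r/(h\sigma_{max})$. Split the $z$-integral into $|z|\le R:=r/(h\sigma_{max})$ and its complement.

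On $|z|\le R$, Taylor expansion with $DT$ Lipschitz gives
\[h^{-1}\bigl(T(x+h\sigma z)-T(x)\bigr) = \sigma DT(x)z + E, \qquad |E|\le C\Lip(DT)\,h|z|^2.\]
The function $v\mapsto \log(1+|v|^2)$ is $1$-Lipschitz, since its gradient $2v/(1+|v|^2)$ has norm at most $1$. Hence the integrand differs from $\log(1+|\sigma DT(x)z|^2)$ by at most $C\Lip(DT)h|z|^2$. Integrating against $\eta$ and using $m_2(\eta)<\infty$ gives an error $C\Lip(DT)h\le C\Lip(DT)r$.

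On $|z|>R$, bound both integrands by
\[\log\bigl(1+C\Lip(T)^2|z|^2\bigr)\le C\Lip(T)^2|z|^2.\]
Their contribution is then controlled by $\Lip(T)^2\int_{|z|>R}\eta|z|^2\dz$. This is not literally $\gamma(R)$, so I would use the cruder bound $\log(1+a|z|^2)\le \log(1+a)+\log(1+|z|^2)$. The $\log(1+a)$ part is bounded by $\Lip(T)^2$ times $\gamma(R)$, and the tail $\int_{|z|>R}\eta\log(1+|z|^2)$ is absorbed using monotonicity of $\eta$ and finite moments. This tail mismatch is the fiddly point. If it cannot be made to produce exactly $\gamma$, one can use Cauchy–Schwarz, $\int_{|z|>R}\eta|z|^2\le \gamma(R)^{1/2}m_4^{1/2}$, and accept the resulting constants.

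\textbf{Boundary layer.} Both integrands are bounded by $C(\Lip(T)^2+1)$ times an $\eta$-integrable weight, with finite moments. The set $\Omega\setminus\Omega_r$ has measure at most $Cr$ because $\partial\Omega$ is Lipschitz, and $\rho_X\le\rho_{max}$. So this region contributes $C\Lip(T)^2 r$.

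\textbf{Conclusion.} Summing the three contributions and using $ab\le a^2+b^2$-type bounds to write everything in terms of $\Lip(T)^2+\Lip(DT)^2$ yields \eqref{eq:A_ntol}. The main obstacle is the tail estimate: turning $\int_{|z|>R}\eta\,\log(1+\Lip(T)^2|z|^2)$ into a bound proportional to $\gamma(R)$. Monotonicity of $\eta$ and the fourth-moment assumption are exactly what make this possible.
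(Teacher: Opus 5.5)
There is a genuine gap, and it is the interior tail over $|z|>R$ that you flagged yourself. Neither of your proposed patches closes it. The constant in the lemma must be independent of $T$. Letting $\Lip(T)\to 0$, your tail bound would therefore require $\int_{|z|>R}\eta(|z|)|z|^2\dz\le C\gamma(R)$, and this fails under Assumption~\ref{ass:main}. For $\eta(t)\propto(1+t)^{-d-5}$ (decreasing, with $m_1,\dots,m_4$ finite) the left side is $\sim R^{-3}$ while $\gamma(R)\sim R^{-5}$, so monotonicity and the fourth moment do not help. Your splitting $\log(1+a|z|^2)\le\log(1+a)+\log(1+|z|^2)$ leaves a term with no factor $\Lip(T)^2$ that is still not $O(\gamma(R))$. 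Cauchy--Schwarz yields $\gamma(R)^{1/2}$, which is a weaker rate, not merely different constants.

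The paper's proof has the same weak spot. It Taylor-expands $|T(x)-T(x')|^2$ for all $x'$ and gets the upper bound for free by enlarging the $x'$-domain to $\R^d$, since the integrand is nonnegative. In the lower bound it then replaces $B(0,r/(h\sigma(x)))$ by $\R^d$ at a cost written as $O(K_T\gamma(r/(h\sigma_{max})))$, where $K_T=\Lip(T)^2+\Lip(DT)^2$. That step is unjustified for the same reason.

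A second, smaller defect: your $1$-Lipschitz bound for $v\mapsto\log(1+|v|^2)$ gives an interior error $C\Lip(DT)h$, linear in $\Lip(DT)$. No Young-type inequality turns this into $CK_Th$ without an additive constant; take $\Lip(T),\Lip(DT)\sim\eps\to 0$. Instead use $|\nabla_v\log(1+|v|^2)|\le 2|v|$ on the segment from $a=\sigma(x)DT(x)z$ to $a+E$. This gives $2(|a|+|E|)|E|\le C\bigl(\Lip(T)\Lip(DT)h|z|^3+\Lip(DT)^2h^2|z|^4\bigr)$, which integrates to $CK_Th$ using $m_3,m_4<\infty$ and $h\le 1$. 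This is the homogeneity the paper obtains by expanding the square.

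The way out is to drop both the fixed cutoff $R$ and the fixed set $\Omega_r$. Let $d(x)=\dist(x,\partial\Omega)$. Compare integrands by Taylor only where $h\sigma(x)|z|<d(x)$; the segment then lies in $B(x,d(x))\subset\Omega$, which also avoids any convexity issue, and the cost is $CK_Th$ as above. On the complement, bound both the nonlocal integrand (where present) and $\log(1+|\sigma(x)DT(x)z|^2)$ by $\sigma_{max}^2\Lip(T)^2|z|^2$. Then exchange the order of integration:
\[
\int_\Omega\int_{|z|\ge d(x)/(h\sigma_{max})}\eta(|z|)|z|^2\dz\dx=\int_{\R^d}\eta(|z|)|z|^2\,\bigl|\{x\in\Omega : d(x)\le h\sigma_{max}|z|\}\bigr|\dz\le Ch\,m_3(\eta).
\]
The last inequality uses $|\{x\in\Omega : d(x)\le t\}|\le Ct$ for all $t>0$, valid for bounded Lipschitz domains. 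Altogether $|\Ac^h[T]-\Ac[T;\Phi_1]|\le CK_Th\le CK_Tr$ since $r\ge h$, which implies the lemma; the $\gamma$ term is not needed at all. Your boundary-layer estimate is fine (with $\log(1+s)\le s$ no ``$+1$'' appears), but this computation subsumes it.
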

\begin{proof}
By Taylor expansion we have 
\begin{align*}
\Ac^h[T] = \int_\Omega \int_\Omega &\eta_{h\sigma(x)}(|x-x'|)\log\Big( 1 + \\
&h^{-2}\left[|DT(x)(x'-x)|^2 + \O\left(K_T|x-x'|^3 + K_T|x-x'|^4\right)\right]\Big)\rho_X(x) \dx \dx' + \O(K_Th),
\end{align*}
where $K_T = \Lip(T)^2 + \Lip(DT)^2$. Since the first four moments of $\eta$ are finite, we have 
\[\int_\Omega \eta_{h\sigma(x)}(|x-x'|) |x-x'|^k\dx' \leq m_k(\eta)\sigma_{max}^k h^k\]
for $1 \leq k \leq 4$. For $h\leq 1$, so $h^4\leq h^3$, it follows that 
\begin{equation}\label{eq:taylor_exp}
\Ac^h[T] = \int_\Omega \int_\Omega \eta_{h\sigma(x)}(|x-x'|)\log\left(1 + h^{-2}|DT(x)(x'-x)|^2\right)\rho_X(x) \dx \dx' + \O(K_Th).
\end{equation}
We now make the upper bound
\[\Ac^h[T] \leq \int_\Omega \int_{\R^d} \eta_{h\sigma(x)}(|x-x'|)\log\left(1 + h^{-2}|DT(x)(x'-x)|^2\right)\dx'\rho_X(x)\dx + \O(K_Th).\]
Making a change of variables $z = (x'-x)/h\sigma(x)$ we obtain 
\[\Ac^h[T] \leq \int_{\Omega} \Phi(\sigma(x) DT(x)) \rho_X(x) \dx + \O(K_Th)= \Ac[T;\Phi_1] +\O(K_Th).\]

For the analogous lower bound, we define
\[\Omega_r = \{x\in \Omega \ | \ \dist(x,\partial \Omega) \geq r\},\]
and note that \eqref{eq:taylor_exp} and the same change of variables implies the lower bound
\begin{align*}
\Ac^h[T] &\geq \int_{\Omega_r} \int_{B\left(0,\tfrac{r}{h\sigma(x)}\right)} \eta(|z|)\log\left(1 + \sigma(x)^2|DT(x)z|^2\right)\dz\,\rho_X(x)\dx + \O(K_Th)\\
&= \int_{\Omega_r} \int_{\R^d} \eta(|z|)\log\left(1 + \sigma(x)^2|DT(x)z|^2\right)\dz\,\rho_X(x)\dx + \O\left( K_T\left( h + \gamma\left( \tfrac{r}{h\sigma_{max}}\right)\right)\right)\\
&= \A[T;\Phi_1]+ \O\left( K_T\left( r + \gamma\left( \tfrac{r}{h\sigma_{max}}\right)\right)\right),
\end{align*}
which completes the proof.
\end{proof}

\subsubsection{Scaling limits for the attraction}

We can combine Lemma \ref{lem:A_dton}, Remark \ref{rem:A_dton_imp}, and Lemma \ref{lem:A_ntol} to obtain the following result. 
\begin{theorem}\label{thm:atttraction1}
Assume $T$ and $DT$ are Lipschitz continuous. Then for $h\ll 1$, $nh^d \gg 1$, $r\geq h$ and $0 \leq \lambda^2 \leq cnh^d$ we have that
\begin{equation}\label{eq:attraction1_consistency}
|\Ac_n[T] - \Ac[T;\Phi_1]| \leq C(\Lip(T)^2 + \Lip(DT)^2)\left(\frac{\lambda}{\sqrt{nh^d}} + \gamma\left( \tfrac{r}{h\sigma_{max}}\right)  + r\right)
\end{equation}
holds with probability at least $1-3n\exp(-\lambda^2)$.  
\end{theorem}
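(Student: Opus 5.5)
The plan is a triangle inequality through the nonlocal energy $\Ac^h[T]$:
\[
|\Ac_n[T] - \Ac[T;\Phi_1]| \le |\Ac_n[T] - \Ac^h[T]| + |\Ac^h[T] - \Ac[T;\Phi_1]|,
\]
with the first term controlled by the discrete-to-nonlocal estimate and the second by the nonlocal-to-local estimate. Since $T$ is Lipschitz, I will use the improved form in Remark \ref{rem:A_dton_imp} rather than Lemma \ref{lem:A_dton} directly. For $h\ll 1$, $nh^d\gg 1$, $r\ge h$ and $0\le \lambda^2\le cnh^d$, this gives, with probability at least $1-3n\exp(-\lambda^2)$,
\[
|\Ac_n[T]-\Ac^h[T]|\le C\log(1+\Lip(T)^2)\left(\frac{\lambda}{\sqrt{nh^d}}+\gamma\left(\tfrac{r}{h\sigma_{max}}\right)+r\right).
\]

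The reason the improvement holds is as follows. In the proof of Lemma \ref{lem:A_dton}, the only place $\|T\|_\infty$ enters is through the sup bound $K_h$ on the summands used in the Bernstein inequalities. For Lipschitz $T$ I would redo that bound using
\[
\log(1+h^{-2}|T(x_i)-T(x_j)|^2)\le \log(1+\Lip(T)^2 h^{-2}|x_i-x_j|^2).
\]
If $\eta$ is compactly supported, $|x_i-x_j|\le Ch$ on the support of $\eta_{h_i}$, and the summands are bounded by $C\log(1+\Lip(T)^2)$. For general $\eta$, I would control variances rather than sup norms via the moment bounds $m_k(\eta)<\infty$. I take the remark as given, so this step is bookkeeping.

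The deterministic second term comes from Lemma \ref{lem:A_ntol}. For $h\le 1$ and $r\ge h$ it gives
\[
|\Ac^h[T]-\Ac[T;\Phi_1]|\le C(\Lip(T)^2+\Lip(DT)^2)\left(r+\gamma\left(\tfrac{r}{h\sigma_{max}}\right)\right).
\]

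To combine the two, I use $\log(1+t)\le t$ to get $\log(1+\Lip(T)^2)\le \Lip(T)^2\le \Lip(T)^2+\Lip(DT)^2$. Both error terms are then bounded by the common prefactor $C(\Lip(T)^2+\Lip(DT)^2)$ times $\frac{\lambda}{\sqrt{nh^d}}+\gamma(\tfrac{r}{h\sigma_{max}})+r$, after enlarging $C$. The only randomness is in the first term, so the probability bound $1-3n\exp(-\lambda^2)$ carries over unchanged. The parameter ranges required by the two results are compatible, since $h\ll1$ implies $h\le1$.

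The main obstacle, if one does not simply accept Remark \ref{rem:A_dton_imp}, is justifying that remark for non-compactly supported $\eta$. There the summands are not uniformly bounded by $C\log(1+\Lip(T)^2)$, and the Bernstein step needs a truncation. I would split at $|x_i-x_j|\le r$, where the summands are bounded by $\log(1+\Lip(T)^2 r^2h^{-2})$, and handle the tail by the $\gamma(r/(h\sigma_{max}))$ term. The resulting $\log(r/h)$ factor would be absorbed into the constant for the regimes of interest, or dealt with by using the boundedness of $T$ on the tail.
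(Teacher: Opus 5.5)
Your proposal is correct and follows the paper's argument: the theorem is stated as a direct combination of Remark \ref{rem:A_dton_imp} and Lemma \ref{lem:A_ntol} via the triangle inequality through $\Ac^h[T]$, with $\log(1+\Lip(T)^2)\le \Lip(T)^2$ giving the common prefactor and all randomness, hence the probability $1-3n\exp(-\lambda^2)$, coming from the discrete-to-nonlocal step. Your closing worry about truncation and a $\log(r/h)$ loss for non-compactly supported $\eta$ is unnecessary for the sup bound. Indeed, $\log(1+L^2s^2)\le \max\{1,s^2\}\log(1+L^2)$, and $\sup_{s\ge 0}s^2\eta(s)<\infty$ for decreasing $\eta$ with finite second moment, so the kernel-weighted summands $\eta_{h_i}(|x_i-x_j|)\log(1+h^{-2}|T(x_i)-T(x_j)|^2)$ are bounded by $Ch^{-d}\log(1+\Lip(T)^2)$ without any truncation.
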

\begin{remark}\label{rem:attraction1}
We can think of the first error term in \eqref{eq:attraction1_consistency} as the variance, while the second and third terms can be interpreted as bias. If we take sequences $h_n\to 0$ and $r_n\to 0$, and set $\lambda_n^2 = 3\log n$, then \eqref{eq:attraction1_consistency} holds with probability at least $1 - \frac{3}{n^2}$ for each $n$ with $\lambda=\lambda_n$ and $r=r_n$ and $h=h_n$ on the right hand side, provided $h_n \ll 1$, $nh_n^d \gg 1$, $r_n\geq h_n$, and $\lambda_n^2 \leq c nh_n^d$. Then, we can apply Borel-Cantelli to obtain 
\begin{equation}\label{eq:attraction_limit}
\lim_{n\to \infty} \Ac_n[T] = \Ac[T;\Phi_1]  \ \ \text{almost surely,}
\end{equation}
provided that 
\begin{equation}\label{eq:conditions}
\lim_{n\to \infty}\frac{nh_n^d}{\log n} = \infty \ \ \text{and} \ \ \lim_{n\to \infty}\frac{r_n}{h_n}= \infty.
\end{equation}
\end{remark}

It turns out that we can take a slightly faster scaling of $T$ and obtain different attraction terms in the continuum limit. We will see the importance of this later on in Section \ref{sec:scaling_inv}. The different scalings are based on the simple observation below.
\begin{lemma}\label{lem:Phi}
For any $s > 0$ and $A\in \R^{m\times d}$ 
\begin{equation}\label{eq:Phis}
\Phi_1(sA) = \Phi_s(A) + 2\log s.
\end{equation}
\end{lemma}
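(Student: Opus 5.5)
This is a direct algebraic identity, and I would prove it by factoring $s^2$ out of the logarithm inside the integral defining $\Phi_1$ in \eqref{eq:Phi}, then using the unit mass normalization \eqref{eq:massone}. No approximation or limiting argument is needed.

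First, for finite $s>0$, write out the definition with $s=1$ applied to the matrix $sA$:
\[
\Phi_1(sA) = \int_{\R^d}\eta(|z|)\log\left(1 + s^2|Az|^2\right)\dz .
\]
For every $z$ we have the pointwise identity
\[
\log\left(1+s^2|Az|^2\right) = \log\left(s^2\left(s^{-2}+|Az|^2\right)\right) = 2\log s + \log\left(s^{-2}+|Az|^2\right).
\]
Integrating against $\eta(|z|)\dz$ and using $\int_{\R^d}\eta(|z|)\dz=1$ yields
\[
\Phi_1(sA) = 2\log s + \Phi_s(A),
\]
which is \eqref{eq:Phis}.

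The only point needing care is that splitting the integral is legitimate, i.e.\ that both integrals are finite and we are not writing $\infty-\infty$. For finite $s$ the integrand $\log(s^{-2}+|Az|^2)$ is bounded below by $\log(s^{-2})$. It is bounded above by $\log(s^{-2}+|A|^2|z|^2)\le C + C|z|$, which is integrable against $\eta$ because $m_1(\eta)<\infty$ (Assumption \ref{ass:main}). So both sides are finite and the manipulation is justified.

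The statement concerns $s\in(0,\infty)$, so the case $s=\infty$ is not required, and I see no real obstacle in this proof.
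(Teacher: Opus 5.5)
Your proof is correct and is the same argument the paper gives: use the pointwise identity $\log(1+s^2|Az|^2)=2\log s+\log(s^{-2}+|Az|^2)$ and integrate against $\eta$ using the unit mass \eqref{eq:massone}. Your added integrability check, which rules out an $\infty-\infty$ split, is a valid detail that the paper leaves implicit.
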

\begin{proof}
Since $\log(1 + s^2|Az|^2) =\log(s^{-2} + |Az|^2) + 2\log s$ and \eqref{eq:massone} holds we have
\[\Phi_1(sA) = \int_{\R^d}\eta(|z|) \log(1 + s^2|Az|^2) \dz = \Phi_s(A) + 2\log s.\qedhere\]
\end{proof}

Using Lemma \ref{lem:Phi} we obtain a second result for the attraction term.
\begin{theorem}\label{thm:atttraction2}
Assume $T$ and $DT$ are Lipschitz continuous. Then for $h\ll 1$, $nh^d \gg 1$, $r\geq h$, $0 \leq \lambda^2 \leq cnh^d$ and $s>0$ we have that
\begin{equation}\label{eq:attraction2_consistency}
|\Ac_n[sT] - 2 \log s - \Ac[T;\Phi_s]| \leq Cs^2(\Lip(T)^2 + \Lip(DT)^2)\left(\frac{\lambda}{\sqrt{nh^d}} + \gamma\left( \tfrac{r}{h\sigma_{max}}\right)  + r\right)
\end{equation}
holds with probability at least $1-3n\exp(-\lambda^2)$.  
\end{theorem}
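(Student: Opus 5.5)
The plan is to apply Theorem \ref{thm:atttraction1} to the rescaled map $sT$ in place of $T$, and then use Lemma \ref{lem:Phi} to identify the resulting continuum attraction.

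First, if $T$ and $DT$ are Lipschitz, then so are $sT$ and $D(sT)=s\,DT$. Their constants are $\Lip(sT)=s\Lip(T)$ and $\Lip(D(sT))=s\Lip(DT)$. Hence $\Lip(sT)^2+\Lip(D(sT))^2 = s^2(\Lip(T)^2+\Lip(DT)^2)$, which is exactly the prefactor in \eqref{eq:attraction2_consistency}.

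Theorem \ref{thm:atttraction1} applied to $sT$ then gives, under the same conditions on $h,n,r,\lambda$ and with probability at least $1-3n\exp(-\lambda^2)$,
\[
|\Ac_n[sT]-\Ac[sT;\Phi_1]| \leq Cs^2(\Lip(T)^2+\Lip(DT)^2)\left(\frac{\lambda}{\sqrt{nh^d}}+\gamma\left(\tfrac{r}{h\sigma_{max}}\right)+r\right).
\]
The constant $C$ depends only on the quantities in Assumption \ref{ass:main}, not on $T$, so it is unchanged by the rescaling. The other hypotheses of that theorem do not involve $T$ at all.

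Next, identify $\Ac[sT;\Phi_1]$. Pointwise we have $\Phi_1(\sigma(x)\,s\,DT(x)) = \Phi_s(\sigma(x) DT(x)) + 2\log s$ by Lemma \ref{lem:Phi} with $A=\sigma DT$. Integrating against $\rho_X\dx$ and using $\int_\Omega\rho_X\dx=1$ gives $\Ac[sT;\Phi_1]=\Ac[T;\Phi_s]+2\log s$. Substituting this into the display above yields \eqref{eq:attraction2_consistency}.

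No step is a genuine obstacle. The only points to check are that the constant $C$ in Theorem \ref{thm:atttraction1} is independent of $T$, and that $\Phi_s(\sigma DT)$ is finite and integrable so the identity can be integrated. For the latter, Lipschitz $T$ gives $\Phi_s(A)\le \log(s^{-2}+c_\eta|A|^2)$ from above. From below, $\Phi_s(A)\ge \log s^{-2}$ since $s^{-2}>0$ and $\eta$ has unit mass.
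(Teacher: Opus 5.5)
Your proposal is correct and follows the paper's own route: the paper states this theorem as an immediate consequence of applying Theorem~\ref{thm:atttraction1} to $sT$. There one uses $\Lip(sT)^2+\Lip(D(sT))^2=s^2(\Lip(T)^2+\Lip(DT)^2)$, and Lemma~\ref{lem:Phi}, integrated against $\rho_X$ (unit mass), gives $\Ac[sT;\Phi_1]=\Ac[T;\Phi_s]+2\log s$.
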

\begin{remark}\label{rem:attraction2}
Let us take sequences $h_n\to 0$, $r_n\to 0$, and $\lambda_n\to \infty$ as in Remark \ref{rem:attraction1} so that \eqref{eq:conditions} holds. In addition, let us take a sequence $s_n\to s\in (0,\infty]$. Then, we can apply Borel-Cantelli to obtain 
\begin{equation}\label{eq:attraction_limit2}
\lim_{n\to \infty} \Ac_n[s_nT] - 2\log s_n = \Ac[T;\Phi_s]  \ \ \text{almost surely.}
\end{equation}
provided we choose the sequence $s_n$ so that 
\begin{equation}\label{eq:conditions2}
\lim_{n\to \infty} s_n^2\left( \sqrt{\frac{\log n}{nh_n^d}} + \gamma\left( \frac{r_n}{h_n\sigma_{max}}\right) + r_n\right) = 0.
\end{equation}
Therefore, the continuum attraction terms $\Ac[T;\Phi_s]$ for any $s\in (0,\infty]$ can be obtained as continuum limits of the discrete t-SNE attraction $\Ac_n$ for different rescalings of $T$. Which rescaling to choose will depend on properties of the continuum equation once the repulsion is also incorporated, and we will see in Section \ref{sec:scaling_inv} that this depends crucially on dimension. It is interesting to note that if $s_n\to \infty$, then the limit is $\Ac[T;\Phi_{\infty}]$, regardless of how fast $s_n\to \infty$ (provided \eqref{eq:conditions2} holds). We also note that the additional $2\log s_n$ term is simply a constant shift, which can be ignored since it does not change the energy landscape (i.e., the minimizers of the energy).
\end{remark}

\begin{remark}\label{rem:SNEAttraction}
In the setting of SNE, where the discrete energy is given by \eqref{eq:SNEdiscrete}, the same arguments as used in this section (see also \cite{murray2024large}) yield the continuum attraction term 
\begin{equation}\label{eq:SNEattraction}
\Ac_{SNE}[T] = c_\eta \int_\Omega |DT|^2 \sigma^2 \rho_X \dx,
\end{equation}
where we recall $c_\eta$ is defined in \eqref{eq:ceta}.  This is a very strong attraction term. For example, when $\sigma\equiv \rho_X \equiv 1$, minimizing $\Ac_{SNE}[T]$ requires that the component function $T_i$ of $T$ are harmonic functions, i.e., $\Delta T_i = 0$, which ensures they cannot have sharp changes or discontinuities. This is in complete constrast to the case of t-SNE, as we shall see in Section \ref{sec:continuum}.
\end{remark}

\subsection{Consistency for the repulsion}
\label{sec:repulsion}

The corresponding discrete to nonlocal result for the repulsion term is a direct application of the Hoeffding inequality for U-statistics, so we omit the proof. 
\begin{lemma}\label{lem:R_dton}
Assume $T$ is Borel measurable and bounded and let $h>0$. Then 
\begin{equation}\label{eq:R_dton}
|\exp(\Rc_n[T]) - \exp(\Rc^h[T])| \leq \frac{3\lambda}{\sqrt n}
\end{equation}
holds with probability at least $1-2\exp(-\lambda^2)$ for any $\lambda >0$.
\end{lemma}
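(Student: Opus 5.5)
The plan is to recognize $\exp(\Rc_n[T])$ as a U-statistic of order two and apply the Hoeffding inequality for U-statistics from Appendix \ref{app:ustat}. Define the symmetric kernel
\[
k(x,x') = \bigl(1 + h^{-2}|T(x)-T(x')|^2\bigr)^{-1},
\]
which is Borel measurable, since $T$ is, and satisfies $0 < k \le 1$ for every $h>0$. This uniform bound is the only property of $k$ we need. In particular, no dependence on $h$ or $\|T\|_\infty$ enters the constant, and boundedness of $T$ matters only for measurability and well-definedness.

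First, note that by the definition \eqref{eq:discrete_repulsion},
\[
\exp(\Rc_n[T]) = \frac{1}{n(n-1)}\sum_{i\neq j} k(x_i,x_j) =: U_n,
\]
which is exactly a (non-degenerate-form) U-statistic with bounded kernel. Since $x_1,\dots,x_n$ are i.i.d.\ with density $\rho_X$, its expectation is
\[
\E U_n = \int_\Omega\int_\Omega k(x,x')\rho_X(x)\rho_X(x')\dx\dx' = \exp(\Rc^h[T]),
\]
by \eqref{eq:nonlocal_repulsion}. Second, apply Hoeffding's inequality for U-statistics with kernel bounded in $[0,1]$:
\[
\P\bigl(|U_n - \E U_n| \geq t\bigr) \le 2\exp\bigl(-2\lfloor n/2\rfloor t^2\bigr).
\]
Choosing $t = 3\lambda/\sqrt n$ gives the exponent $2\lfloor n/2\rfloor \cdot 9\lambda^2/n$. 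Since $\lfloor n/2\rfloor \ge n/3$ for $n\ge 2$, this exponent is at least $6\lambda^2 \ge \lambda^2$. The result is the probability bound $1-2\exp(-\lambda^2)$ for every $\lambda>0$.

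There is no real obstacle here. The only care needed is matching the precise constant form of the Hoeffding U-statistic inequality as stated in the appendix (possibly written with $n/2$ or $\lfloor n/2\rfloor$, or with range $b-a=1$), and checking that the generous constant $3$ absorbs the floor for small $n$.
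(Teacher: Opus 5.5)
Your proposal is correct and matches the paper's approach: the paper omits the proof, calling it a direct application of the Hoeffding inequality for U-statistics to the kernel $(1+h^{-2}|T(x)-T(x')|^2)^{-1}\in(0,1]$, whose mean is $\exp(\Rc^h[T])$. With the appendix's form \eqref{eq:HoeffdingU2} and $b=1$ you get $\sqrt{6}\lambda/\sqrt n\le 3\lambda/\sqrt n$ immediately, so your $\lfloor n/2\rfloor$ version is an equivalent way of tracking the constant.
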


We now proceed to study the more subtle problem of convergence of the nonlocal repulsion terms $\Rc^h[T]$ to their limit $\Rc[T]$ as $h\to 0$. As alluded to by Equations \eqref{eq:continuum_repulsion12} and \eqref{eq:continuum_repulsion3}, there is a phase transition of the repulsion term governed by the embedded dimension. This fact can be seen through the integrability of $|z|^{-2}$ near the origin. 

On one hand, if $m\geq 3$, the singularity from $|z|^{-2}$ at the origin is integrable. Therefore, if $\supp \rho_Y\subseteq B_{R/2}$ and $\rho_Y\in L^2(\R^m)$, one has
\begin{align*}
\frac{\exp \Rc_h[T]}{h^2}&=\iint_{\R^m\times \R^m}\frac{\rho_Y(y)\rho_Y(y')}{h^2+|y-y'|^2}\dy\dy'=\int_{B_R}\frac{\int_{\R^m}\rho_Y(y)\rho_Y(y-z)\dy}{h^2+|z|^2}\dz\\
&<\|\rho_Y\|_{L^2(\R^m)}^2\int_{B_R}\frac{1}{|z|^2}\dz<\infty,
\end{align*}
and thus one may directly invoke dominated convergence theorem to pass the $h\to 0$ limit to obtain
\[\frac{\exp \Rc_h[T]}{h^2}\xrightarrow[]{h\to 0}\iint_{\R^m\times \R^m}\frac{\rho_Y(y)\rho_Y(y')}{|y-y'|^2}\dy\dy'.\]
However, when $m\leq 2$, $|z|^{-2}$ is not integrable near the origin so the above argument fails. 

The non-integrability of $|z|^{-2}$ near the origin introduces a Dirac delta at $y=y'$ as $h\to 0$, which localizes the inner integral. Approximating $\rho_Y(y') \approx \rho_Y(y)$ we have
\begin{align*}
\iint_{\R^m\times \R^m}\frac{\rho_Y(y)\rho_Y(y')}{h^2+|y-y'|^2}\dy\dy' \approx \int_{\R^m} \rho_Y(y)^2 \left(\int_{B_R} \frac{1}{h^2 + |y-y'|^2} \dy'\right) \dy=C_h\|\rho_Y\|_{L^2(\R^2)}^2,
\end{align*}
where
\[C_h = \int_{B_R} \frac{1}{h^2 + |z|^2}\dz = \frac{m\omega_m }{h^{2-m}} \int_0^{R/h} \frac{r^{m-1}}{1 + r^2} \d r \sim
\begin{cases}
\displaystyle \tfrac{\pi}{h},& \text{if } m=1\\
\displaystyle 2\pi\log \tfrac{1}{h},& \text{if } m=2,
\end{cases}
\]
as $h\to 0$, since $\omega_m$ is the measure of the unit ball in $\R^m$ (so $\omega_1=1$ and $\omega_2=\pi$).  Since the energy involves taking the logarithm of this quantity, the constant $C_h$ can be factored out. The following lemma makes this rigorous, including with quantitative rates. 


\begin{lemma}
    \label{lem:repulsion_nonlocal_to_local}
    Let $\rho_Y$ be a probability density function on $\R^m$ such that $\supp \rho_Y \subseteq B_{R/2}$.  Then for $h\ll 1$,\begin{enumerate}
        \item When $m= 1$ let $\frac{1}{p}+\frac{1}{q}=1$ and $2\leq p< \infty$. If $\rho_Y\in L^p(\R)$ and $ \rho_Y'\in L^q(\R)$, then\[
        \left|\frac{\exp\Rc^h[T]}{h}-\pi\|\rho_Y\|_{L^2(\R)}^2\right|\leq  Ch\| \rho_Y'\|_{L^q(\R)}\|\rho_Y\|_{L^p(\R)}.
        \]
        \item When $m= 2$ let $\frac{1}{p}+\frac{1}{q}=1$ and $2\leq p< \infty$. If $\rho_Y\in L^p(\R^2)$ and $\nabla \rho_Y\in L^q(\R^2)$, then\[
        \left|\frac{\exp\Rc^h[T]}{h^2\log 1/h}-2\pi\|\rho_Y\|_{L^2(\R^2)}^2\right|\leq  \frac{C_R}{\log1/h}\left(\|\rho_Y\|_{L^2(\R^2)}^2+\|\nabla \rho_Y\|_{L^q(\R^2)}\|\rho_Y\|_{L^p(\R^2)}\right).
        \]
        \item When $m\geq 3$, Then for $p\in [1,\frac{m}{2})$ if $\rho_Y\in L^{\frac{2p}{2p-1}}(\R^m)$, it holds that \[
    \left|\frac{\exp\Rc^h[T]}{h^2}-\|\rho_Y\|_{\dot{H}^{-\frac{m-2}{2}}(\R^m)}^2\right|\leq C_{m,p,R}\,a_{m,p}(h)\|\rho_Y\|_{L^\frac{2p}{2p-1}(\R^m)}^2,
    \]where \[
    a_{m,p}(h)=\begin{cases}
            h^{m-4p+2}&1\leq p<m/4\\
            \,h^2\log^{1/p}(1/h)&m=4p\\
            h^2&m/4<p<m/2.
        \end{cases}
    \]
    \end{enumerate}  
\end{lemma}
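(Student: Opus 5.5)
The plan is to write $\exp\Rc^h[T]$ through the autocorrelation of $\rho_Y$ and then localize. Set
\[
g(z)=\int_{\R^m}\rho_Y(y)\rho_Y(y-z)\dy ,
\]
which is supported in $B_R$ because $\supp\rho_Y\subseteq B_{R/2}$. By \eqref{eq:alt_repulsion} we have $\exp\Rc^h[T]=\int_{B_R} g(z)(1+h^{-2}|z|^2)^{-1}\dz$ and $g(0)=\|\rho_Y\|_{L^2}^2$. Everything then reduces to comparing $g(z)$ with $g(0)$ against the kernel $h^2/(h^2+|z|^2)$.

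\textbf{Increment of $g$.} For $m=1,2$ I would first bound the increment of $g$. By the fundamental theorem of calculus along segments,
\[
\rho_Y(y-z)-\rho_Y(y)=-\int_0^1\nabla\rho_Y(y-tz)\cdot z\,dt .
\]
Together with H\"older's inequality and translation invariance of the $L^q$ norm, this gives
\[
|g(z)-g(0)|\le |z|\,\|\nabla\rho_Y\|_{L^q}\|\rho_Y\|_{L^p}.
\]
Since $\supp\rho_Y$ is compact and $p\ge 2$, we have $\rho_Y\in L^2$, so $g(0)$ is finite.

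\textbf{Case $m=1$.} Write
\[
\frac{\exp\Rc^h}{h}=g(0)\int_{-R}^{R}\frac{h}{h^2+z^2}\dz+\int_{-R}^{R}\frac{h\,(g(z)-g(0))}{h^2+z^2}\dz .
\]
The first integral equals $2\arctan(R/h)=\pi-O(h/R)$. The error term is bounded by
\[
\|\rho_Y'\|_{L^q}\|\rho_Y\|_{L^p}\int_{-R}^{R}\frac{h|z|}{h^2+z^2}\dz\sim h\log(R/h)\,\|\rho_Y'\|_{L^q}\|\rho_Y\|_{L^p} .
\]
This loses a logarithm compared with the stated bound $Ch$. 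To remove it I would use symmetry: $g$ is even, so first-order terms cancel. I would try the second-difference identity $g(z)-g(0)=\tfrac12\int\bigl(\rho_Y(y+z)-\rho_Y(y)\bigr)\bigl(\rho_Y(y-z)-\rho_Y(y)\bigr)\dy$ (valid up to sign conventions), which is $O(|z|^2)$ in suitable norms. Failing that, I would bound $|g(z)-g(0)|\le 2g(0)$ for $|z|\ge 1$ and absorb the term $\pi\cdot O(h)\|\rho_Y\|_{L^2}^2$ using $\|\rho_Y\|_2^2\lesssim\|\rho_Y\|_p\|\rho_Y'\|_q$ (by Gagliardo--Nirenberg-type interpolation for compactly supported densities). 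Getting exactly $h$ rather than $h\log(1/h)$ is where I expect the main obstacle for $m=1$.

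\textbf{Case $m=2$.} The same splitting applies with the normalization $h^2\log(1/h)$. The leading term is
\[
g(0)\int_{B_R}\frac{\dz}{h^2+|z|^2}=2\pi g(0)\bigl(\log(1/h)+O_R(1)\bigr),
\]
which produces the $C_R\|\rho_Y\|_2^2/\log(1/h)$ error after dividing by $\log(1/h)$. The remainder is at most
\[
\|\nabla\rho_Y\|_q\|\rho_Y\|_p\int_{B_R}\frac{|z|}{h^2+|z|^2}\dz\le C_R\,\|\nabla\rho_Y\|_q\|\rho_Y\|_p ,
\]
which is bounded uniformly in $h$, and dividing by $\log(1/h)$ gives the claim. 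Here the crude first-order bound is sufficient.

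\textbf{Case $m\ge3$.} Subtract the limit directly:
\[
\frac{\exp\Rc^h}{h^2}-\iint\frac{\rho_Y(y)\rho_Y(y')}{|y-y'|^2}\dy\dy'=-\iint\rho_Y(y)\rho_Y(y')\,k_h(y-y')\dy\dy',\qquad k_h(z)=\frac{h^2}{|z|^2(h^2+|z|^2)} .
\]
Apply Young's inequality in the form
\[
\iint f(y)f(y')k(y-y')\dy\dy'\le\|f\|_{r}^2\|k\|_{L^p(B_R)},\qquad \tfrac{2}{r}+\tfrac1p=2,
\]
which gives $r=\tfrac{2p}{2p-1}$, matching the norm in the statement. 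It remains to compute $\|k_h\|_{L^p(B_R)}$ by scaling $z=hw$:
\[
\|k_h\|_{L^p(B_R)}^p=h^{m-2p}\int_{|w|<R/h}\frac{dw}{|w|^{2p}(1+|w|^2)^{p}} .
\]
The integrand is integrable at $0$ since $2p<m$. It behaves like $|w|^{-4p}$ at infinity, so the integral converges when $4p>m$ (giving order $h^{m-2p}$), grows like $\log(1/h)$ when $4p=m$, and grows like $(R/h)^{m-4p}$ when $4p<m$ (giving order $h^{4p-2p}\cdot$const, i.e.\ total order $h^{2p}$). Taking $p$-th roots yields the three regimes of $a_{m,p}(h)$. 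The exponents must match the stated $a_{m,p}$; I expect a bookkeeping check here, since my computed powers may differ from the statement and the statement's normalization of $h$ may need adjusting.
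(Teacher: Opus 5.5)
\textbf{The case $m=1$ has a genuine gap.} Your $m=2$ argument is the paper's (truncated kernel on $B_R$, mass $2\pi\log(1/h)+O_R(1)$, first moment $\int_{B_R}|z|(h^2+|z|^2)^{-1}\,dz\le 2\pi R$), and it is fine. For $m=1$, however, the statement asks for $Ch$, your argument delivers only $Ch\log(1/h)$, and neither proposed repair closes the difference.

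\begin{itemize}
\item The identity you write is false: its right-hand side equals $\tfrac12\bigl(g(2z)-2g(z)+g(0)\bigr)$.
\item The correct symmetric identity is $g(0)-g(z)=\tfrac12\|\rho_Y-\rho_Y(\cdot-z)\|_{L^2}^2$. It is $O(|z|^2)$ only if $\rho_Y'\in L^2$, so it settles $p=q=2$ and nothing more. For $q<2$, H\"older gives only $2|z|\,\|\rho_Y\|_{L^p}\|\rho_Y'\|_{L^q}$, and the logarithm comes back.
\item Your fallback treats only $|z|\ge 1$. The logarithm comes from the range $h\lesssim|z|\lesssim 1$, which it does not touch.
\end{itemize}

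What is missing is a singular-integral estimate. Keep the untruncated kernel $K_1^h(z)=h/(h^2+z^2)$, which has mass exactly $\pi$ and $\widehat{K_1^h}(\xi)=\pi e^{-2\pi h|\xi|}$. With the Hilbert transform normalized by $\widehat{\mathcal{H}f}=-i\,\mathrm{sgn}(\xi)\hat{f}$, you get
\[
\Bigl|\frac{\exp\Rc^h[T]}{h}-\pi\|\rho_Y\|_{L^2}^2\Bigr|
=\pi\int_{\R}\bigl(1-e^{-2\pi h|\xi|}\bigr)|\hat{\rho}_Y(\xi)|^2\,d\xi
\le 2\pi^2h\int_{\R}|\xi|\,|\hat{\rho}_Y(\xi)|^2\,d\xi
=\pi h\int_{\R}\rho_Y\,\mathcal{H}\rho_Y'\,dy .
\]
Then $\|\mathcal{H}\rho_Y'\|_{L^q}\le C_q\|\rho_Y'\|_{L^q}$ for $1<q\le 2$, together with H\"older, finishes the proof.

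This is equivalent to the paper's route: it asserts the convolution estimate \eqref{eq:m_1_non_truncated_conv_rate} and then applies H\"older. As you suspected, that estimate cannot come from a first-moment bound, since $\int|z|K_1^h\,dz=\infty$. It holds because the symbol involved is a Mikhlin multiplier, a step the paper leaves implicit.

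\textbf{The case $m\ge 3$: your rates are correct and the stated ones are not.} You follow the paper's method: the difference kernel $h^2|z|^{-2}(h^2+|z|^2)^{-1}$ on $B_R$, with Young's inequality for $2/r+1/p=2$. Your exponents are the right ones:
\begin{itemize}
\item $\|k_h\|_{L^p(B_R)}\lesssim h^2$ for $p<m/4$;
\item $h^2\log^{1/p}(1/h)$ for $p=m/4$;
\item $h^{m/p-2}$ for $m/4<p<m/2$.
\end{itemize}
The mismatch with $a_{m,p}$ in the first and third regimes is an error in the statement, not in your bookkeeping. The paper bounds $h^2\|K_m^h\|_{L^r}$ by $C(h^2+h^{m-4r+2})$, via the lossy inequality $(1+x)^{1/r}\le 1+x/r$, and then keeps the smaller of the two terms in each regime.

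The stated rates are actually false. For $m=3$, $p=1$, substitute $z=hw$ and use continuity of $g$ at $0$:
\[
\iint\frac{\rho_Y(y)\rho_Y(y')}{|y-y'|^2}\,dy\,dy'-\frac{\exp\Rc^h[T]}{h^2}
=h\int_{\R^3}\frac{g(hw)}{|w|^2(1+|w|^2)}\,dw\sim 2\pi^2h\|\rho_Y\|_{L^2}^2 ,
\]
which is not $O(h^2)$. For $m\ge 5$, $p=1$, the difference is asymptotic to $h^2\int g(z)|z|^{-4}\,dz$, which is not $O(h^{m-2})$. So do not try to renormalize $h$ to force agreement: your three rates are the correct form of part 3.
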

\begin{proof}
 First consider the case $m=1$ and define $K_{1}^h(z)=\frac{h}{h^2+z^2}$. Since $\int_\R K_1^h(z)<\infty,$ without loss of generality, we can assume that $\supp \rho_Y =\R$. As $\rho_Y'\in L^q(\R)$, one has uniformly in $y$ the estimate\begin{equation}
    \label{eq:m_1_non_truncated_conv_rate}
    \|\pi\rho_Y(y)-K_1^h*\rho_Y(y)\|_{L^q(\R)}\leq Ch \| \rho_Y'\|_{L^q(\R)}.
\end{equation} Using Hölder's inequality with \eqref{eq:m_1_non_truncated_conv_rate}, we have
\begin{align*}
\left|\pi\|\rho_Y\|_{L^2(\R)}^2-\frac{\exp\Rc^h[T]}{h}\right|&=\left|\int \big[\pi\rho_Y(y)-K_{1}^h*\rho_Y(y)\big]\rho_Y(y)\dy\right|\\
&\leq Ch\|\rho_Y\|_{L^p(\R)}\|\rho_Y'\|_{L^q(\R)}
\end{align*}
For $m=2,$ first observe that $\int_{\R^2}\frac{1}{h^2+|z|^2}\dz=+\infty$ from the tail contribution so we must use the assumption that $\supp\rho_Y$ is bounded. Since $R$ was defined so that $\supp \rho_Y\subseteq B_{R/2}$, we have $y,y'\in\supp \rho_Y\implies y-y'\in B_R$. Define\[
K^h_2(z)=\frac{1}{\log 1/h}\cdot\frac{\one_{B_R}(z)}{h^2+|z|^2}\]and denote \[
m_2^h := \int K_2^h(z)\dz =\frac{\pi\log(1+R^2/h^2)}{\log1/h}.
\]and note that for $h\leq \min\{1, R\}$, one has\begin{equation}\label{eq:m_2_mass_rate}
    |2\pi-m_2^h|\leq\frac{\pi\big(\log 2+2|\log R|\big)}{\log 1/h}
\end{equation}
Furthermore, since $\nabla\rho_Y\in L^{q}(\R^2)$, one has\begin{equation}
    \label{eq:m_2_smoothing_bound}
    \|m_2^h\rho_Y-K_2^h*\rho_Y\|_{L^q(\R^2)}\leq \|\nabla \rho_Y\|_{L^q(\R^2)}\int K_2^h(z)|z|\dz\leq \|\nabla \rho_Y\|_{L^q(\R^2)}\frac{2 \pi R}{\log 1/h}.
\end{equation}
Now combining inequalities \eqref{eq:m_2_mass_rate}, \eqref{eq:m_2_smoothing_bound} and Hölder's inequality, one obtains\begin{align*}
\left|2\pi\|\rho_Y\|_{L^2(\R^2)}^2-\frac{\exp\Rc^h[T]}{h^2\log 1/h}\right|&=\left|\int \big[2\pi\rho_Y(y)-K_2^h*\rho_Y(y)\big]\rho_Y(y)\dy\right|\\
&=\left|(2\pi-m_2^h) \|\rho_Y\|_{L^2(\R^2)}^2+\int \rho_Y(y)\big[m_2^h\rho_Y(y)-(K_2^h*\rho_Y)(y)]\dy\right|\\
&\leq \frac{C_R}{\log 1/h}\left(\|\rho_Y\|_{L^2(\R^2)}^2+\|\nabla \rho_Y\|_{L^q(\R^2)}\|\rho_Y\|_{L^p(\R^2)}\right).
\end{align*}

In the case when $m\geq 3$, we begin by noting that by the Hardy-Littlewood-Sobolev inequality (see, for example \cite{stein1970singular} Chapter 5) for $\rho_Y\in L^{\frac{m}{m-1}}(\R^m)$\[
    \left|\iint_{\R^m\times \R^m}\frac{\rho_Y(y)\rho_Y(y')}{|y-y'|^2} \dy\dy'\right|\lesssim \|\rho_Y\|_{L^{\frac{m}{m-1}}(\R^m)}^2\leq \|\rho_Y\|_{L^\frac{2p}{2p-1}(\R^m)}^2
    \]whenever $p\in[1,\frac{m}{2})$ and $\supp \rho_Y $ is bounded. Hence, our assumption that $\rho_Y \in L^{\frac{m}{m-1}}(\R^m)$ implies that $\|\rho_Y\|_{\dot H^{\frac{m-2}{m}}}<\infty$.
Next, we define $K_m^h(z)=\frac{\one_{B_R(z)}}{(h^2+|z|^2)|z|^2}$ for $z\in \R^m$ to express the difference as as\begin{equation}\label{eq:m_bigger_2_diff}
         0\leq \|\rho_Y\|_{\dot H^{-\frac{m-2}{2}}(\R^m)}^2-h^{-2}\exp \Rc^h[T]
        =h^2\iint K_m^h(y-y')\rho_Y(y)\rho_Y(y')\dy\dy'.
    \end{equation}
    Notice that \begin{equation}\label{eq:m_bigger_kernel_norm}
        \|K_m^h\|_{L^r(\R^m)}^r=\omega_mh^{m-4r}\int_0^{R/h} t^{m-1-2r}(1+t^2)^{-r}\dt
    \end{equation}
    where $\omega_m = |\mathbb{S}^{m-1}|$. For $h<R$, we estimate\begin{align*}
        \int_0^{R/h} t^{m-1-2r}(1+t^2)^{-r}\dt&=\int_0^{1} t^{m-1-2r}(1+t^2)^{-r}\dt+\int_1^{R/h} t^{m-1-2r}(1+t^2)^{-r}\dt\\
        &\leq \int_0^{1} t^{m-1-2r}\dt+\int_1^{R/h} t^{m-1-4r}\dt.
    \end{align*}
    The first term indicates that for finiteness of \eqref{eq:m_bigger_kernel_norm}, $r<m/2$ is required. For the second term, we have\begin{equation}\label{eq:m_bigger_tail_behavior}
        \int_1^{R/h} t^{m-1-4r}\dt = \begin{cases}
            \frac{(R/h)^{m-4r}-1}{m-4r}&m\ne 4r\\
            \log R/h&m=4r.
        \end{cases}
    \end{equation}
    Therefore,  the inequality $(1+x)^{1/r}\leq 1+x/r$ along with equation \eqref{eq:m_bigger_tail_behavior} implies\begin{equation}\label{eq:m_bigger_kernel_L_r_bound}
    \|K_h^m\|_{L^r(\R^m)}\leq\begin{cases}
            C_{m,r,R}\left(1+\frac{h^{m-4r}}{r}\right)&m\ne 4r\\
            C_{m,r,R}\log^{1/r}(1/h)&m=4r.
        \end{cases}
    \end{equation}where in the second bound we used that $h<e^{-1}$. Now applying Young's inequality to equation \eqref{eq:m_bigger_2_diff} and using inequality \eqref{eq:m_bigger_kernel_L_r_bound}, one has for $p,q,r\geq 1$ and $\frac{1}{p}+\frac{1}{q}+\frac{1}{r}=2$,
    \begin{align*}\left|\|\rho_Y\|_{\dot H^{\frac{m-2}{2}}(\R^m)}^2-\frac{\exp \Rc^h[T]}{h^2}\right|  &\leq h^2\|K_m^h\|_{L^r(\R^m)}\,\|\rho_Y\|_{L^p(\R^m)}\,\|\rho_Y\|_{L^q(\R^m)}\\
    &\leq C_{m,r,R}\|\rho_Y\|_{L^p(\R^m)}\,\|\rho_Y\|_{L^q(\R^m)}\cdot \begin{cases}
            h^2+h^{m-4r+2}&m\ne 4r\\
            \,h^2\log^{1/r}(1/h)&m=4r
        \end{cases}\\
        &\leq C_{m,r,R}\|\rho_Y\|_{L^p(\R^m)}\,\|\rho_Y\|_{L^q(\R^m)}\cdot \begin{cases}
            h^{m-4r+2}&1\leq r<m/4\\
            \,h^2\log^{1/r}(1/h)&m=4r\\
            h^2&m/4<r<m/2.
        \end{cases}
    \end{align*}
    To obtain the bound as presented in the statement above, note that the inclusion of $L^p$ spaces implies the optimal choice is $p=q$. Then, upon selecting $r\in[1,\frac{m}{2})$, $p=q=2r/(2r-1)$, renaming the exponents gives the presented statement.
\end{proof}

\begin{remark}
    We notice that $\exp \Rc[T]$ enjoys different types of homogeneity in dimension $m=1$ than in dimension $m>1$. In particular, when $m=1$ we have $\exp \Rc[\lambda T] = \lambda^{-1}\Rc[T]$, whereas when $m\geq 2$ we have $\exp \Rc[\lambda T] = \lambda^{-2}\Rc[T]$. This difference is caused by the stronger singular behavior when $m=1$. This difference in homogeneity will figure prominently in Proposition \ref{prop:scaling_inv}, and will ultimately force a distinction between those cases in selecting which $\Phi_s$ is appropriate.
\end{remark}
We can now combine Lemma \ref{lem:R_dton} and Lemma \ref{lem:repulsion_nonlocal_to_local} to obtain the following Theorem. As in Lemma \ref{lem:repulsion_nonlocal_to_local} we split into three cases. Since the assumptions are the same, we do not present them again
\begin{theorem}\label{thm:repulsion_discrete_to_local}
    Assuming the same conditions on $\rho_Y$ as in Lemma \ref{lem:repulsion_nonlocal_to_local}, for any $\lambda>0$ we have with probability at least $1-2\exp(-\lambda^2)$ the following:
    \begin{enumerate}
        \item When $m= 1$ \[
        \left|\frac{\exp\Rc_n[T]}{\pi h}-\Rc[T]\right|\leq  C\left(\frac{\lambda}{ h\sqrt{n}}+h\| \nabla \rho_Y\|_{L^q(\R)}\|\rho_Y\|_{L^p(\R)}\right)
        \]
        \item When $m= 2$\[
        \left|\frac{\exp\Rc_n[T]}{2\pi h^2\log 1/h}-\Rc[T]\right|\leq C\left(\frac{\lambda}{h^2\log 1/h\sqrt{n}}+ \frac{1}{\log1/h}\left(\|\rho_Y\|_{L^2(\R^2)}^2+\|\nabla \rho_Y\|_{L^q(\R^2)}\|\rho_Y\|_{L^p(\R^2)}\right)\right)
        \]
        \item When $m\geq 3$\[
    \left|\frac{\exp\Rc_n[T]}{h^2}-\Rc[T]\right|\leq C\left(\frac{\lambda}{h^2\sqrt{n}}+a_{m,p}(h)\|\rho_Y\|_{L^\frac{2p}{2p-1}(\R^m)}^2\right)
    \]
    \end{enumerate}
\end{theorem}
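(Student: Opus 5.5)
The plan is a triangle inequality that combines the discrete-to-nonlocal estimate of Lemma \ref{lem:R_dton} with the nonlocal-to-local rates of Lemma \ref{lem:repulsion_nonlocal_to_local}. We read the left-hand sides in the theorem as comparisons with the quantity inside the logarithm of $\Rc[T]$, namely $\|\rho_Y\|_{L^2}^2$ for $m=1,2$ and $\|\rho_Y\|_{\dot H^{-\frac{m-2}{2}}}^2$ for $m\geq 3$. The normalizing constants $\pi h$, $2\pi h^2\log\tfrac1h$ and $h^2$ are exactly those of Lemma \ref{lem:repulsion_nonlocal_to_local}, up to the factors $\pi$ and $2\pi$, which are absorbed into $C$.

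\textbf{Discrete to nonlocal.} We work on the event of Lemma \ref{lem:R_dton}, which holds with probability at least $1-2\exp(-\lambda^2)$. There
\[
|\exp(\Rc_n[T])-\exp(\Rc^h[T])|\leq \frac{3\lambda}{\sqrt n}.
\]
This estimate is deterministic in $h$ and uniform in the normalization. Dividing it by $\pi h$, $2\pi h^2\log\tfrac1h$ or $h^2$ gives the variance terms $\frac{\lambda}{h\sqrt n}$, $\frac{\lambda}{h^2\log(1/h)\sqrt n}$ and $\frac{\lambda}{h^2\sqrt n}$, respectively.

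\textbf{Nonlocal to local.} We write $\exp\Rc_n/N_h - \exp\Rc = (\exp\Rc_n-\exp\Rc^h)/N_h + (\exp\Rc^h/N_h - \exp\Rc)$, where $N_h$ is the normalization. The second bracket is controlled by the corresponding case of Lemma \ref{lem:repulsion_nonlocal_to_local}. For $m=1$ it contributes $Ch\|\rho_Y'\|_{L^q}\|\rho_Y\|_{L^p}$. For $m=2$ it contributes $\frac{C_R}{\log 1/h}(\|\rho_Y\|_2^2+\|\nabla\rho_Y\|_{L^q}\|\rho_Y\|_{L^p})$. For $m\geq3$ it contributes $C a_{m,p}(h)\|\rho_Y\|^2_{L^{2p/(2p-1)}}$. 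The factors $\pi$ and $2\pi$ are handled by dividing the lemma's inequalities through by them. The triangle inequality then gives each of the three cases, with the same probability bound since only one random event is used.

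\textbf{Obstacle.} There is little difficulty beyond bookkeeping. The one point to check is that the hypotheses of Lemma \ref{lem:R_dton}, namely $T$ bounded and Borel measurable, are compatible with those of Lemma \ref{lem:repulsion_nonlocal_to_local}. The support condition $\supp\rho_Y\subseteq B_{R/2}$ follows from boundedness of $T$ with $R=2\|T\|_\infty$, which is why $C$ may depend on $R$. We also need $h\ll1$ so that Lemma \ref{lem:repulsion_nonlocal_to_local} applies.
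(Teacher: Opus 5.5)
Your proposal is correct and matches the paper's argument. The paper obtains this theorem by combining the high-probability estimate of Lemma \ref{lem:R_dton}, divided by the normalization $\pi h$, $2\pi h^2\log\tfrac1h$ or $h^2$, with the deterministic nonlocal-to-local rates of Lemma \ref{lem:repulsion_nonlocal_to_local} via the triangle inequality. Your reading of $\Rc[T]$ on the left-hand sides as $\exp\Rc[T]$ (that is, $\|\rho_Y\|_{L^2}^2$ or $\|\rho_Y\|^2_{\dot H^{-\frac{m-2}{2}}}$) is the right way to make the statement precise, and so is your note that $h\ll 1$ is inherited from the lemma.
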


\begin{remark}\label{rem:SNERepulsion}
In the case of the SNE algorithm, whose discrete energy is given by \eqref{eq:SNEdiscrete}, the non-local repulsion term has the form 
\begin{equation}\label{eq:SNEreph}
\Rc^h_{SNE}[T] = \log\left( \iint_{\R^m\times \R^m} \exp\left( -\tfrac{1}{h^2}|y_i-y_j|^2\right)\rho_Y(y)\rho_Y(y')\dy\dy'\right).
\end{equation}
The exponential kernel localizes in all dimensions $m\geq 1$; indeed
\[\Rc^h_{SNE}[T]  \approx \log\left( \int_{\R^m}\left(\int_{\R^m}\exp\left( -\tfrac{1}{h^2}|y_i-y_j|^2\right)\dy'\right) \rho_Y(y)^2\dy\right) = \log\left(\|\rho_Y\|_{L^2(\R^m)}^2 \right) + C_h,\]
where $C_h = \log(C'h^d)$ for a constant $C'$. Thus, we have $\Rc_{SNE}[T] = \log\left(\|\rho_Y\|_{L^2(\R^m)}^2\right)$ in all embedding dimensions $m\geq 1$. Thus, the discrepancies between the $m\leq 2$ and $m\geq 3$ cases for the repulsion term of t-SNE are driven by the heavy tailed repulsion term. The same observation here would hold if the repulsion was replaced by $1/(1 + |y_i-y_j|^m)$, or anything with faster decay, in embedding dimension $m$. 

Combining this with the observations in Remark \ref{rem:SNEAttraction}, we arrive at the corresponding continuum limit energy 
\[\Ac_{SNE}[T] + \Rc_{SNE}[T] = c_\eta \int_\Omega |DT|^2 \sigma^2 \rho_X \dx +\log\left(\|\rho_Y\|_{L^2(\R^m)}^2 \right)\]
for the SNE algorithm. Choosing $\sigma$ so that $c_\eta \sigma(x)^2 = \rho_X^{-2/d}$, which as discussed in Remark \ref{rem:perplexity} is the asymptotic case of the perplexity graph construction in t-SNE, we have the continuum SNE energy 
\[\Ec_{SNE}[T] = \int_\Omega |DT|^2 \rho_X^{1-2/d} \dx +\log\left(\|\rho_Y\|_{L^2(\R^m)}^2 \right).\]
The existence of a minimizer of the SNE energy $\Ec_{SNE}$ is established in Section \ref{sec:higherdimensionanalysis}. It is interesting to note that for $d=2$, the attraction term is independent of the data distribution $\rho_X$.

We compare this to the continuum limit energy for t-SNE when $m=2$ (the practical setting), which due to Remark \ref{rem:attraction} is given by 
\begin{equation}\label{eq:continuum_tSNE_perplexity}
\Ec[T;\Phi_\infty] = \int_\Omega \dashint_{\partial B_1}\log(|DT(x)z|^2)\,dS(z) \, \rho_X \dx  + \log\left(\|\rho_Y\|_{L^2(\R^m)}^2 \right) + C_2
\end{equation}
where $C_2$ depends on $\sigma, \rho_X$ and $\eta$, and is defined in Remark \ref{rem:attraction}. This is true even without assuming the perplexity graph construction in t-SNE, since $\sigma$ only appears in the constant $C_2$.  Thus, the main difference between the SNE and t-SNE continuum limits is that the attraction term grows quadratically with the Jacobian $DT$ for SNE, and logarithmically for t-SNE. The logarithmic growth for t-SNE allows $T$ to have discontinuities, i.e., to create clusters, which is explored more in Sections \ref{sec:continuum} and \ref{sec:higherdimensionanalysis}. 
\end{remark}

\subsection{Scaling properties of the continuum energy}
\label{sec:scaling_inv}

The continuum limit energy $\Ec[T;\Phi_s]$ defined in \eqref{eq:continuum_tSNE} is a family of energies depending on a parameter $s\in (0,\infty]$ that indicates how the original t-SNE mapping is rescaled. In particular, the t-SNE embedding points $y_i$ are scaled to $(h_n/s) y_i$ as $h_n\to 0$ for $s < \infty$, while the case of $s=\infty$ corresponds to the rescaling $(h_n/s_n)y_i$ where $s_n\to \infty$ as $h_n \to 0$ (the finite $s$ case can be viewed as the constant sequence $s_n=s$). In other words, we take the t-SNE embedding points to be of the form $y_i = (s_n/h_n) T(x_i)$ with $h_n \to 0$ and $s_n \to s$, and have obtained $\Ec[T;\Phi_s]$ as the continuum limit energy.

In this section we study the scaling properties of the continuum limit energies $\Ec[T;\Phi_s]$ as a function of $s$ and dimension $m$, which strongly suggest particular choices of $s$ that are relevant for t-SNE, depending on the dimension $m$. In order to explain our results, we make the following definitions. 
\begin{definition}\label{def:scalestable}
Let $\Fc:C^\infty_c(\R^d;\R^m)\to \R\cup \{\infty\}$ and let $\Dc \subset C^\infty_c(\R^d;\R^m)$. We say that $\Fc$ is \emph{scale invariant} over $\Dc$ if $\Fc[\lambda T] = \Fc[T]$ for all $\lambda > 0$ and $T \in \Dc$. We say that $\Fc$ has \emph{scale instabilities} over $\Dc$ if for every $T\in \Dc$ the function $\lambda \mapsto \Fc[\lambda T]$ is either strictly monotone decreasing or strictly monotone increasing. When $\Fc$ does not have scale instabilities, we say that $\Fc$ is \emph{scale stable} over $\Dc$.
\end{definition}
A functional $\Fc$ with scale instabilities is strictly decreasing under contraction to a point ($T \to 0$) or spreading mass out to infinity ($T\to \infty$), which indicates that a minimizer either cannot exist, or is always trivial. In contrast, a functional that is scale stable cannot be exploited purely via scaling in such a way. Scale stability on its own does not ensure existence of a minimizer (see Section \ref{sec:higherdimensionanalysis}), though it is a necessary condition for existence. It is clear that every scale invariant functional is scale stable, while the converse is certainly not true. 

Now, the discussion in Section \ref{subsec:scaling-motivation} can be cast in this new language. If we do not scale the mapping $T$ in t-SNE at all, then the continuum limit contains only the repulsion term $\Rc[T]$, which clearly has scale instabilities and can be minimized to $-\infty$ energy by spreading out the mass to infinity. It is natural to seek a \emph{scale stable} continuum limit for t-SNE, which requires the scaling of $T$ as $h_n\to 0$ described by the parameter $s$.  It turns out that if we scale too fast or too slow, we can still obtain energies $\Ec[T;\Phi_s]$ that have scale instabilities, depending on the dimension $m$. This is encapsulated in the following result. 

\begin{theorem}\label{thm:scaling_stability}
Let 
\[\Dc = \{T \in C^\infty_c(\R^d;\R^m) \, | \,  |\Ec[T;\Phi_s]| < \infty \text{ for all } s\in (0,\infty]\}.\]
Then the following hold.
\begin{enumerate}[(i)]
\item For $m=1$, $\Ec[T;\Phi_s]$ is scale stable over $\Dc$ for all $0 < s < \infty$, and has scale instabilities over $\Dc$ for $s=\infty$. 
\item For $m\geq 2$, $\Ec[T;\Phi_s]$ has scale instabilities over $\Dc$ for all $0 < s < \infty$, and is scale invariant over $\Dc$ for $s=\infty$. 
\end{enumerate}
\end{theorem}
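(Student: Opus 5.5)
The plan is to compute $\lambda \mapsto \Ec[\lambda T;\Phi_s]$ explicitly and read off its monotonicity. The repulsion is handled first, purely by change of variables. If $T_\#\rho_X\dx = \rho_Y\dy$, then $(\lambda T)_\#\rho_X\dx$ has density $\rho_Y^\lambda(y) = \lambda^{-m}\rho_Y(y/\lambda)$.
\begin{itemize}
\item For $m=1,2$ this gives $\int (\rho_Y^\lambda)^2\dy = \lambda^{-m}\int\rho_Y^2\dy$, so $\Rc[\lambda T] = \Rc[T] - m\log\lambda$.
\item For $m\geq 3$, the representation \eqref{eq:repulsion-conv-form} gives directly $\Rc[\lambda T] = \Rc[T] - 2\log\lambda$.
\end{itemize}
Thus $\Rc[\lambda T] = \Rc[T] - \kappa_m\log\lambda$, with $\kappa_1=1$ and $\kappa_m = 2$ for $m\geq 2$.

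For the attraction, the case $s=\infty$ is immediate from homogeneity of the logarithm: $\log(|\lambda Az|^2) = \log(|Az|^2)+2\log\lambda$, and $\eta$ has unit mass, so $\Ac[\lambda T;\Phi_\infty] = \Ac[T;\Phi_\infty] + 2\log\lambda\int_\Omega\rho_X\dx = \Ac[T;\Phi_\infty]+2\log\lambda$. Hence $\Ec[\lambda T;\Phi_\infty] = \Ec[T;\Phi_\infty] + (2-\kappa_m)\log\lambda$. For $m\geq 2$ this is constant in $\lambda$, which is scale invariance. For $m=1$ it equals $\Ec[T;\Phi_\infty]+\log\lambda$, which is strictly increasing, so there is a scale instability. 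Finiteness on $\Dc$ ensures these identities are between finite quantities.

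For finite $s$, set $f(\lambda) = \Ec[\lambda T;\Phi_s]$. By Lemma \ref{lem:Phi}, or directly, we have $\Phi_s(\lambda A) = \Phi_{s\lambda}(A) + 2\log\lambda$. Writing $g(t)=\int_\Omega \int\eta(|z|)\log(t^{-2}+|\sigma DT z|^2)\dz\,\rho_X\dx$, this gives
\[
f(\lambda) = g(s\lambda) + (2-\kappa_m)\log\lambda + \text{const}.
\]
The function $g$ is strictly increasing in $t$, since $t^{-2}$ decreases. It satisfies $g(t)\to\Ac[T;\Phi_\infty]$ as $t\to\infty$ and behaves like $-2\log t$ as $t\to 0$.

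For $m\geq 2$ we have $\kappa_m=2$, so $f(\lambda)=g(s\lambda)+\text{const}$ is strictly increasing. This gives scale instabilities for every finite $s$. One point needs checking: strictness requires that $g$ be strictly monotone, and this holds for every $T$ because the integrand $\log(t^{-2}+a)$ is strictly increasing in $t$ for each fixed $a \geq 0$.

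For $m=1$ we instead have $f(\lambda) = g(s\lambda)+\log\lambda+\text{const}$. Scale stability means $f$ is not strictly monotone, and I would show that $f$ attains a strict interior minimum. As $\lambda\to 0$, we get $g(s\lambda)+\log\lambda \approx -2\log(s\lambda)+\log\lambda\to+\infty$, because $\log(t^{-2}+a) = -2\log t + \log(1+t^2a)$ and the correction is bounded via Jensen, \eqref{eqn:Phi-s-logarithmic-bound}, using that $DT$ is bounded. As $\lambda\to\infty$, $f(\lambda)\to+\infty$ since $g(s\lambda)\to\Ac[T;\Phi_\infty]$, which is finite on $\Dc$, while $\log\lambda\to\infty$. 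A continuous function tending to $+\infty$ at both ends is not monotone, so $\Ec[\cdot;\Phi_s]$ is scale stable.

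The main obstacle is justifying the limits of $g$. Monotone convergence gives the $t\to\infty$ limit, once one knows that $\Ac[T;\Phi_\infty]>-\infty$, which is the membership condition in $\Dc$. The $t\to 0$ asymptotics follow from the bound $0\le\log(1+t^2a)\le t^2 a$ with $a\le C\|DT\|_\infty^2|z|^2$ and finiteness of the second moment of $\eta$. The rest is bookkeeping.
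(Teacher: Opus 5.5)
Your proposal is correct and follows essentially the paper's route: you derive the scaling identity $\Ec[\lambda T;\Phi_s]=\Ac[T;\Phi_{s\lambda}]+(2-\kappa_m)\log\lambda+\Rc[T]$, which is the paper's Proposition~\ref{prop:scaling_inv}, and read off monotonicity case by case, with your two-sided blow-up argument for $m=1$, $s<\infty$ replacing the paper's small-$\lambda$ derivative computation. One slip to fix: since $t^{-2}$ decreases, $g$ (and $\log(t^{-2}+a)$) is strictly \emph{decreasing} in $t$, consistent with your own asymptotics $g(t)\to+\infty$ as $t\to 0$, so for $m\ge 2$ the map $\lambda\mapsto\Ec[\lambda T;\Phi_s]$ is strictly decreasing rather than increasing, which still gives scale instability.
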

The proof of Theorem \ref{thm:scaling_stability} relies on a scaling proposition that is useful to state on its own.
\begin{proposition}\label{prop:scaling_inv}
For any $s,\lambda>0$ we have 
\begin{equation}\label{eq:scaling_inv}
\Ec[\lambda T;\Phi_s] = \Ec[T;\Phi_{s\lambda}] + (2-m)_+ \log \lambda.
\end{equation}
\end{proposition}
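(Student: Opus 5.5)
The identity follows by computing separately how the attraction and the repulsion transform under $T\mapsto\lambda T$, and then adding the two shifts. Both computations are elementary changes of variables, so the work is mostly bookkeeping. The one point to watch is that the repulsion behaves differently for $m\le 2$ and $m\ge 3$, and that difference is exactly what produces the factor $(2-m)_+$.

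\emph{Attraction.} I will redo the calculation in Lemma \ref{lem:Phi} with $s^{-2}$ in place of $1$. For any $A\in\R^{m\times d}$,
\[\log(s^{-2}+\lambda^2|Az|^2)=\log\big((s\lambda)^{-2}+|Az|^2\big)+2\log\lambda.\]
Integrating against $\eta(|z|)$ and using the unit mass \eqref{eq:massone} gives
\[\Phi_s(\lambda A)=\Phi_{s\lambda}(A)+2\log\lambda.\]
This also covers $s=\infty$, with the convention $s\lambda=\infty$. Now apply it with $A=\sigma(x)DT(x)$, note that $D(\lambda T)=\lambda DT$, and integrate against the probability density $\rho_X$. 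This yields
\[\Ac[\lambda T;\Phi_s]=\Ac[T;\Phi_{s\lambda}]+2\log\lambda.\]

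\emph{Repulsion.} Pushing $\rho_X\dx$ forward by $\lambda T$ gives the density $\rho_Y^\lambda(y)=\lambda^{-m}\rho_Y(y/\lambda)$; this follows from \eqref{eq:change_of_variables} with test function $\phi(\lambda\,\cdot)$.
\begin{itemize}
\item For $m=1,2$, substituting $y=\lambda u$ gives $\int(\rho_Y^\lambda)^2\dy=\lambda^{-m}\int\rho_Y^2\,du$. Hence $\Rc[\lambda T]=\Rc[T]-m\log\lambda$, and the total shift is $(2-m)\log\lambda=(2-m)_+\log\lambda$.
\item For $m\ge 3$, I will use the form \eqref{eq:repulsion-conv-form}. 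There $|\lambda T(x)-\lambda T(x')|^{-2}=\lambda^{-2}|T(x)-T(x')|^{-2}$, so $\Rc[\lambda T]=\Rc[T]-2\log\lambda$. The total shift is then $0=(2-m)_+\log\lambda$.
\end{itemize}

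\emph{Combining.} Adding the attraction and repulsion shifts gives \eqref{eq:scaling_inv}.

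I do not expect any real obstacle. The only care needed is to keep the cases straight: the repulsion exponent is $m$ when $m\le 2$ but saturates at $2$ when $m\ge 3$, and this is precisely the source of $(2-m)_+$. Finiteness of each term is preserved under scaling, so both sides are simultaneously finite or infinite.
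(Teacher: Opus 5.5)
Your proof is correct and follows the paper's route. The attraction shifts by $2\log\lambda$ via $\Phi_s(\lambda A)=\Phi_{s\lambda}(A)+2\log\lambda$ (the analogue of Lemma~\ref{lem:Phi}), and the repulsion shifts by $-m\log\lambda$ for $m=1,2$ and $-2\log\lambda$ for $m\ge 3$; your explicit pushforward density $\lambda^{-m}\rho_Y(y/\lambda)$ just fills in the step the paper calls clear.
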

\begin{proof}
We clearly have 
\[ \Rc[\lambda T] = 
\begin{cases}
\Rc[T] - \log \lambda,& \text{if } m=1\\
\Rc[T] - 2 \log \lambda,& \text{if } m\geq 2.
\end{cases}\]
By Lemma \ref{lem:Phi} we have 
\[\Ac[\lambda T;\Phi_s] = \Ac[T;\Phi_{s\lambda}] + 2\log \lambda.\]
Combining these two identities establishes \eqref{eq:scaling_inv}.
\end{proof}
We now give the proof of Theorem \ref{thm:scaling_stability}.
\begin{proof}[Proof of Theorem \ref{thm:scaling_stability}]
When $m=1$ and $s < \infty$, Proposition \ref{prop:scaling_inv} yields 
\begin{equation}\label{eq:m1}
\Ec[\lambda T;\Phi_s] = \Ec[T;\Phi_{s\lambda}] + \log \lambda.
\end{equation}
Since $T\in \Dc$, we have $\lim_{\lambda \to \infty}\Ec[T;\Phi_{s\lambda}] = \Ec[T;\Phi_\infty]$. As $\lim_{\lambda \to \infty}\log \lambda = +\infty$, we can infer that $\lambda \mapsto \Ec[\lambda T;\Phi_s]]$ is not monotone decreasing for sufficiently large $\lambda$. We can also differentiate to obtain 
\[\frac{d}{d\lambda}\Ec[\lambda T;\Phi_s]  = \int_\Omega \int_{\R^d} \eta(|z|)\left( \frac{1}{\lambda} - \frac{2}{\lambda + |\nabla T(x)\cdot z|^2 \lambda^3 z^2}\right)\dz \rho_X(x) \dx.\]
Thus, for sufficiently small $\lambda$, $\lambda \mapsto \Ec[\lambda T;\Phi_s]$ is monotone decreasing, and hence $\Ec[\lambda T;\Phi_s]$ is scale stable over $\Dc$, though it is not scale invariant.  When $m=1$ and $s=\infty$, we pass to the limit as $s\to \infty$ in \eqref{eq:m1} to obtain 
\begin{equation}\label{eq:sinf}
\Ec[\lambda T;\Phi_\infty] = \Ec[T;\Phi_\infty] + \log \lambda,
\end{equation}
which clearly shows that $\Ec[T;\Phi_\infty]$ has scale instabilities over $\Dc$, and can be minimized to $-\infty$ energy by sending $\lambda\to 0$, i.e., contracting mass to a point.

When $m\geq 2$ and $s < \infty$, Proposition \ref{prop:scaling_inv} yields 
\[\Ec[\lambda T;\Phi_s] = \Ec[T;\Phi_{s\lambda}].\]
Since $\Phi_s$ is strictly monotone decreasing with $s$ this yields that $\Ec[T;\Phi_s]$ has scale instabilities over $\Dc$ when $s<\infty$. Now, when $s=\infty$, we pass to the limit above to obtain 
\begin{equation}\label{eq:scaling_inv_inf}
\Ec[\lambda T;\Phi_\infty] = \Ec[T;\Phi_\infty].
\end{equation}
Hence $\Ec[T;\Phi_\infty]$ is scale invariant over $\Dc$ when $m\geq 2$. 
\end{proof}
\begin{remark}\label{rem:choices}
{\bf Scale stable choices for $s$:} Theorem \ref{thm:scaling_stability} suggests that the relevant choice for $s$ in dimensions $m\geq 2$ is $s=\infty$, since this is the only choice that is scale stable. Hence, we will take $s=\infty$ when $m\geq 2$ from now on. Furthermore, this choice actually renders the energy \emph{scale invariant}. This is a convenient property for visualizations, like t-SNE, where the scale of the visualization is often neglected in practical contexts. Furthermore, one expects that a scale invariant energy may facilitate analytical results, such as existence of minimizers. However, this is a much more subtle question, and we defer further analysis to Sections \ref{sec:continuum} and \ref{sec:higherdimensionanalysis}. 

In dimension $m=1$, the relevant values of $s$ consist of any finite value $0 < s < \infty$, since these are the choices that exhibit scale stability. In fact, due to Proposition \ref{prop:m1} below, whose proof immediately follows from Proposition \ref{prop:scaling_inv}, all finite choices of $s$ are equivalent when $m=1$. Thus, in the rest of the paper we make the choice $s=1$ when $m=1$. 
\end{remark}
\begin{proposition}\label{prop:m1}
Let $m=1$. If $T$ minimizes $\Ec[T;\Phi_1]$ then $s^{-1}T$ is a minimizer of $\Ec[T;\Phi_s]$. Conversely, if $T$ minimizes $\Ec[T;\Phi_s]$ then $sT$ is a minimizer of $\Ec[T;\Phi_1]$.  
\end{proposition}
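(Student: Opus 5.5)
The plan is to read Proposition \ref{prop:scaling_inv} in the case $m=1$, where $(2-m)_+=1$, as saying that the dilation $T\mapsto \lambda T$ maps the energy landscape of $\Ec[\cdot;\Phi_s]$ onto that of $\Ec[\cdot;\Phi_{s\lambda}]$ up to an additive constant. Concretely, for every admissible $T$ (one satisfying Assumption \ref{ass:main}(iii), a class that is invariant under dilation since $(\lambda T)_\#\rho_X\dx$ has density $\lambda^{-1}\rho_Y(\cdot/\lambda)$, still in $L^2$) and every $s,\lambda>0$ we have
\[
\Ec[\lambda T;\Phi_s] = \Ec[T;\Phi_{s\lambda}] + \log\lambda .
\]
The constant $\log\lambda$ does not depend on $T$, so it does not affect which maps are minimizers.

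For the first claim, suppose $T$ minimizes $\Ec[\cdot;\Phi_1]$. Let $S$ be any admissible map. Apply the identity with $s$ replaced by $s$ and $\lambda=s^{-1}$, to both $T$ and $S$ (really to $sT'$ for $T'=s^{-1}T$, $s^{-1}S$; cleaner: apply it to the maps $T,S$ with the pair $(s,\lambda)=(s,s^{-1})$):
\[
\Ec[s^{-1}T;\Phi_s] = \Ec[T;\Phi_1] - \log s, \qquad \Ec[s^{-1}S;\Phi_s] = \Ec[S;\Phi_1] - \log s .
\]
Since $S\mapsto s^{-1}S$ is a bijection of the admissible class, every competitor for $\Ec[\cdot;\Phi_s]$ has the form $s^{-1}S$. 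Minimality of $T$ then gives $\Ec[s^{-1}T;\Phi_s]\le \Ec[s^{-1}S;\Phi_s]$ for all $S$, so $s^{-1}T$ is a minimizer.

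The converse works the same way. Apply the identity with the pair $(1,s)$ to get $\Ec[sT;\Phi_1]=\Ec[T;\Phi_s]+\log s$, and likewise for every competitor $sS$. Minimality of $T$ for $\Ec[\cdot;\Phi_s]$ then transfers to minimality of $sT$ for $\Ec[\cdot;\Phi_1]$.

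The only point needing care is that both sides are finite, or at least well defined in $\R\cup\{+\infty\}$, so that subtracting $\log s$ is legitimate. This holds because the identity is exact, term by term, from Lemma \ref{lem:Phi} (attraction) and the scaling $\Rc[\lambda T]=\Rc[T]-\log\lambda$ for $m=1$ (repulsion). I expect no real obstacle here.
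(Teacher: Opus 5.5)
Your proposal is correct and follows the same route as the paper, which notes that the proposition follows immediately from Proposition \ref{prop:scaling_inv} with $(2-m)_+=1$: the dilation $T\mapsto\lambda T$ maps competitors bijectively and shifts the energy by the $T$-independent constant $\log\lambda$. Your choices $(s,\lambda)=(s,s^{-1})$ for the first claim and $(1,s)$ for the converse are exactly what is needed, and your checks that admissibility is preserved under dilation and that the values are finite are sound.
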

\begin{remark}\label{rem:scalinginfinity}
When $m\geq 2$, the selection of $s=\infty$ indicates that we must scale the t-SNE embedding as $y_i = (s_n/h_n) T(x_i)$ where $s_n\to \infty$ as $n\to \infty$. It may seem odd that we have not specified how fast $s_n$ should tend to $\infty$. This can be explained through the scale invariance of $\Ec[T;\Phi_\infty]$ and the fact that we have only established energy consistency, and not convergence of minimizers (when they exist in the continuum setting). To illuminate this, consider the fact that the $y_i$ embedding points depend on $n$, so write $y_{i,n}$ for $i=1,\dots,n$, and then allow the map $T$ to depend on $n$, so the scaling relationship is really $y_{i,n} = (s_n/h_n)T_n(x_i)$. To prove convergence of minimizers, we would aim to prove that $T_n \to T$ as $n\to \infty$ in some sense, where $T$ is interpreted as a minimizer of $\Ec[T;\Phi_\infty]$.\footnote{Assume for the sake of argument that such a minimizer exists; see Sections \ref{sec:continuum} and \ref{sec:higherdimensionanalysis} for more details.} If we scale $s_n$ to $\infty$ too quickly, then we would clearly obtain $T_n\to 0$, which due to scale invariance of the energy is also a minimizer. The key question would then be: how fast should $s_n$ scale to $\infty$ so that $T_n$ converges to a non-trivial minimizer $T^*\neq 0$ of $E[T;\Phi_\infty]$? Presumably this scaling is a function of $h_n$, so $s_n = g(h_n)$ where $g(h) \to \infty$ as $h\to 0$. But even in this case, we can scale faster or slower by a constant, i.e., $s_n = Cg(h_n)$. This again ties into the scale invariance of the energy; such a scaling would converge to a dilation $C^{-1}T^*$ of the minimizer $T^*$, which is also optimal. We leave such investigations for future work. 
\end{remark}

\section{Existence and uniqueness of minimizers in one dimension}
\label{sec:continuum}

Having identified a family of possible limiting energies, it is natural to now study the necessary conditions associated with minimizing those energies. This is a first step in order to develop properties of minimizers and the associated gradient descents. We start by developing general formulas for the first variation, and then we provide an in-depth study of the problem when $d=m=1$. In this case, we are able to solve the Euler-Lagrange equations and prove existence of a Lipschitz global minimizer. 

We recall our set of standing assumptions, Assumption \ref{ass:main}, which continue to hold.

\subsection{First variation in the general case}
\label{sec:firstvar}

We derive the first variation of the limiting continuum energy at a function $T$ in the direction $w$ by computing
\[
\delta \Ec[T;\Phi_s](w) := \ddt\Ec[T + t w;\Phi_s] = \ddt\Ac[T + t w;\Phi_s]  + \ddt\Rc[T + t w].
\]
In all cases (i.e., for any $m\geq 1$), we directly obtain
\[
\ddt\Ac[T + t w;\Phi_s] = \int_\Omega D\Phi_s(\sigma DT) \sigma Dw \rho_X \dx = -\int_\Omega w \cdot \text{div}(\sigma \rho_X D\Phi_s(\sigma DT) ) \dx.
\]
Here, we assume that $w\in C^1(\R^d;\R^m)$ and takes zero boundary values.

The form of the first variation of the repulsion naturally takes a different form depending upon the dimension. When $m\geq3$, the most convenient form comes from the convolution form \eqref{eq:repulsion-conv-form}, which then gives
\begin{align*}
 \ddt\Rc[T + t w]  &= -\frac{\displaystyle 2\int_\Omega \int_\Omega \frac{\rho_X(x) \rho_X(x')}{|T(x) - T(x')|^4} (T(x) -T(x'))\cdot (w(x) - w(x')) \dx \dx'}{\displaystyle \int_\Omega \int_\Omega \frac{\rho_X(x) \rho_X(x')}{|T(x) - T(x')|^2} \dx \dx'} \\
 &= -4\exp(-\Rc[T])\int_\Omega \int_\Omega \frac{\rho_X(x) \rho_X(x')}{|T(x) - T(x')|^4} (T(x) -T(x'))\cdot w(x) \dx \dx'.
\end{align*}

In the case where $m\leq 2$, the repulsion takes a different form. The most convenient form for computing a first variation is given by \eqref{eq:repulsion-parseval}. Letting $\rho_{Y_t}\dy = (T + tw)_\# \rho_X\dx$, we can write the variation as 
\begin{align*}
\ddt \Rc[T + tw] &= \exp(-\Rc[T])\int_{\R^m}\ddt|\hat{\rho}_{Y_t}(\xi)|^2 \d\xi\\
&=\exp(-\Rc[T])\int_{\R^m}2\,\Re \left(\hat \rho_Y(\xi)\ddt \overline{\hat\rho_{Y_t}(\xi)}\right) \d\xi\\
&=2\exp(-\Rc[T])\Re\left[\int_{\R^m}\hat\rho_Y(\xi)\, 2\pi i \,\xi \cdot \int_{\R^m}e^{2\pi i \xi \cdot T(x)} w(x) \rho_X(x) \dx \d\xi\right]\\
&=2\exp(-\Rc[T])\Re\left[\int_{\R^d} w(x) \rho_X(x) \cdot \int_{\R^m}(2\pi i \xi)e^{2\pi i \xi \cdot T(x)}\hat\rho_Y(\xi)\d\xi\dx \right]\\
&=2\exp(-\Rc[T])\int_{\R^d}  \rho_X(x) \nabla_y \rho_Y(T(x)) \cdot w(x) \dx,
\end{align*}
where we used identities for the gradient of a function through its Fourier transform. 
Note that $\rho_Y$ implicitly depends upon $T$, but this succinct form offers a clean interpretation for the first variation.

If a function minimizes the limiting energy then these first variations are zero for any choice of $w$, and after applying the fundamental lemma of the calculus of variations we obtain the necessary conditions
\begin{equation*}
\div(\sigma \rho_X D\Phi_s(\sigma DT))(x) = \exp(-\Rc[T])\begin{cases}
-\displaystyle 4\int_\Omega \frac{\rho_X(x) \rho_X(x')}{|T(x) - T(x')|^4} (T(x) -T(x'))\dx' &\text{if } m \geq 3, \\
\displaystyle 2\rho_X(x) \nabla_y \rho_Y(T(x)) &\text{if } m = 1,2.
\end{cases}
\end{equation*}
These equations take the form of a nonlinear, integro-differential equation, and in the $m=1,2$ case there is further implicit dependence on $T$ encoded in $\rho_Y$. In the classical t-SNE case we expect $\Phi_s$ to grow logarithmically, and so the differential operator fails to have any type of uniform ellipticity. It is not clear whether in general one can construct candidate minimizers analytically using this necessary condition.

Instead, to build intuition, we will restrict our attention to the setting where $m=1, d=1$, in which case it turns out that the necessary conditions are (almost) uniquely solvable. This case also permits direct numerical validation, and reveals several delicate points in the analysis which we then investigate in more general dimension in Section \ref{sec:higherdimensionanalysis}.

\subsection{1D problem: Existence and uniqueness of a Lipschitz minimizer}\label{sec:1D}

Throughout this subsection, for simplicity we will assume that $\Omega = (0,1)$ and, courtesy of Proposition \ref{prop:m1}, we will assume that $s = 1$. To begin, we start with a rearrangement lemma.
\begin{lemma}\label{lem:rear}
    Let $T:(0,1) \to \R$ be Lipschitz. Let $T^*(x) = \int_0^x |T'(s)| ds$. Then $\Ac[T;\Phi_1] = \Ac[T^*;\Phi_1]$ and $\Rc[T^*] \leq \Rc[T]$, with equality only if $T = \pm T^* + C$.
\end{lemma}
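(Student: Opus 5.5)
The attraction identity is immediate. For $d=m=1$, $DT=T'$ is a scalar, and $\Phi_1(\sigma T')$ depends on $T'$ only through $|T'|^2$, since $\Phi_1(a)=\int\eta(|z|)\log(1+a^2z^2)\dz$. Because $(T^*)'=|T'|$ almost everywhere, we get $\Phi_1(\sigma (T^*)')=\Phi_1(\sigma T')$ pointwise a.e. Integrating against $\rho_X$ gives $\Ac[T;\Phi_1]=\Ac[T^*;\Phi_1]$.

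For the repulsion, $\log$ is increasing, so it suffices to show $\|\rho_{Y^*}\|_{L^2}^2\le\|\rho_Y\|_{L^2}^2$. Here $\rho_{Y^*}$ is the pushforward density of $T^*$. We compute both densities via the coarea/area formula \eqref{eq:rhoy_coarea}, which for $d=1$ becomes
\[\rho_Y(y)=\sum_{x\in T^{-1}(y)}\frac{\rho_X(x)}{|T'(x)|}\]
(points with $T'=0$ carry no mass when $\rho_Y$ exists). The map $T^*$ is nondecreasing. Let $Z=\{T'=0\}$; it has measure zero, since otherwise $\mu$ would have an atom or fail to be absolutely continuous. Then $T^*$ is strictly increasing off the flat pieces, and each value $u$ is attained at a single $x$ with $\rho_{Y^*}(u)=\rho_X(x)/|T'(x)|$.

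The key step is to compare the two densities through the arclength variable. Set $L=T^*(1)$ and define $S:[0,L]\to\R$ by $S(T^*(x))=T(x)$. This is well defined and $1$-Lipschitz with $|S'|=1$ a.e., since $T$ is parametrized by its own arclength. Then $T=S\circ T^*$, hence $\mu=S_\#\mu^*$, and
\[\rho_Y(y)=\sum_{u\in S^{-1}(y)}\rho_{Y^*}(u).\]
It follows that
\[\int\rho_Y^2\dy=\int\Big(\sum_{u\in S^{-1}(y)}\rho_{Y^*}(u)\Big)^2\dy\ \ge\ \int\sum_{u\in S^{-1}(y)}\rho_{Y^*}(u)^2\dy=\int_0^L\rho_{Y^*}(u)^2\,|S'(u)|\,du=\int\rho_{Y^*}^2\d u.\]
The inequality uses nonnegativity, $(\sum a_k)^2\ge\sum a_k^2$. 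The next equality is the area formula for $S$, and the last uses $|S'|=1$.

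Equality requires that for a.e.\ $y$ at most one preimage $u$ of $y$ under $S$ has $\rho_{Y^*}(u)>0$. Since $\rho_X\ge\rho_{min}>0$ and $Z$ is null, $\rho_{Y^*}>0$ a.e.\ on $[0,L]$. So $S$ must be injective off a null set. A continuous function with $|S'|=1$ a.e.\ that is essentially injective must be monotone. This gives $S(u)=\pm u+C$ and therefore $T=\pm T^*+C$.

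I expect the measure-theoretic justification to be the main obstacle. This includes the area formula for Lipschitz $S$ with possibly infinitely many preimages, handling the critical set of $T$, and the step from essential injectivity to monotonicity. I would handle the last point by contradiction: if $S$ is not monotone, it has a strict local extremum in the interior of some interval. Continuity then gives two disjoint intervals mapped onto a common interval of positive length, and there the cross term $2\rho_{Y^*}(u_1)\rho_{Y^*}(u_2)$ would be positive on a set of positive measure, contradicting equality.
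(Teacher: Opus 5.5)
Your proof is correct and is essentially the paper's argument. The paper uses the coarea formula to get the pointwise bound $\rho_{Y^*}(T^*(x))\le\rho_Y(T(x))$ and then integrates against $\rho_X$. After your factorization $T=S\circ T^*$, that is the same inequality as your $\bigl(\sum_k a_k\bigr)^2\ge\sum_k a_k^2$ over preimages. Your equality-case argument (essential injectivity of $S$, then an intermediate value argument forcing monotonicity, so $S(u)=\pm u+C$) fills in what the paper only cites. There you do not need a \emph{strict} local extremum: it suffices to have $u_1<u_2<u_3$ with $S(u_2)$ strictly above or strictly below both $S(u_1)$ and $S(u_3)$.
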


\begin{proof}
    As $|(T^*)'(x)| = |T'(x)|$, evidently the attraction energies are equal. Using the coarea formula, it is also evident that $\rho_{Y*}(T^*(x)) \leq \rho_Y(T(x))$, which proves that the repulsion does not increase under rearrangement. The strict inequality can also be argued using the coarea formula (see, e.g., \cite{evans2025measure}).
\end{proof}

We remark that this type of ``global'' rearrangement is a major missing ingredient in studying this problem in other combinations of dimension, and classical options such as the Riesz rearrangement \cite{burchard2009short} do not necessarily decrease the repulsion when $d,m \neq 1$.

Lemma \ref{lem:rear} allows us to reduce the class of admissible $T$ to those which are monotone increasing, and in this case we can rewrite the repulsion as 
\[
\exp(\Rc[T]) = \int_{-\infty}^\infty \rho_Y(y)^2\dy = \int_{-\infty}^\infty T'(T^{-1}(y))^{-2} \rho_X(T^{-1}(y))^2 \dy = \int_0^1 T'(x)^{-1} \rho_X(x)^2 \dx,
\]
where we have used the coarea formula, the fact that strict monotonicity implies that $T^{-1}(y)$ is at most a singleton, and the change of variables formula (and we have assumed, for the moment, that $T'(x)>0$, or at least that $T'(x)^{-1}$ is integrable). Thus, in the one dimensional case we can reduce the problem to that of minimizing 
\begin{equation}\label{eq:Efunc}
\Ec[T;\Phi_1] = \int_0^1 \Phi_1(\sigma(x)T'(x))\rho_X(x)\dx + \log\left(\int_0^1 T'(x)^{-1}\rho_X(x)^2\dx\right),
\end{equation}
over monotone increasing and Lipschitz $T$.  

In this case, both the attraction and repulsion can be written solely in terms of $|T'|=T'$, and so by setting $u = |T'|=T'$ we may instead minimize the functional
\begin{equation}\label{eq:Ffunc}
\Fc[u] = \int_0^1 \Phi_1(\sigma(x)u(x))\rho_X(x)\dx + \log\left(\int_0^1 u(x)^{-1}\rho_X(x)^2\dx\right),
\end{equation}
over nonnegative functions $u\in L^\infty([0,1])$. The functional $\Fc$ is clearly well-defined when $u>0$. If $u$ vanishes on any open interval we define $\Fc[u]=\infty$. 

We will show below, in Theorem \ref{thm:globalmin}, that $\Fc$ admits a unique minimizer $u^*$ in this class, and $u^* > 0$. We can then reconstruct a minimizer of $\Ec$ in the form
\[T^*(x) = C + \int_0^x u^*(t) \d t,\]
for an arbitrary constant $C$.  Since $u^*\in L^\infty(0,1)$ is positive, $T^*$ is Lipschitz continuous and strictly increasing, and is the unique minimizer among Lipschitz and nondecreasing functions, up to the constant $C$. 

The proof that $\Fc[u]$ admits a unique global minimizer is based on solving the Euler-Lagrange equations and exploiting a type of monotonicity. Since we have changed variables to $u=T'$, the first variation can be written in a different form than in Section \ref{sec:firstvar}. In this case, the first variation in the direction $w$ is given by
\begin{equation}\label{eqn:F-first-var}
\ddt \Fc[u + tw] = \int_0^1 \left(\Phi_1'(\sigma(x)u(x)) \sigma(x) \rho_X(x)  - \Bc[u] \frac{\rho_X(x)^2}{u^2(x)}\right) w(x) \dx,
\end{equation}
where
\begin{equation}\label{eq:Bcdef}
\Bc[u] := \left( \int_0^1 u(x)^{-1} \rho_X(x)^2 \dx\right)^{-1}. 
\end{equation}
This then naturally gives the necessary conditions
\begin{equation}\label{eq:EL1}
\Phi_1'(\sigma(x) u(x))\sigma(x)u(x)^2 = \Bc[u] \rho_X(x).
\end{equation}
In order to \emph{factor out} the dependence on $\sigma$ inside $\Phi_1'$, it is convenient to make the change of variables $v = \sigma u$ and define $\h(v) = v^2\Phi_1'(v)$, which yields the transformed Euler-Lagrange equation
\begin{equation}\label{eq:EL2}
\h(v(x)) = \Bc[u] \sigma(x)\rho_X(x) \ \ \text{for all} \ \ x \in (0,1).
\end{equation}

Since we are studying classical t-SNE, we recall that $\Phi_1(v) = \int_\R \eta(z)\log(1 + v^2z^2)\dz$, and so we have $\Phi_1'(v) = \int_\R \eta(z) \frac{2vz^2}{1 + v^2z^2} \dz$. Therefore, we obtain the formula
\begin{equation} \label{eq:h-def}
\h(v) = v^2\Phi_1'(v) = \int_\R \eta(z) \frac{2v^3 z^2}{1 + v^2 z^2} \dz.
\end{equation}

In order to construct a solution to the Euler-Lagrange equation \eqref{eq:EL1}, or equivalently \eqref{eq:EL2}, we treat $\Bc[u] = b\geq 0$ as a parameter. A first step is to describe the solution map from $b \mapsto v_b(x)$ which solves $\h(v_b(x)) = b \sigma(x) \rho_X(x)$, which we do in the following lemma.

\begin{lemma}\label{lem:h-prop}
Suppose that $\eta$ is a non-negative, even, integrable function which is non-increasing on $[0,\infty)$, and is not trivial. Then for $h$ defined by \eqref{eq:h-def} the following hold:
\begin{enumerate}[(i)]
    \item $\h(0) =\h'(0)=0$ and $\h'(v) > 0$ for all $v>0$.
    \item The equation $\h(v) - \rho_X(x)\sigma(x) b = 0$ has a unique solution for all $b>0$, which we denote by $v_b(x)$. We also denote $u_b(x) = \frac{v_b(x)}{\sigma(x)}$. 
    \item The functions $b\mapsto v_b(x)$ and $b \mapsto u_b(x)$ are strictly increasing.
    \item The limits $\displaystyle\lim_{v\to \infty} \h'(v) = 2$, $\lim\limits_{v \to \infty} \frac{\h(v)}{v} = 2$ and $\lim\limits_{b \to \infty} \frac{v_b(x)}{b} = \frac{1}{2}\sigma(x)\rho_X(x)$ hold.
    \item $\h''(0)=0$, $\h''(v) >0$ for all $v > 0$, and $\lim_{v\to \infty}\h''(v)=0$.
    \item The function $b \mapsto \frac{b}{v_b(x)}$ is strictly increasing and $\lim_{b\to 0^+}\frac{b}{v_b(x)}=0$.
\end{enumerate}
\end{lemma}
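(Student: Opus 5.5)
Throughout, write $\h(v)=\int_\R \eta(z)\, g(v^2z^2)\,\dz$ where $g(a)=\frac{2a\,v}{1+a}$. It is more convenient to work with the substitution $a=v^2z^2$, under which every derivative of $\h$ becomes an integral of $\eta$ against an explicit rational function of $a$. The plan is to establish the analytic facts about $\h$ first, namely (i), (iv) and (v). Then (ii), (iii) and (vi) follow by elementary monotonicity arguments. Differentiation under the integral sign will be justified by dominated convergence, with $\eta$ integrable and the rational factors bounded.

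\textbf{Step 1: first derivative and (i).} Differentiating gives
\[
\h'(v)=\int_\R \eta(z)\,\frac{2(3a+a^2)}{(1+a)^2}\,\dz,\qquad a=v^2z^2 .
\]
The integrand is bounded by $6\eta$, vanishes at $v=0$, and is strictly positive for $v>0$ wherever $\eta>0$. Since $\eta$ is nontrivial, this gives $\h(0)=\h'(0)=0$ and $\h'(v)>0$ for $v>0$. As $v\to\infty$ the factor $2(3a+a^2)/(1+a)^2$ tends to $2$ pointwise for $z\neq0$, so dominated convergence gives $\h'(v)\to 2$. Similarly, $\h(v)/v=\int \eta\, 2a/(1+a)\,\dz\to 2$ by monotone convergence. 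This proves the first two limits in (iv).

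\textbf{Step 2: second derivative, (v), the main obstacle.} Differentiating once more gives
\[
\h''(v)=\int_\R \eta(z)\, \frac{4vz^2(3-v^2z^2)}{(1+v^2z^2)^3}\,\dz .
\]
The integrand changes sign, so positivity is not automatic; this is where monotonicity of $\eta$ must enter. Substitute $t=vz$ and use evenness to get
\[
\h''(v)=\frac{8}{v^2}\int_0^\infty \eta(t/v)\,F'(t)\,\dt,\qquad F(t)=\frac{t^3}{(1+t^2)^2}.
\]
One checks directly that $F'(t)=t^2(3-t^2)/(1+t^2)^3$. Since $F(0)=0=F(\infty)$ and $F\ge 0$, integration by parts in the Stieltjes sense gives
\[
\h''(v)=-\frac{8}{v^2}\int_0^\infty F(t)\,d[\eta(t/v)]\ \ge 0 .
\]
The inequality is strict because a nontrivial nonincreasing integrable $\eta$ must genuinely decrease somewhere on $(0,\infty)$, and $F>0$ there.

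For the limits in (v): as $v\to\infty$, bound $\eta(t/v)\le\eta(0)$, so $|\h''(v)|\le C\eta(0)/v^2\to0$. If $\eta(0)$ is not assumed finite, one instead applies dominated convergence to the original $z$-form after factoring out $1/v$. The claim $\h''(0)=0$ follows by evaluating the $z$-form at $v=0$, or as a limit, using finiteness of the second moment of $\eta$.

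\textbf{Step 3: the remaining items.} By Step 1, $\h$ is a continuous strictly increasing bijection of $[0,\infty)$ onto itself, since $\h(v)\sim 2v$. Hence $\h(v)=b\sigma\rho_X$ has a unique solution $v_b(x)$, which proves (ii). Because $\h^{-1}$ is strictly increasing, $v_b$ and $u_b=v_b/\sigma$ are strictly increasing in $b$, which proves (iii). For the last limit in (iv), write
\[
\frac{b}{v_b}=\frac{\h(v_b)}{\sigma\rho_X\, v_b}.
\]
Since $v_b\to\infty$ as $b\to\infty$ and $\h(v)/v\to2$, we get $v_b/b\to \frac12\sigma\rho_X$. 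For (vi), strict convexity of $\h$ together with $\h(0)=0$ makes $v\mapsto \h(v)/v$ strictly increasing. Composing with the increasing map $b\mapsto v_b$ shows that $b\mapsto b/v_b$ is strictly increasing. Finally, $\h(v)/v\to\h'(0)=0$ as $v\to0^+$, and $v_b\to 0$ as $b\to0^+$, which gives the limit in (vi).
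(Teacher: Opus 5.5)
Your proof is correct and takes essentially the same route as the paper. It uses the same formula for $\Theta'$, obtains positivity of $\Theta''$ from a Lebesgue--Stieltjes integration by parts against $d\eta$, and proves (vi) with the same secant-slope convexity argument; your substitution $t=vz$ with $F(t)=t^3/(1+t^2)^2$ is just a change of variables in the paper's $\int z^3(1+v^2z^2)^{-2}\,d\eta$. You also correctly flag that $\Theta''(0)=0$ needs a moment bound on $\eta$ that is not among the lemma's stated hypotheses; this bound is available from Assumption~\ref{ass:main}, and the paper's proof uses it only implicitly.
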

\begin{proof}
It is direct to check that $\h(0) = 0$. We compute
    \begin{equation}\label{eq:h-prime}
    \h'(v) = \frac{d}{dv} \int_\R \eta(z) \frac{2v^3 z^2}{1+v^2 z^2} \,dz = \int_\R \eta(z)\frac{2v^4z^4 + 6v^2z^2}{(1+v^2z^2)^2} \dz > 0,
    \end{equation}
where we have used the non-negativity of $\eta$ (and that it is not identically zero). The fact that $\h(v) = b\sigma(x) \rho_X(x)$ has at most one solution for $b>0$ follows immediately from the positivity of the derivative. The fact that there is always a solution follows from the fact that $\h'(v) \to 2$ as $v \to \infty$ (since $\int \eta(z)dz = 1$). The other equalities in the fourth assertion in the statement also follow the same line of reasoning, and the fourth statement is immediate.

Next, using that the integrand in \eqref{eq:h-prime} is an even function of $z$, a direct computation yields
\[\h''(v) = 2 \int_0^\infty \eta(z) \frac{6vz^2 - 2v^3z^4}{(1+v^2z^2)^3}\d z=4v\int_0^\infty \eta(z) \frac{d}{dz}\frac{z^3}{(1 + v^2 z^2)^2}\d z=-4v\int_{[0,\infty)}\frac{z^3}{(1 + v^2 z^2)^2} \, \d\eta,\]
where the final integral on the right is the Lebesgue-Stieltjes integral with respect to $\eta$. The right hand side is positive since $\eta(z)$ is decreasing for $z>0$ and not identically zero, hence $\h''(v)>0$ for $v>0$ and $\h''(0)=0$. It is also clear from this expression that $\lim_{v\to \infty}\h''(v)=0$. 
    
Finally, since $\h(v_b(x)) = \rho_X(x)\sigma(x)b$, we can interpret $\frac{\rho_X(x)\sigma(x)b}{v_b(x)}$ as the slope of the secant line connecting $(0,0)$ and $(v_b(x),\h(v_b(x)))$. As $\h(v)$ is a strictly convex function for $v>0$ and $\h(0)=0$, the slope of the secant lines is strictly increasing with $v$. This combined with (i) and (iii) completes the proof.
\end{proof}
\begin{remark}\label{rem:Phiexample}
The prototypical example of $\Phi_1$ is the function $\Phi(v) = \log(1+v^2)$, and in some sense we expect $\Phi_1$ to behave similarly to $\Phi$. Indeed, the reader can check that all the properties in Lemma \ref{lem:h-prop} hold for $\Phi(v)$ as well, except that $\Phi''(v)$ is negative when $v^2 > 3$. Nevertheless, the sixth property remains valid, and so all of the existence/uniqueness theory that follows holds for $\Phi$ in place of $\Phi_1$. In some sense, the averaging against $\eta$ in the definition of $\Phi_1$ provides some additional regularization that renders $\Phi_1$ convex, while $\Phi$ is not. We also mention that some works have considered variational problems of the form $\int \Phi(u')\dx$, where $t^p\Phi(t)$ satisfies monotonicity properties similar to Lemma \ref{lem:h-prop} for various values of $p$ (see, e.g., \cite{marino2024lipschitz} and references therein). However, our methods are distinct from such works, since they do not consider the interaction with the nonlocal repulsion term, which is a key ingredient in the existence/uniqueness results (see Remark \ref{rem:density_needed} below). 
\end{remark}

\begin{figure}[!t]
\centering
\includegraphics[width=0.5\textwidth]{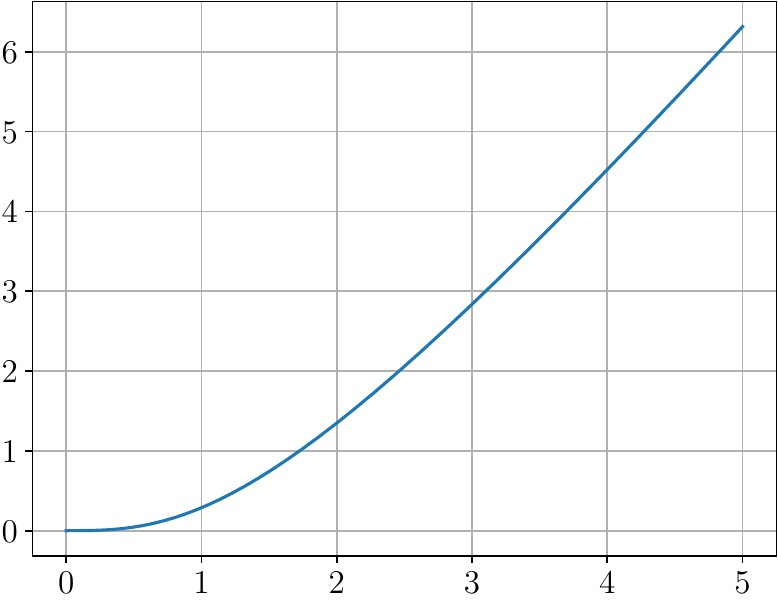}
\caption{The function $\h(v)$ for $\eta(z) = \frac{3}{4}(1 - z^2)_+$.}
\label{fig:hu}
\end{figure}

We remark that in some cases we can solve for $\h(v)$ explicitly. For example, when $\eta(z) = \frac{3}{4}(1 - z^2)_+$ we have
\begin{equation}\label{eq:exacth}
\h(v) = 3\int_0^1 \frac{v^3z^2(1-z^2)}{1+v^2z^2} \, dz=3\left(v + \frac{1}{v}\right)\left( 1 - \frac{1}{v}\arctan(v)\right) - v.
\end{equation}
This observation will be important in the numerical results presented in Section \ref{sec:num}. We show a plot of this $\h$ in Figure \ref{fig:hu}.

We make a few observations about the critical point equation
\begin{itemize}
    \item the solution $v_b(x)$ inherits any derivatives from $\rho_X(x)\sigma(x)$, in the sense that if the latter is $k$ times differentiable, then so is $v_b$. While there are many delicate points about the regularity of solutions, this result is a positive start in that direction.
    \item We notice that the solution to the critical point equation is completely determined by a scalar parameter $\Bc[u]$. This is analogous to classical multidimensional scaling \cite{calder2025book}, which starts as an infinite dimensional optimization problem, but in some cases reduces to a finite parametric problem, namely principal component analysis. This suggests that, at least in this one dimensional setting, the minimizer of t-SNE should obey nice (parametric) statistical properties. 
\end{itemize}

The strict monotonicity of $\frac{v_b}{b}$ has immediate implications upon the solvability of the critical point equation.

\begin{proposition}\label{prop:F-crit-unique}
There exists exactly one non-negative solution $u^* = \frac{v^*}{\sigma}$ of the Euler-Lagrange equation \eqref{eq:EL1}, which corresponds to a critical point of the functional $\mathcal{F}$.
\end{proposition}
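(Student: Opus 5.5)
The plan is to reduce the Euler--Lagrange equation \eqref{eq:EL1} to a scalar fixed-point problem in the parameter $b=\Bc[u]$. We use Lemma \ref{lem:h-prop}(ii). For any $b>0$, the pointwise equation $\h(v)=b\sigma(x)\rho_X(x)$ has a unique solution $v_b(x)$, and we set $u_b=v_b/\sigma$. A positive function $u$ therefore solves \eqref{eq:EL1} if and only if $u=u_b$ with $b=\Bc[u_b]$.

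First we dispose of degenerate cases. If $u$ is a nonnegative solution, then $\Bc[u]>0$ is forced: $u\in L^\infty$ gives $\int u^{-1}\rho_X^2<\infty$ only if $u>0$ a.e., and in any case the right-hand side of \eqref{eq:EL2} is then positive. Hence $u$ cannot vanish on a set of positive measure, since $\h(0)=0<b\sigma\rho_X$. So every solution has the form $u_b$ with $b>0$. The problem reduces to showing that the scalar equation
\[ g(b):=b\int_0^1 \frac{\rho_X(x)^2}{u_b(x)}\dx = 1 \]
has exactly one root $b\in(0,\infty)$. Writing $g(b)=\int_0^1 \frac{b}{v_b(x)}\,\sigma(x)\rho_X(x)^2\dx$ makes the key structure visible.

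Next we apply Lemma \ref{lem:h-prop}(vi). For each $x$, the map $b\mapsto b/v_b(x)$ is strictly increasing and tends to $0$ as $b\to0^+$. Since $\sigma\rho_X^2$ is bounded above and below by positive constants, $g$ is strictly increasing, which gives uniqueness of the root. Continuity of $g$ follows from continuity of $b\mapsto v_b(x)$ (implicit function theorem, using $\h'>0$) together with dominated convergence. Domination holds on compact subsets of $b$-values, because $b/v_b(x)$ is monotone in $b$ and $v_b(x)$ is bounded below there, using monotonicity in $b$ and the bounds on $\sigma\rho_X$. The limit $g(0^+)=0$ follows by monotone convergence from (vi). For the limit as $b\to\infty$, part (iv) gives $v_b(x)/b\to\tfrac12\sigma\rho_X$, so $g(b)\to\int_0^1 2\rho_X\dx=2>1$. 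The intermediate value theorem then yields a root $b^*$, and $u^*=u_{b^*}$.

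The step needing the most care is the limit as $b\to\infty$ and the use of (iv). The statement in (iv) appears to contain a typo in the constant: $\h(v)\sim 2v$ gives $v_b\sim b\sigma\rho_X/2$. Even without the exact constant, it suffices that $b/v_b(x)$ tends to a positive limit. This requires $\sup_v \h(v)/v=2$, which follows from convexity and $\h'\to2$, giving $b/v_b\ge 2/(\sigma\rho_X)$ in the limit. Then $\lim g\ge 2\int\rho_X=2>1$. Finally, we observe that $u^*$ is positive and bounded, since $v_{b^*}$ lies between $v$-values attached to the bounds of $\sigma\rho_X$. Thus $u^*$ is admissible and $\Fc[u^*]$ is finite, so it is indeed a critical point of $\Fc$ via \eqref{eqn:F-first-var}.
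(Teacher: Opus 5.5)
Your proposal is correct and follows the paper's proof. You reduce \eqref{eq:EL1} to the scalar equation $1=\int_0^1 \frac{b}{v_b}\sigma\rho_X^2\dx$, obtain uniqueness from the strict monotonicity in Lemma \ref{lem:h-prop}(vi), and obtain existence from the limits $0$ and $2\int_0^1\rho_X\dx=2$ together with continuity, which you justify more carefully than the paper does. There are two harmless slips: Lemma \ref{lem:h-prop}(iv) has no typo, since your $v_b\sim\tfrac12 b\sigma\rho_X$ is exactly what it states (the actual misprint is the missing square on $\rho_X$ in \eqref{eq:density_needed1}), and $\Bc[u]>0$ is not forced by \eqref{eq:EL1} itself, since $u\equiv0$ with $\Bc[0]=0$ satisfies it formally and is excluded only because $\Fc[0]=\infty$.
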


\begin{proof}
    Given the properties from Lemma \ref{lem:h-prop}, we can restrict our attention to functions $u_b(x) = \frac{v_b(x)}{\sigma(x)}$, and so the only remaining point is to argue that there exists a unique $b^*$ so that $\Bc[\frac{v_{b^*}}{\sigma}] = b^*$. By the definition of $\Bc$, this is equivalent to the unique solvability, with respect to $b$, of the equation
\begin{equation}\label{eq:tosolve}
1 = \int_0^1 \frac{b}{v_b(x)} \sigma(x)\rho_X(x)^2 \dx.
\end{equation}
By Lemma \ref{lem:h-prop} the right hand side is strictly increasing in $b$; this immediately gives that this equation can have at most one solution in $b$. Lemma \ref{lem:h-prop} also gives that $\lim_{b \to 0} \frac{b}{v_b(x)} = 0$ and 
\begin{equation}\label{eq:density_needed1}
\lim_{b\to \infty}\int_0^1\frac{b}{v_b(x)}\sigma(x)\rho_X(x) \dx = 2\int_0^1 \rho_X(x) \dx = 2.
\end{equation}
This gives the existence of the desired $b^*$, since the right hand side of \eqref{eq:tosolve} is continuous in $b$.
\end{proof}

Having established the uniqueness of the solution $u^*$ to the critical point equation \eqref{eq:EL1}, it is now natural to address the question of whether $u^*$ is a global minimizer. In many classical contexts, like the direct method in the calculus of variations, this is proven using function analytic methods, but here the failure of $\Fc$ to be convex prevents this approach. Our approach will exploit the monotonicity of the critical point equation. In particular, we will show that for any non-negative $u \in L^\infty$, we can construct a sequence $u_k$ which improves the energy $\Fc$ and which provably converge to $u^*$. The first step in this direction is given by the following ``truncation'' lemma. In the lemma, and what follows, we write $v = \min\{u,w\}$ to denote the pointwise min function $v(x) = \min\{u(x),w(x)\}$, with the same definition for $\max$. 
\begin{lemma}\label{lem:truncation}
Let $u\in L^\infty([0,1])$ be positive. Then $\Fc[\min\{u,u_{\Bc[u]}\}]\leq \Fc[u]$ and $\Fc[\max\{u,u_{\Bc[u]}\}]\leq \Fc[u]$, where the inequality is strict unless $\min\{u,u_{\Bc[u]}\} = u$ or $\max\{u,u_{\Bc[u]}\}=u$, respectively.
\end{lemma}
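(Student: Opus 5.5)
The plan is to reduce both inequalities to a pointwise comparison. Two ingredients do the work: the concavity of the logarithm, which linearizes the repulsion around $u$, and the monotonicity of $\h$ from Lemma \ref{lem:h-prop}. Set $b = \Bc[u] > 0$ and $w = u_b = v_b/\sigma$, with $v_b$ as in Lemma \ref{lem:h-prop}(ii). Let $\tilde u$ denote either $\min\{u,w\}$ or $\max\{u,w\}$.

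First I check that $\tilde u$ is admissible. Since $\rho_X\sigma$ is continuous and bounded between positive constants, and $\h$ is a continuous strictly increasing bijection, $v_b$ (and hence $w$) is continuous and bounded above and below by positive constants. Therefore $\tilde u \in L^\infty$, $\tilde u$ is positive, and $\tilde u^{-1}$ is integrable whenever $u^{-1}$ is. If $\int_0^1 u^{-1}\rho_X^2\dx = \infty$, then $b=0$ and the statement degenerates. I would handle that case separately or note that then $\Fc[u] = \infty$ and the claim is trivial. So assume $\Fc[u]<\infty$ and $b>0$.

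Next I linearize the repulsion. Write $S[u] = \int_0^1 u^{-1}\rho_X^2\dx = 1/b$. Concavity of $\log$ gives
\[
\log S[\tilde u] \le \log S[u] + \frac{S[\tilde u]-S[u]}{S[u]} = \log S[u] + b\int_0^1 \rho_X^2\left(\tilde u^{-1} - u^{-1}\right)\dx .
\]
Hence
\[
\Fc[\tilde u] - \Fc[u] \le \int_0^1 \big(g_x(\tilde u(x)) - g_x(u(x))\big)\dx, \qquad g_x(t) := \Phi_1(\sigma(x)t)\rho_X(x) + \frac{b\,\rho_X(x)^2}{t}, \quad t>0 .
\]

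The key step is a pointwise analysis of $g_x$. Differentiating and using $\h(v)=v^2\Phi_1'(v)$ gives
\[
g_x'(t) = \sigma\rho_X\Phi_1'(\sigma t) - \frac{b\rho_X^2}{t^2} = \frac{\rho_X}{\sigma t^2}\Big(\h(\sigma t) - b\,\sigma\rho_X\Big).
\]
By Lemma \ref{lem:h-prop}(i)--(ii), $\h$ is strictly increasing and $\h(\sigma w) = b\sigma\rho_X$. Therefore $g_x$ is strictly decreasing on $(0,w(x)]$ and strictly increasing on $[w(x),\infty)$, so $w(x)$ is its strict global minimizer.

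Consequently, for $\tilde u = \min\{u,w\}$ the integrand vanishes on $\{u\le w\}$ and equals $g_x(w)-g_x(u)<0$ on $\{u>w\}$. For $\tilde u=\max\{u,w\}$ the same holds with $\{u<w\}$ in place of $\{u>w\}$. This yields $\Fc[\tilde u]\le \Fc[u]$. The inequality is strict unless the relevant set has measure zero, that is, unless $\tilde u = u$ almost everywhere.

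The only delicate points are bookkeeping. One needs the finiteness and integrability checks above, so that all integrals are finite and the concavity bound is legitimate. One also needs to note that strict pointwise negativity on a set of positive measure gives a strict inequality of integrals. I expect no real obstacle beyond this; the essential insight is the sign identity for $g_x'$.
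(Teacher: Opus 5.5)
Your proof is correct, and it takes a genuinely different and shorter route than the paper's. The paper connects $u$ to $\min\{u,w\}$ (resp.\ $\max\{u,w\}$) by the flow $\partial_t u_t=-(u_t-w)_+$ and differentiates $\Fc[u_t]$ using the first-variation formula \eqref{eqn:F-first-var}. It gets the sign of the integrand from two facts: $\h(\sigma u_t)\ge\h(\sigma w)$ on $\{u>w\}$, and $\Bc[u_t]\le\Bc[u]$, which holds because $u_t$ only decreases. It then needs a routine but implicit passage to the limit $t\to\infty$.

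You instead freeze the repulsion at $u$ via the tangent-line bound for the concave $\log$. This gives a separable majorant of $\Fc$ that agrees with $\Fc$ at $u$, and you observe that $w=u_{\Bc[u]}$ is the strict pointwise minimizer of the frozen integrand $g_x$; this is a majorize--minimize argument. The two proofs rest on the same structure. Your $g_x'$ is exactly the bracket in \eqref{eqn:F-first-var} with $\Bc$ frozen at $\Bc[u]$, and the paper's monotonicity $\Bc[u_t]\le\Bc[u]$ is just concavity of $\log$ seen along the path. Indeed, replacing $\Bc[u_t]$ by $\Bc[u]$ in the paper's derivative formula and integrating in $t$ gives exactly your inequality.

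What your static version buys:
\begin{enumerate}[(i)]
\item It needs no ODE and no limiting argument.
\item It gives strictness together with the quantitative estimate $\Fc[u]-\Fc[\tilde u]\ge\int_0^1\big(g_x(u)-g_x(\tilde u)\big)\dx$.
\item It is not restricted to one-sided truncations. Since $g_x$ is monotone on each side of $w(x)$, any $\tilde u$ lying pointwise between $u$ and $w$ lowers the energy. In particular $\Fc[u_{\Bc[u]}]\le\Fc[u]$, so the plain update $u\mapsto u_{\Bc[u]}$ used in Remark \ref{rem:iteration} and in step 2 of Theorem \ref{thm:globalmin} decreases the energy however $u$ and $u_{\Bc[u]}$ are ordered.
\end{enumerate}
Your treatment of the degenerate case $\Bc[u]=0$ is at the same level as the paper's, which tacitly assumes $\Bc[u]>0$.
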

\begin{proof}
The function $w = u_{\Bc[u]}$ satisfies
\begin{equation}\label{eq:cubic_system}
\h(\sigma(x)w(x)) - \Bc[u]\sigma(x) \rho_X(x) = 0.
\end{equation}
Let $u_t(x)$ solve $u_0=u$ and
\[\frac{\partial}{\partial t} u_t(x) = -(u_t(x)-w(x))_+,\]
so that $\lim_{t\to \infty} u_t(x) = \min\{u(x),w(x)\}$. Now, we compute, using \eqref{eqn:F-first-var}
\begin{align}\label{eq:dtF}
\frac{d}{dt}\Fc[u_t]&= -\int_0^1 \left[\frac{\h(\sigma(x) u_t(x)) - \Bc[u_t]\sigma(x)\rho_X(x)}{\sigma(x)u_t^2(x) }\right](u_t(x)-w(x))_+\rho_X(x) \dx.
\end{align}
 For any $x$ such that $u(x)\leq w(x)$ we have $u_t(x)=u(x)\leq w(x)$ for all $t$, and so $(u_t(x)-w(x))_+=0$. On the other hand, for any $x$ such that $u(x)>w(x)$, we have $u_t(x) \geq w(x)$ for all $t$, and so by monotonicity of $\h$ we have $\h(\sigma(x)w(x)) \leq \h(\sigma(x)u_t(x))$. Since $u_t\leq u_0=u$ for all $x$, we have $\Bc[u_t]\leq \Bc[u]$. Combining these observations with \eqref{eq:cubic_system}  we have
\[\h(\sigma(x) u_t(x)) - \Bc[u_t]\sigma(x) \rho_X(x) \geq 0\]
for any $x$ where $u(x)>w(x)$. Plugging this into \eqref{eq:dtF}, we find that $\Fc[u_t]$ is monotonically decreasing, which shows that $\Fc[\min\{u,w\}]\leq \Fc[u]$. Strictness is also an immediate consequence of \eqref{eq:dtF}. The proof of $\Fc[\max\{u,w\}]\leq \Fc[u]$ is similar.
\end{proof}
\begin{remark}\label{rem:abuse}
We warn the reader that we are abusing notation in the proof of Lemma \ref{lem:truncation} above and Theorem \ref{thm:globalmin} below by writing $u_b$, $u_t$, and $u_k$ for different quantities. 
\end{remark}

The proof that $u^*$ is the global minimizer is then a combination of the truncation lemma with a basic iteration scheme, as well as some a priori bounds.

\begin{theorem}\label{thm:globalmin}
Let $u^*$ be the unique non-negative critical point of $\Fc$ (courtesy of Proposition \ref{prop:F-crit-unique}). Then $\Fc[u^*] \leq \Fc[u]$ for all non-negative $u\in L^\infty([0,1])$, with equality if only if $u = u^*$.
\end{theorem}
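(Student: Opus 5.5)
The plan is to start from an arbitrary positive $u\in L^\infty([0,1])$ with $\Fc[u]<\infty$ and build a sequence of functions with non-increasing energy that converges to $u^*$. Then $\Fc[u^*]\le\Fc[u]$ will follow from continuity of $\Fc$ along that sequence. The case $u\ge 0$ that vanishes on a null set (but not on an open interval) I reduce to positive $u$ by replacing $u$ with $\max\{u,\epsilon\}$. This lowers the repulsion integral and raises the attraction only by $O(\epsilon\text{-dependent})$ terms that vanish as $\epsilon\to0$ by monotone/dominated convergence. If $u^{-1}\rho_X^2$ is not integrable, then $\Fc[u]=\infty$ and there is nothing to prove.

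\textbf{A priori bounds.} For fixed $u$, set $b=\Bc[u]$ and $w=u_{b}$. By Lemma \ref{lem:truncation}, $\Fc[\min\{u,w\}]\le\Fc[u]$. Applying the lemma again (the max version) to the truncated function gives a function squeezed between $u_{b_1}$ and $u_{b_2}$ for suitable $b_1,b_2$. I want a bracket $u_{\underline b}\le u\le u_{\bar b}$ for some $0<\underline b\le b^*\le \bar b<\infty$.

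\textbf{Iteration.} Define $u_0=u$ and alternate the two truncations. The key monotonicity is the following: if $u_{b_1}\le u\le u_{b_2}$, then since $\Bc$ is increasing in $u$ and $b\mapsto u_b$ is increasing, we get $\Bc[u]\in[\Bc[u_{b_1}],\Bc[u_{b_2}]]$. Let $g(b)=\Bc[u_b]$; equation \eqref{eq:tosolve} says $g(b)=b$ exactly at $b^*$. Moreover $b/g(b)=\int_0^1 \frac{b}{v_b}\sigma\rho_X^2\dx$ is strictly increasing by Lemma \ref{lem:h-prop}(vi). Hence $g(b)>b$ for $b<b^*$ and $g(b)<b$ for $b>b^*$.

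Concretely, I take the lower barrier sequence $\underline b_{k+1}=g(\underline b_k)$ and the upper sequence $\bar b_{k+1}=g(\bar b_k)$. After the first step, both are monotone and bounded by $b^*$ from their respective sides, so they converge to fixed points of $g$, i.e.\ to $b^*$. At each step, $u_k$ is replaced by $\max\{\min\{u_k,u_{\Bc[u_k]}\},u_{\Bc[\cdot]}\}$, staying within the shrinking bracket $[u_{\underline b_k},u_{\bar b_k}]$ while not increasing $\Fc$.

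I need to check that the truncation actually moves the bracket. If $u_k\le u_{\bar b_k}$, then $\Bc[u_k]\le g(\bar b_k)=\bar b_{k+1}$, so $\min\{u_k,u_{\Bc[u_k]}\}\le u_{\bar b_{k+1}}$. The lower bound is symmetric.

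\textbf{Passage to the limit.} The $u_k$ are sandwiched between $u_{\underline b_k}$ and $u_{\bar b_k}$, both of which converge uniformly to $u^*$. Continuity of $b\mapsto v_b(x)$ holds uniformly in $x$ since $\sigma\rho_X$ is bounded above and below. Therefore $u_k\to u^*$ uniformly, with $u_k$ bounded away from $0$ and $\infty$. So $\Fc[u_k]\to\Fc[u^*]$, which gives $\Fc[u^*]\le\Fc[u]$.

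\textbf{Equality case.} If $\Fc[u]=\Fc[u^*]$, then every truncation step must be an equality, which by the strictness in Lemma \ref{lem:truncation} forces $u=u_k$ for all $k$, hence $u=u^*$.

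\textbf{Main obstacle.} The delicate point is the initial step: an arbitrary $u$ need not lie in any bracket $[u_{\underline b},u_{\bar b}]$, since $u$ may be near $0$ or very large somewhere. My first attempt is to use the first min/max truncation itself to create the bracket, taking $\bar b=\max(\Bc[u],b^*)$ and $\underline b=\min(\Bc[\min\{u,u_{\Bc[u]}\}],b^*)$. I expect this to work because after truncating, the new $\Bc$ values again bracket as above, using only monotonicity of $\Bc$ and of $g$.
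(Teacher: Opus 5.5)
Your argument is correct, and it is organized differently from the paper's.

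\textbf{The paper's route.} It runs two one-sided monotone iterations in sequence. First comes the increasing scheme $u_{k+1}=\max\{u_k,u_{\Bc[u_k]}\}$. This is kept bounded by the a priori estimate $u_k\le B\rho_X$, which uses the asymptotics $v_b/b\to\frac12\sigma\rho_X$ from Lemma \ref{lem:h-prop}(iv) together with $\int_0^1\rho_X\dx=1$. It produces $\bar u\ge u_{\Bc[\bar u]}$. Then comes the decreasing scheme $w_{k+1}=u_{\Bc[w_k]}$, whose limit is identified as a critical point and hence equals $u^*$ by Proposition \ref{prop:F-crit-unique}.

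\textbf{Your route.} You trap a single alternating min/max iteration between barriers $u_{\underline b_k}\le u_k\le u_{\bar b_k}$. Convergence then reduces to the scalar dynamics of $g(b)=\Bc[u_b]$, using that $b/g(b)$ is strictly increasing and equals $1$ exactly at $b^*$. This buys you three things:
\begin{enumerate}[(i)]
\item No a priori upper bound is needed, since the first min-truncation supplies the upper barrier.
\item Convergence is uniform, with $u_k$ bounded away from $0$ and $\infty$, so $\Fc[u_k]\to\Fc[u^*]$ is immediate. The paper leaves the passage to the limit in $\Fc$, and the positivity of $\Bc$ at the limit, largely implicit.
\item The equality case follows cleanly from the strictness in Lemma \ref{lem:truncation}, which the paper does not spell out.
\end{enumerate}
In exchange, the paper's route never has to analyze $g$, and its first step is exactly the numerical scheme of Remark \ref{rem:iteration}.

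\textbf{The initial bracket is not an obstacle.} Set $b_0=\Bc[u]$, $u'=\min\{u,u_{b_0}\}$ and $b_1=\Bc[u']\le b_0$. Then $u''=\max\{u',u_{b_1}\}$ satisfies $u_{b_1}\le u''\le u_{b_0}$ directly. Moreover $b_1>0$, because $(u')^{-1}\le u^{-1}+u_{b_0}^{-1}$. So your choices of $\underline b$ and $\bar b$ work.

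\textbf{Points you should record.}
\begin{enumerate}[(i)]
\item In the propagation step, check the cross-bounds as well. The min step keeps $u'\ge u_{\underline b_k}$, because $\Bc[u_k]\ge g(\underline b_k)\ge \underline b_k$. The max step keeps $u_{k+1}\le u_{\bar b_{k+1}}$, because $\Bc[u']\le\Bc[u_k]\le\bar b_{k+1}$. Both are easy, but "symmetric" glosses over them.
\item Note that $g$ is continuous on $(0,\infty)$, by continuity of $\h^{-1}$ and dominated convergence. This is what makes the barrier sequences converge to a fixed point, namely $b^*$.
\item The $\max\{u,\eps\}$ reduction is unnecessary, since the truncation lemma only uses $u>0$ a.e.\ and $\Bc[u]>0$. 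For the equality statement you should run the iteration on $u$ itself rather than on the regularized function.
\end{enumerate}
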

\begin{proof}
Let $u\in L^\infty([0,1])$ be nonnegative. We can assume that $\Bc[u] >0$, otherwise $\Fc[u]=\infty$ and we clearly $\Fc[u^*]\leq \Fc[u]$. The proof now has two main steps. 

1. We construct the iteration $u_0=u$ and
\[u_{k+1}(x) = \max\{u_k(x),u_{\Bc[u_k]}(x)\} \ \ \text{for } k\geq 0.\]
By Lemma \ref{lem:truncation} we have $\Fc[u_k] \leq \Fc[u]$ for all $k\geq 0$. We also clearly have that the sequence $u_{k}$ is monotonically increasing, as is the sequence $b_k:=\Bc[u_k]$. Also, since $b_0=\Bc[u_0]>0$ we have $u_k > 0$ for $k\geq 1$ and $b_k > 0$ for all $k$.

To show that the sequence is convergent, we need to show the sequence is uniformly bounded above. Using the fourth point in Lemma \ref{lem:h-prop} we have that for large enough $b$, $v_b(x) \leq \frac{3}{4} b \sigma(x) \rho_X(x)$, and hence $u_b(x) \leq \frac{3}{4}b \rho_X(x)$. Using the lower bound on $\rho_X$, we can select $B$ large enough so that $u \leq B \rho_X$ and $u_{B} \leq \frac{3}{4} B \rho_X$. We now claim that $u_k \leq B\rho_X$ for all $k\geq 0$, which we prove by induction. If $u_k \leq B \rho_X$ for some $k$, then 
\begin{equation}\label{eq:density_needed2}
b_k=\Bc[u_k] \leq \Bc[B\rho_X] =B\left( \int_0^1 \rho_X(x) \dx\right)^{-1} = B.
\end{equation}
By the monotonicity of $u_b$ in $b$ and our assumption on $B$ this gives
\[u_{b_k} \leq u_{B} \leq \frac{3}{4}B \rho_X,\]
which in turn implies $u_{k+1} = \max\{u_k,u_{b_k}\} \leq B \rho_X$, which completes the induction proof. As a consequence, we also have $b_k \leq B$ for all $k\geq 0$. 

Since $u_k$ and $b_k$ are monotonically increasing and bounded sequences, there exists $\bar u(x)\geq 0$ and $B\geq 0$ such that 
\[\bar u(x) = \lim_{k\to \infty}u_k(x) \ \ \text{and} \ \ \bar{b} = \lim_{k\to \infty} b_k.\]
The function $\bar u$ satisfies $\bar u \geq u$, $\Fc[\bar u] \leq \Fc[u]$, $\bar u \geq u_{\Bc[\bar u]}$ and $\bar{b} =  \Bc[\bar u] \geq \Bc[u]$. 




2. We now construct a second iteration $w_0=\bar u$ and $w_{k+1} = u_{\Bc[w_k]}$. We note that, by the properties of $\bar u$, we have $w_0  = \bar u \geq u_{\Bc[\bar u]} =  u_{\Bc[w_0]} = w_1$, and in turn $w_1 = \min\{w_0,u_{\Bc[w_0]}\}$. This implies, by Lemma \ref{lem:truncation}, that $\Fc[w_1] \leq \Fc[w_0]$.

Now, suppose that we know $w_{k-1} \geq w_k$. Then by the definition of $\Bc$ we have $\Bc[w_{k-1}] \geq \Bc[w_k]$. In turn by the monotonicity in $b$ of $u_b$ we have $w_{k} = u_{\Bc[w_{k-1}]} \geq u_{\Bc[w_{k}]} = w_{k+1}$. This then implies that $w_k$ is a monotonically decreasing sequence bounded from below, and has a limit. Furthermore, by construction we have $w_{k+1} = \min\{w_k,u_{\Bc[w_k]}\}$, and hence $\Fc[w_{k+1}] \leq \Fc[w_k]$. By construction of the $w_k$, their limit $w = \lim_{k\to \infty}w_k$ must satisfy $\Bc[w]>0$ (or else $\Fc[w_k] \to \infty$) and $w = u_{\Bc[w]}$, and so $w$ is a solution to the critical point equation \eqref{eq:EL1}, and hence is $w=u^*$. Thus $u^*$ is the global minimizer among positive $L^\infty$ functions.
\end{proof}
\begin{remark}\label{rem:iteration}
It is important to notice that the second step in the proof of Theorem \ref{thm:globalmin} is not needed (to show $\Fc[u^*]\leq \Fc[u])$ when $u_0 \leq u_{\Bc[u_0]}$. In this case we have $u_{k+1}=u_{\Bc[u_k]}$ for all $k$, which leads to $\bar{u} = u_{\Bc[\bar{u}]}$, and so $\bar{u}$ solves the critical point equation. This gives a useful numerical method for solving the critical point equation \eqref{eq:EL1} that we use later in Section \ref{sec:num}. To ensure that $u_0 \leq u_{\Bc[u_0]}$, we can choose $u_0\equiv \delta$ for $\delta>0$ sufficiently small. In this case, we have $b_0 = \Bc[u_0]= C_1\delta = C_1u_0$ for a constant $C_1$, and since $\Theta(v) \leq C_2 v^2$ (which follows from Lemma \ref{lem:h-prop} part (v)), we have that $u_{b_0} \geq C_3 \sqrt{u_0} \geq u_0$ for $\delta$ sufficiently small. 
\end{remark}
\begin{remark}\label{rem:density_needed}
We also remark that we have not used in any essential way that $\rho_X$ is a probability density, and in fact, Theorem \ref{thm:globalmin} holds if and only if 
\begin{equation}\label{eq:density_lower}
\int_0^1 \rho_X \dx > \frac{1}{2}.
\end{equation}
Indeed, we have only used that $\rho_X$ is a density in two places; namely \eqref{eq:density_needed1} and \eqref{eq:density_needed2}. In \eqref{eq:density_needed1}, we in fact only need to use \eqref{eq:density_lower} in order to apply the intermediate value theorem, and we easily see that the critical point equation has no solution when \eqref{eq:density_lower} fails to hold (since $b/v_b(x)$ is strictly increasing to its limit).  The second estimate \eqref{eq:density_needed2} seems to require $\int_0^1 \rho_X \dx \geq 1$, but the argument can be easily adapted to the case $\int_0^1 \rho_X \dx > \frac{1}{2}$. 

To see what goes wrong when \eqref{eq:density_lower} fails to hold, let $\alpha = \int_0^1 \rho_X \dx$ and consider the energy of a constant $u\equiv C$ with $\sigma\equiv 1$ for simplicity, which is
\[\Fc[C] = \alpha\Phi_1(C) - \log(C) + \log\left( \int_0^1\rho_X^2\dx\right).\]
Let us also assume for simplicity that $\eta(z)=0$ for $|z|\geq 1$. Then 
\[\Phi_1(C) = \int_{-1}^1 \eta(z)\log(1 + C^2z^2) \dz \leq \log(1 + C^2),\]
and so 
\[\Fc[C] \leq  (2\alpha-1) \log(C) + \alpha\log(1 + C^{-2}) + \log\left( \int_0^1\rho_X^2\dx\right).\]
When $\alpha < \frac{1}{2}$, we have $\lim_{C\to \infty}\Fc[C] = -\infty$, hence the energy is not lower bounded, and no minimizer can exist. Since $u = T'$, this is a gradient blowup, due to the attraction term being too weak compared to the repulsion. 
\end{remark}

We note that the previous proof requires a direct construction utilizing properties of the first variation (via the truncation lemma), and does not rely upon function analytic methods. Such methods typically rely upon versions of lower-semicontinuity, which are likely to fail for this energy in many function spaces. The following lemma illustrates how delicate this issue is even for the one-dimensional setting.

\begin{lemma}\label{lem:discontinuities}
Let $T(x) = F(x) + H(x-\tfrac12)$, where $F$ is Lipschitz continuous on $[0,1]$ with $F' \geq \theta>0$, and $H$ is the Heaviside function defined by $H(x)=1$ for $x>0$ and $H(x)=0$ for $x\leq 0$. Then there exists a sequence of Lipschitz functions $T_n$ which converge to $T$ (in $L^1$) so that $\lim_{n\to\infty} \Ec[T_n;\Phi_1] =\Ec[F;\Phi_1]$.
\end{lemma}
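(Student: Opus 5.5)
We construct $T_n$ by replacing the jump at $x=\tfrac12$ with a steep linear ramp of width $\delta_n\to 0$. Concretely, set $T_n(x) = F(x) + H_n(x-\tfrac12)$, where $H_n(t)=0$ for $t\le 0$, $H_n(t)=t/\delta_n$ for $0<t<\delta_n$, and $H_n(t)=1$ for $t\ge\delta_n$. Each $T_n$ is Lipschitz and strictly increasing, with $T_n'=F'+\delta_n^{-1}\one_{(1/2,1/2+\delta_n)}\ge\theta$. Since $T_n\to T$ in $L^1$ trivially (the difference is bounded and supported on a set of measure $\delta_n$), the work is in computing the limits of the attraction and repulsion separately. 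Because $d=m=1$ and $T_n$ is monotone, we may use the reduced form \eqref{eq:Efunc}:
\[\Ec[T_n;\Phi_1]=\int_0^1\Phi_1(\sigma T_n')\rho_X\dx+\log\Big(\int_0^1 (T_n')^{-1}\rho_X^2\dx\Big).\]

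\emph{Repulsion.} Off the ramp interval $I_n=(\tfrac12,\tfrac12+\delta_n)$ we have $T_n'=F'$. On $I_n$ we have $(T_n')^{-1}\le \delta_n$, so its contribution is at most $\rho_{max}^2\delta_n^2\to0$. Off $I_n$, $(F')^{-1}\le\theta^{-1}$ is bounded, so dominated convergence gives $\int_0^1 (T_n')^{-1}\rho_X^2\to\int_0^1 (F')^{-1}\rho_X^2$, and this limit is positive. Hence the repulsion converges to $\Rc[F]$.

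\emph{Attraction.} Off $I_n$ the integrand equals $\Phi_1(\sigma F')\rho_X$, which is bounded, so that part converges to $\Ac[F;\Phi_1]$. On $I_n$, the bound \eqref{eqn:Phi-s-logarithmic-bound} gives
\[0\le\Phi_1(\sigma T_n')\le\log\big(1+c_\eta\sigma_{max}^2(\|F'\|_\infty+\delta_n^{-1})^2\big)\le C+2\log\delta_n^{-1}.\]
The contribution of $I_n$ is therefore at most $C\rho_{max}\delta_n(1+\log\delta_n^{-1})\to0$. Combining the two parts yields $\Ec[T_n;\Phi_1]\to\Ac[F;\Phi_1]+\Rc[F]=\Ec[F;\Phi_1]$.

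\emph{Main obstacle.} The key point is the attraction estimate on the ramp. Because $\Phi_1$ grows only logarithmically, the cost of a jump, of order $\delta\log\delta^{-1}$, vanishes. This is exactly where the sublinear, Perona–Malik-type growth enters, and it is also why the jump leaves the repulsion unchanged in the limit. The remaining technical point is to make sure the reduced formula \eqref{eq:Efunc} applies to $T_n$. It does: $T_n$ is strictly increasing and Lipschitz with $T_n'\ge\theta>0$, so the coarea and change-of-variables derivation preceding \eqref{eq:Efunc} goes through. We should also check that $\rho_{Y_n}$ is in $L^2$, which holds because $\rho_{Y_n}=\rho_X/T_n'\circ T_n^{-1}\le\rho_{max}/\theta$.
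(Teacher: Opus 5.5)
Your proposal is correct and follows essentially the same route as the paper. Both use a linear ramp of width $\delta_n$ (the paper takes $\delta_n=n^{-1}$), the logarithmic bound on $\Phi_1$ to make the ramp's attraction cost $O(\delta_n\log\delta_n^{-1})$ vanish, and the reduced one-dimensional formula for the repulsion. Your repulsion step, which passes to the limit in $\exp\Rc[T_n]=\int_0^1 (T_n')^{-1}\rho_X^2\dx$ and then uses continuity of the logarithm at a positive limit, is if anything slightly cleaner than the paper's, which writes $\Rc[F]-\Rc[T_n]$ directly as a difference of the integrals without the logarithm.
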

\begin{proof}
Let us define
\[H_n(x) =
\begin{cases}
0,& \text{if } x \leq 0\\
 n x,& \text{if } 0 \leq x \leq n^{-1}\\
1,& \text{if } x \geq n^{-1},
\end{cases}\]
and $T_n(x) = F(x) + H_n(x-\frac{1}{2})$. We note that clearly $T_n \to T$ in $L^1$.  We estimate
\begin{align*}
\Ac[T_n;\Phi_1] &= \int_0^1 \Phi_1( \sigma(x)T_n'(x)) \rho_X(x) \,dx \\
&= \int_{I_n} \Phi_1(\sigma(x)F'(x) ) \rho_X(x)\dx + \int_{\tfrac12}^{\tfrac12+n^{-1}}\Phi_1( \sigma(x)(F'(x)+n) ) \rho_X(x)\dx\\
&\leq \int_0^1 \Phi_1(\sigma(x)F'(x) ) \rho_X(x)\d x + n^{-1}\rho_{max} \Phi_1(\sigma_{max}(\|F'\|_\infty +n)),
\end{align*}
where $I_n = (0,\tfrac12)\cup (\tfrac12+n^{-1},1)$.  Since $T_n' \geq F'$ we clearly have $\Ac[T_n;\Phi_1] \geq \Ac[F;\Phi_1]$. Due the sublinear growth of the attraction (see Remark \ref{rem:attraction}) we find that $\lim_{n\to \infty} \Ac[T_n;\Phi_1] = \Ac[F;\Phi_1]$.

At the same time, using again that $T_n' \geq F'$ we have $\Rc[T_n] \leq \Rc[F]$ and 
\[\Rc[F] - \Rc[T_n] = \int_{\frac12}^{\frac12 + n^{-1}} \left(|F'(x)|^{-1} - |F'(x) + n|^{-1} \right))\rho_X(x)^2 \dx \leq \theta^{-1}n^{-1}\rho_{max}^2.\]
Sending $n\to \infty$ yields $\lim_{n\to \infty} \Ac[T_n] = \Ac[F]$ and completes the proof.
\end{proof}

Lemma \ref{lem:discontinuities} shows that the functional $\Ec$ is not sensitive to discontinuities in the mapping $T$. The construction easily extends to a finite number of discontinuities. This suggests that numerical optimization methods that do not enforce smoothness can easily find discontinuous minimizers. This phenomenon directly arises in the numerical experiments in the next section.

We also mention that Lemma \ref{lem:discontinuities} suggests that any relaxation of this functional from Lipschitz $T$ to a space like $L^1$ is likely to lead to a trivial relaxation. We don't seek to prove this in detail here, instead choosing to focus our study of this functional on the smaller function space $W^{1,\infty}$. The price which we have had to pay by doing so is that weaker, function analytic approaches for constructing minimizers of this functional are more delicate. This issue becomes even more delicate in higher dimensions, and we address it in more detail in Section \ref{sec:nonexistence}.

\subsection{Numerical Experiments}
\label{sec:num}

We present here the results of numerical experiments comparing the minimizers of the discrete t-SNE energy $\Ec_n$ to the minimizer of the continuum energy $\Ec[T;\Phi_1]$ in the one dimensional setting. We consider a family of densities of the form
\[\rho_X(x) = p (G_\sigma(x-c) + G_\sigma(x+c)) + \frac{1}{2}(1-2p),\]
on the interval $[-1,1]$, where $G_\sigma$ is the standard normal probability density with variance $\sigma^2$. This is a mixture of two Gaussians with means at $\pm c$ with the uniform distribution on $[-1,1]$. While our theory in this section pertains to the domain $[0,1]$, it is straightforward to translate the argument to any interval.  In the experiments we used $p=0.4$, $\sigma^2 = 0.005$, and the values $c=0,0.1,0.5$, which model two clusters that coincide ($c=0$), have significant overlap ($c=0.1)$, and are well separated ($c=0.5$).  We used $\eta(z) = \frac{3}{4}(1 - z^2)_+$ so that $\h$ has the explicit form \eqref{eq:exacth} and $\sigma(x) = \rho_{max}/\rho_X$ to model a $k$-nearest neighbor graph construction.

We computed the minimizer of the continuous energy $\Ec$ via the iteration outlined in Remark \ref{rem:iteration}, using a bisection search to solve the equation $\h(v) = \rho_X(x)\sigma(x)b$ for $v_b(x)$. For the discrete problem, we used $n=2500$ points in all experiments and bandwidth $h=5/n = 0.002$. We minimized the t-SNE energy with $10^5$ steps of gradient descent with time step $dt = n/5 = 500$ (large $O(n)$ time steps are common in t-SNE) starting from three different initial conditions: (i) random initialization, (ii) identity initialization $T(x)=x$, and (iii) continuum limit initialization $T = T^*$ where $T^*$ minimizes $\Ec$. Since the t-SNE energy is nonconvex with many local minimizers, the results are highly dependent on initialization. After $10^5$ steps of gradient descent, we center the data so the minimum data point is zero, and we scale by $h$, as discussed in the previous two sections. Interpolating between embedding points yields a \emph{discrete} t-SNE map $T_n$.\footnote{The code for the numerical experiments is available online: \url{https://github.com/jwcalder/tSNELimit}.}

\begin{figure}[!t]
\centering
\subfloat{\includegraphics[width=0.32\textwidth]{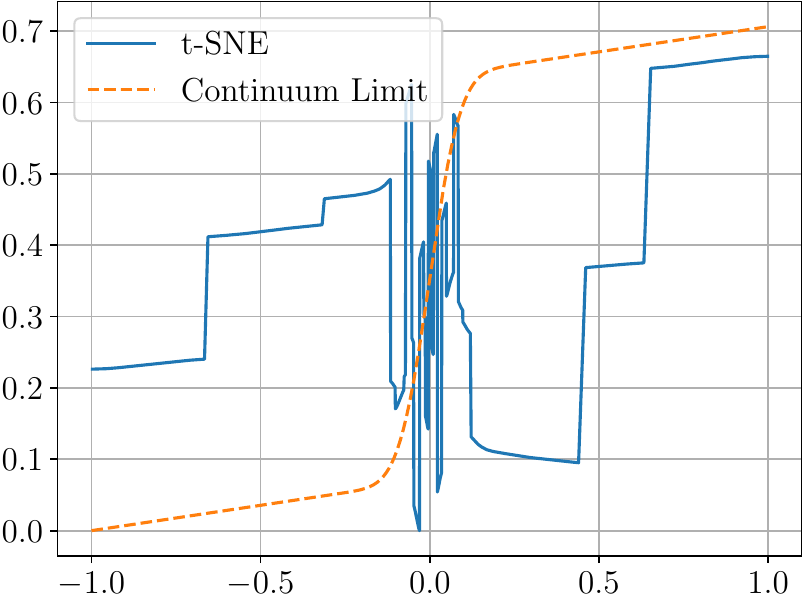}}
\subfloat{\includegraphics[width=0.32\textwidth]{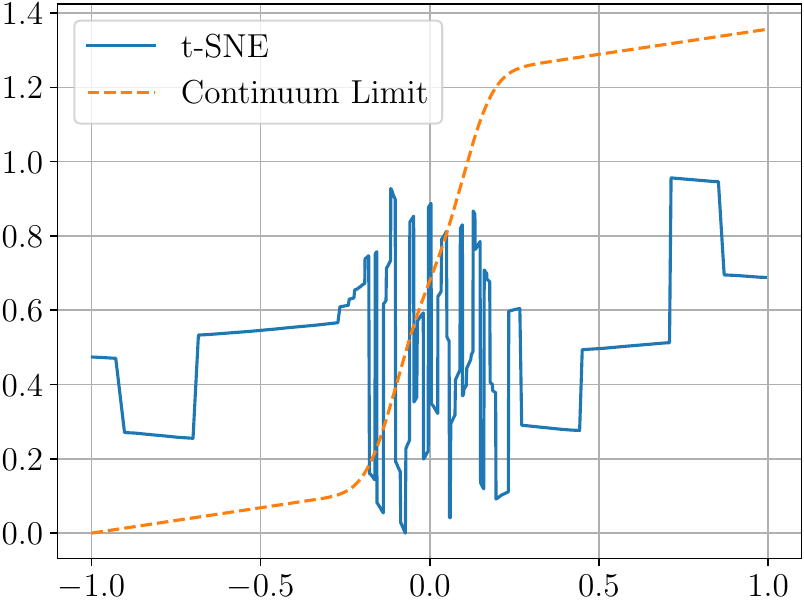}}
\subfloat{\includegraphics[width=0.32\textwidth]{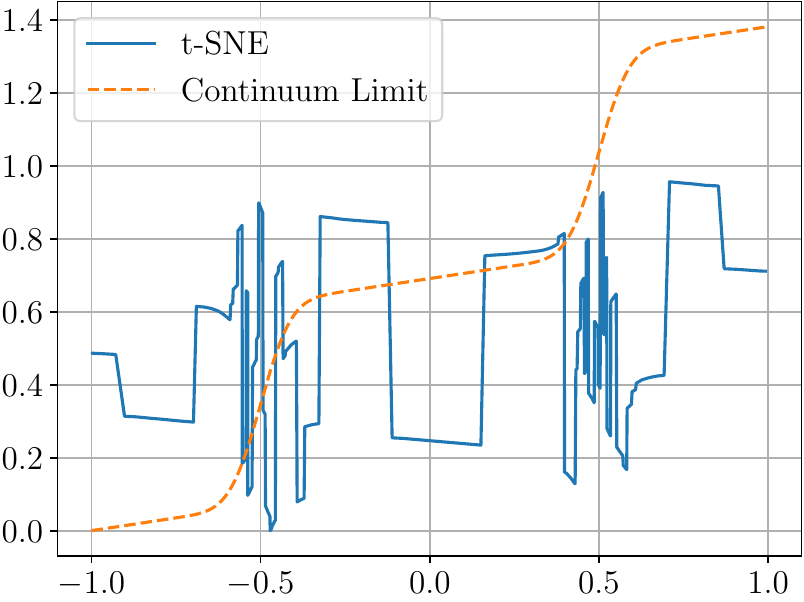}}\\
\subfloat{\includegraphics[width=0.32\textwidth]{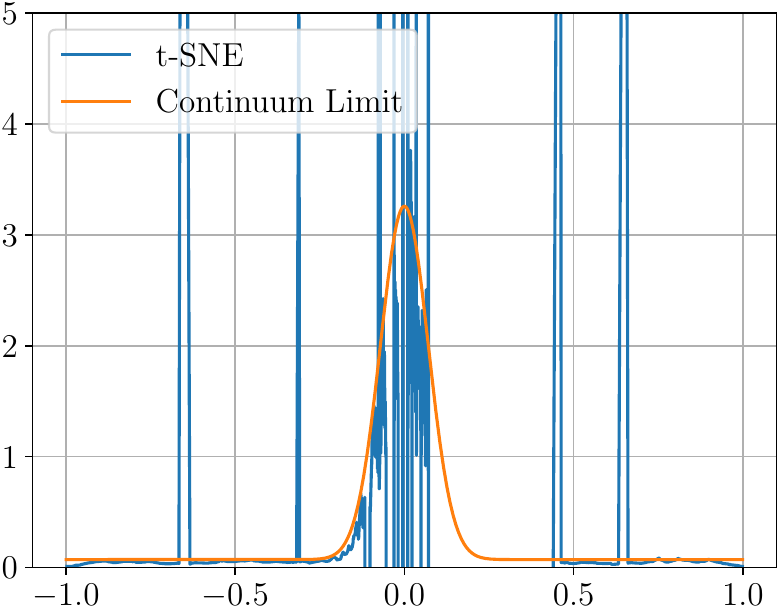}}
\subfloat{\includegraphics[width=0.32\textwidth]{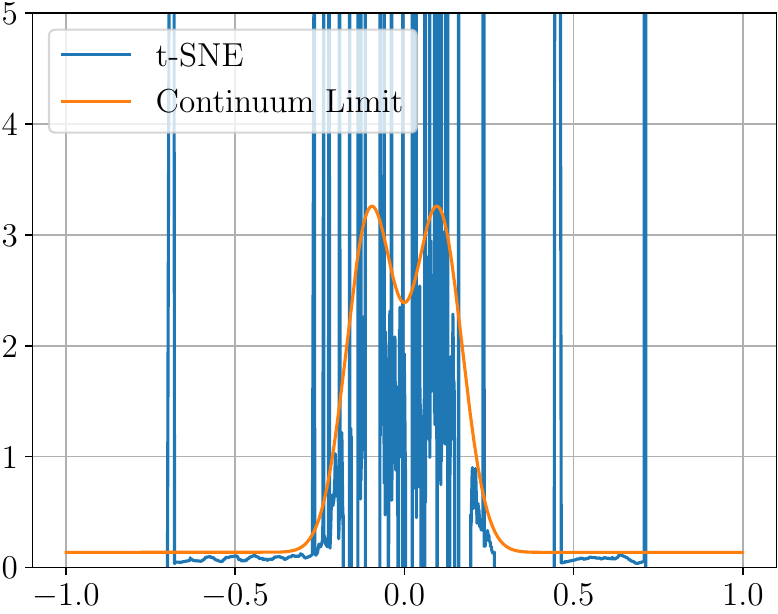}}
\subfloat{\includegraphics[width=0.32\textwidth]{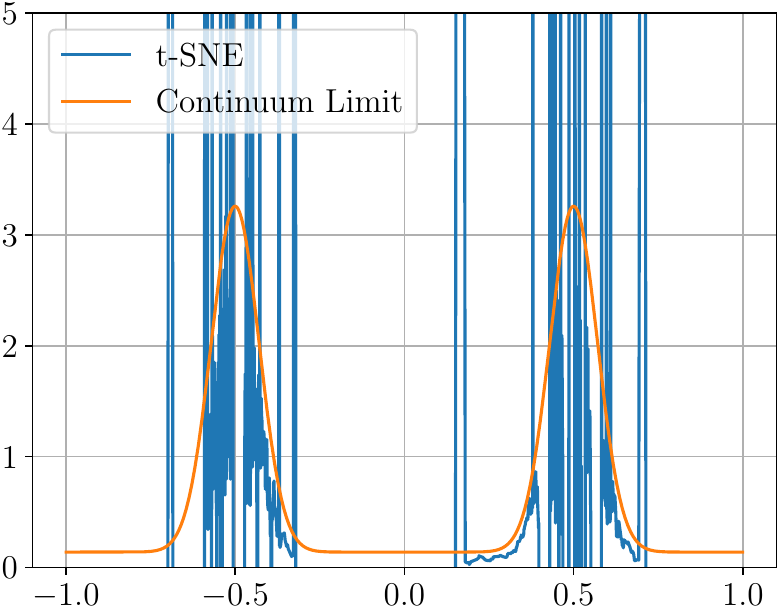}}
\caption{ {\bf Random Initialization.} Top row compares $T_n$ and $T$ while the bottom row compares $T_n'$ and $T'$. We increase the separation between the Gaussian clusters from left to right.}
\label{fig:random}
\end{figure}

We show in Figure \ref{fig:random} the results of random initialization, comparing the rescaled and centered discrete t-SNE map $T_n$ and its gradient $T_n'$ to $T$ and $T'$, where $T$ is the minimizer of $\Ec$. We see that the t-SNE embedding has a large number of discontinuities, as predicted by Lemma \ref{lem:discontinuities}. However, away from the discontinuities, the absolute value of the slope $|T_n'(x)|$ matches moderately well with the continuum limit $|T'(x)|$, though we see wild fluctuations. 

\begin{figure}[!t]
\centering
\subfloat{\includegraphics[width=0.32\textwidth]{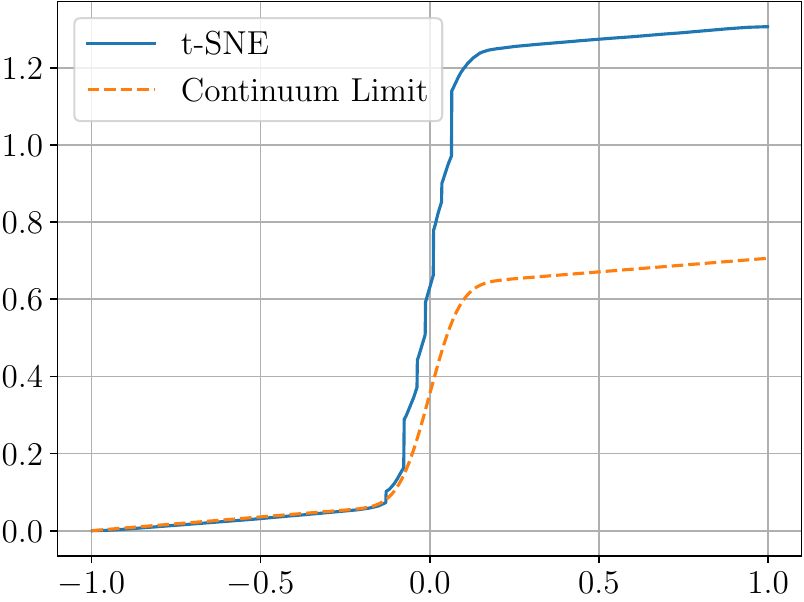}}
\subfloat{\includegraphics[width=0.32\textwidth]{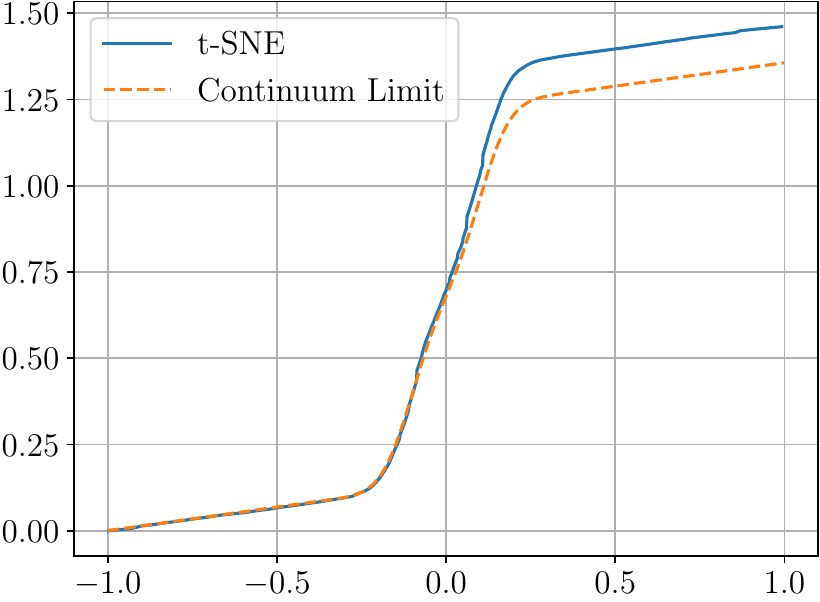}}
\subfloat{\includegraphics[width=0.32\textwidth]{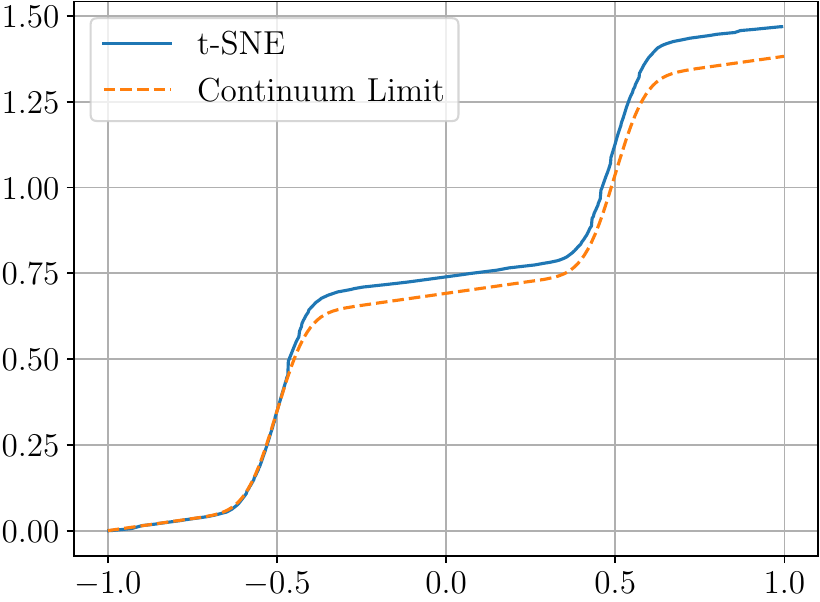}}\\
\subfloat{\includegraphics[width=0.32\textwidth]{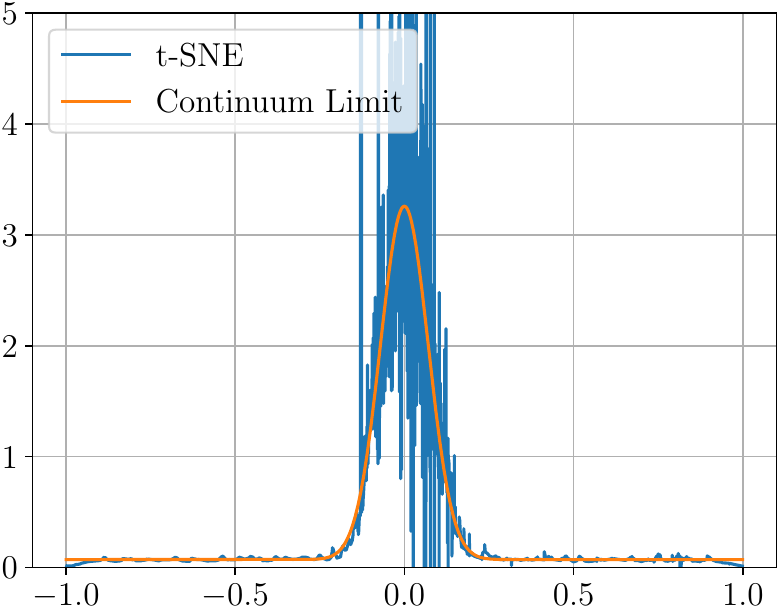}}
\subfloat{\includegraphics[width=0.32\textwidth]{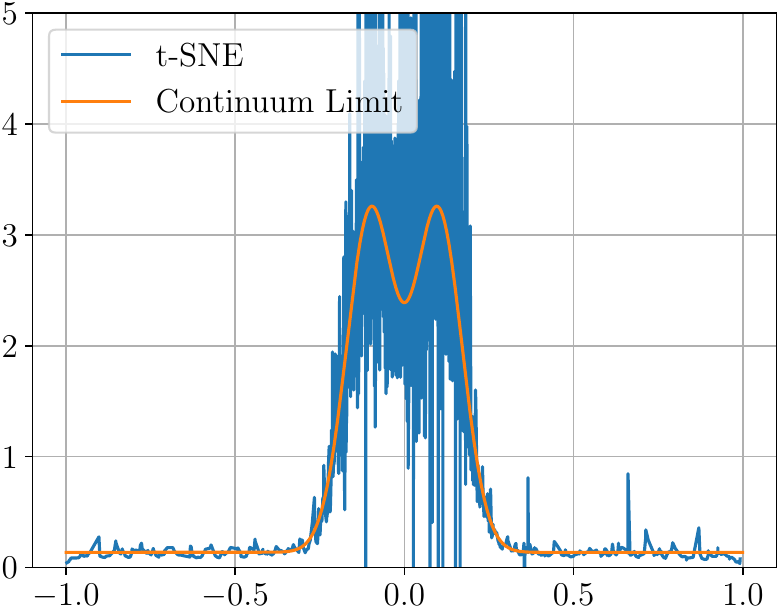}}
\subfloat{\includegraphics[width=0.32\textwidth]{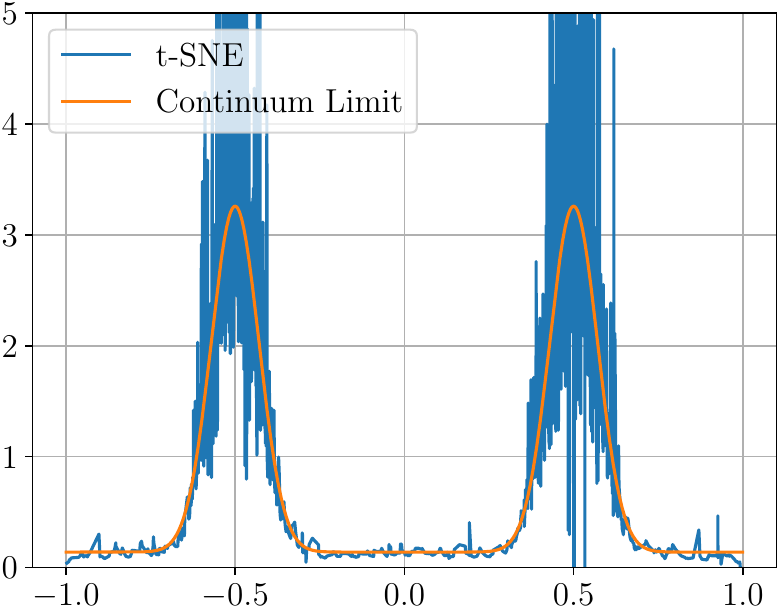}}
\caption{ {\bf Identity Initialization.} Top row compares $T_n$ and $T$ while the bottom row compares $T_n'$ and $T'$. We increase the separation between the Gaussian clusters from left to right.}
\label{fig:identity}
\end{figure}

\begin{figure}[!t]
\centering
\subfloat{\includegraphics[width=0.32\textwidth]{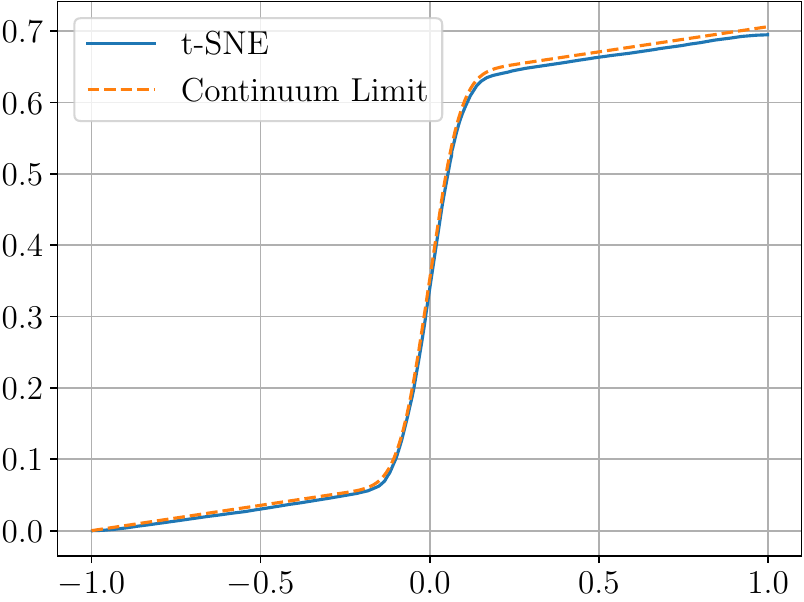}}
\subfloat{\includegraphics[width=0.32\textwidth]{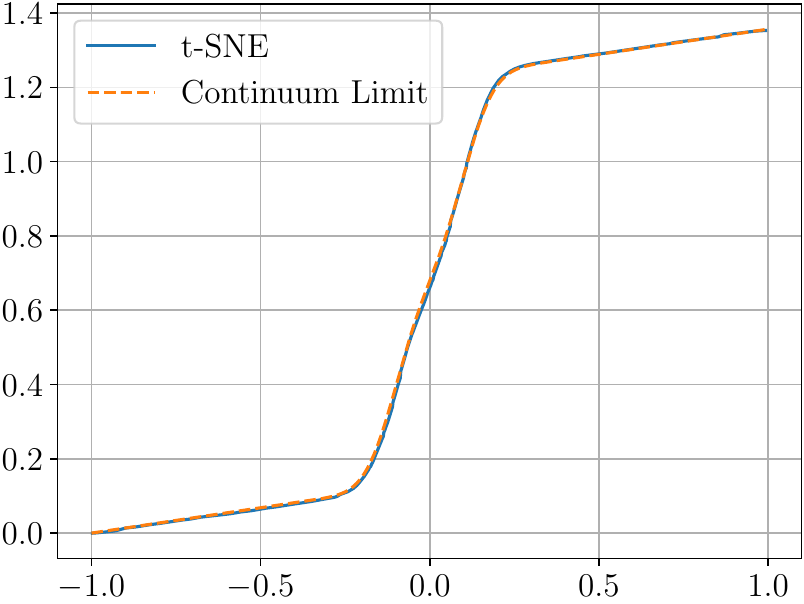}}
\subfloat{\includegraphics[width=0.32\textwidth]{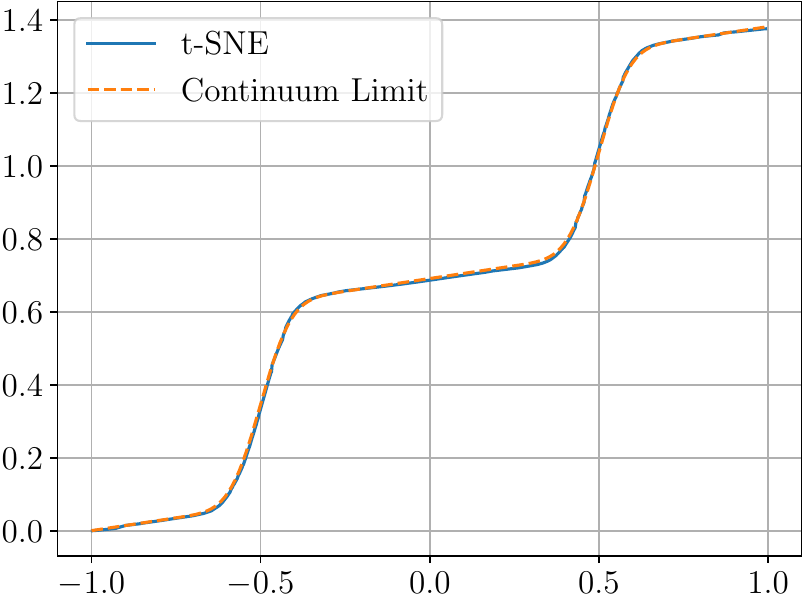}}\\
\subfloat{\includegraphics[width=0.32\textwidth]{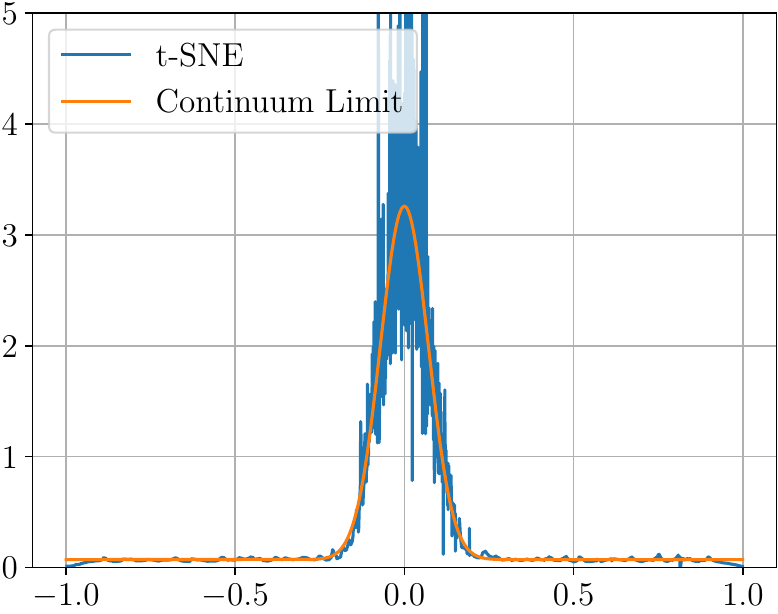}}
\subfloat{\includegraphics[width=0.32\textwidth]{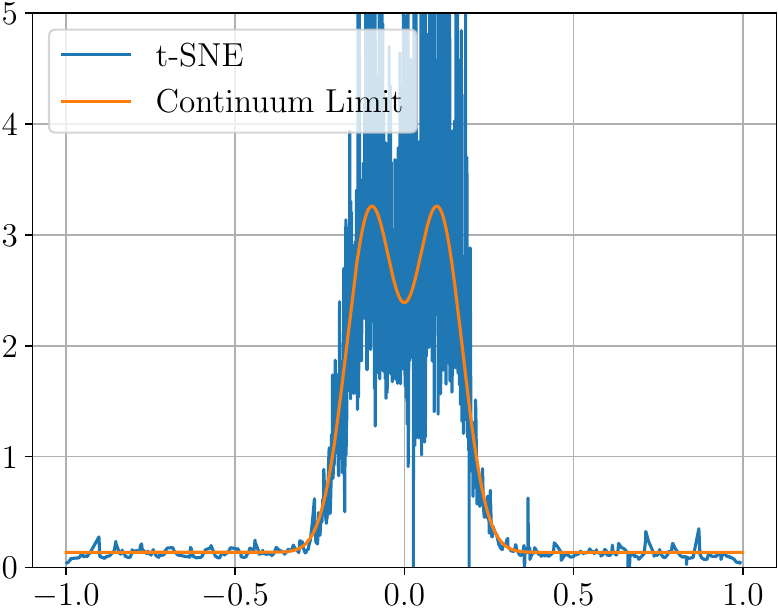}}
\subfloat{\includegraphics[width=0.32\textwidth]{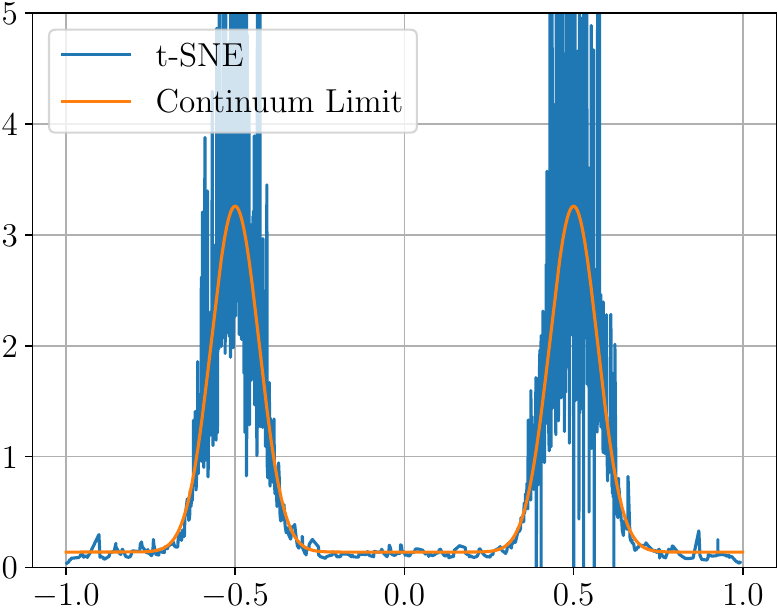}}
\caption{ {\bf Continuum Limit Initialization.} Top row compares $T_n$ and $T$ while the bottom row compares $T_n'$ and $T'$. We increase the separation between the Gaussian clusters from left to right.}
\label{fig:continuum}
\end{figure}

Figures \ref{fig:identity} and \ref{fig:continuum} show the same results except with the t-SNE gradient descent initialized to the identity map and the continuum limit, respectively. We see a much closer alignment to the continuum limit in both cases. In the case of identity initialization, the result is a monotone increasing t-SNE map $T_n$ that has a number of positive discontinuities (again, as predicted by Lemma \ref{lem:discontinuities}), which bias it away from the continuum limit. Only by initializing very close to the continuum limit, as shown in Figure \ref{fig:continuum}, can we avoid local minimizers and discontinuous t-SNE mappings and see the convergence to the continuum limit more clearly. In this case, $T_n$ appears to be very close to  $T$, suggesting a convergence result may be possible, while $T_n'$ and $T'$ are close on average, but appear to be converging only weakly. The fluctuations in the gradient represent the appearance of microstructure in the t-SNE mapping, which is erased in the continuum limit.

\begin{figure}[!t]
\centering
\subfloat{\includegraphics[width=0.32\textwidth]{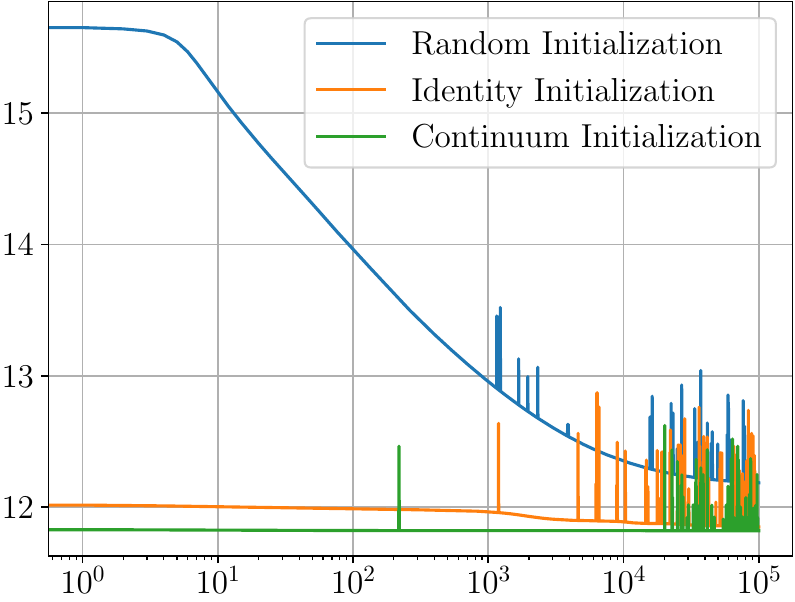}}
\subfloat{\includegraphics[width=0.32\textwidth]{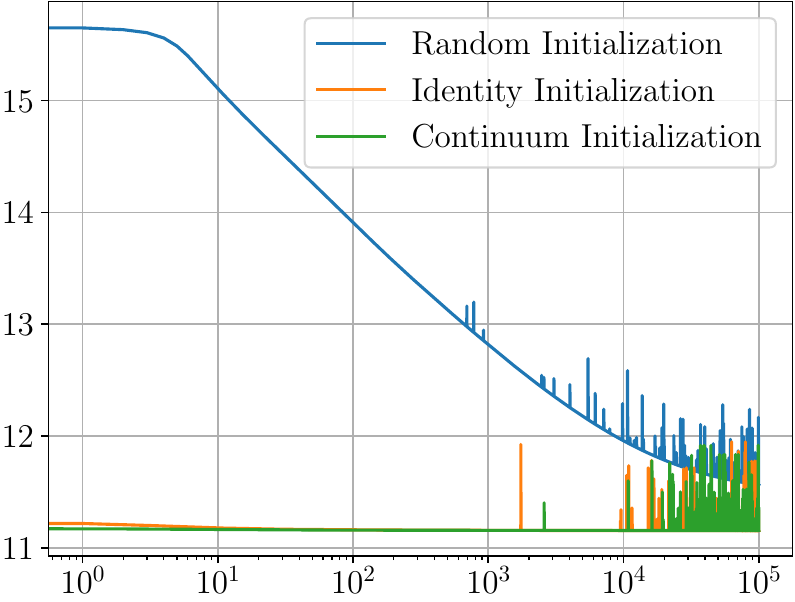}}
\subfloat{\includegraphics[width=0.32\textwidth]{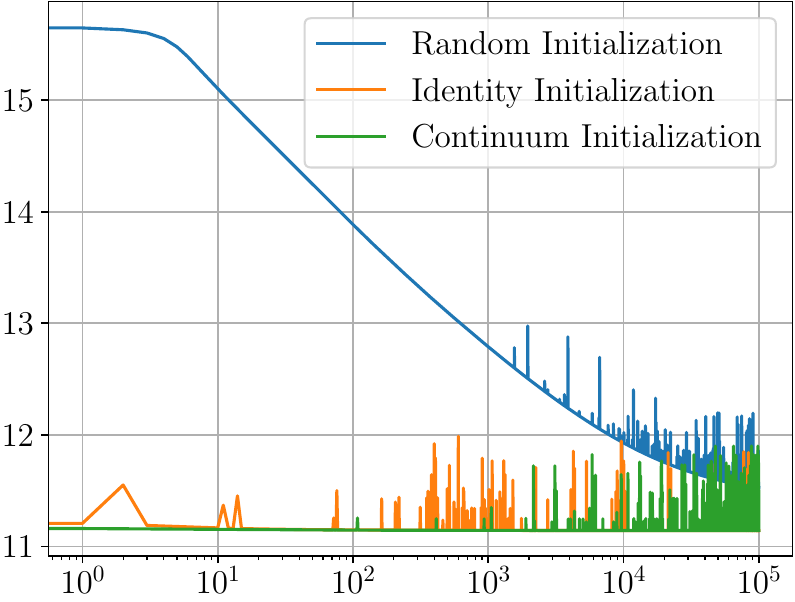}}
\caption{Comparison of t-SNE loss during optimization for random initialization, identity initialization, and initializing with the continuum limit. }
\label{fig:lossesexp}
\end{figure}

Finally, in Figure \ref{fig:lossesexp} we plot the t-SNE loss \eqref{eq:Tenergy} in all cases during all $10^5$ iterations of gradient descent. The plots have some noise near the end of training, since we did not attempt to adapt the time step in gradient descent for stability. We see that gradient descent from a random initialization is clearly finding a local minimizer, with energy substantially higher than the other two initializations after $10^5$ iterations of gradient descent. The identity and continuum limit initializations both achieve very similar energies, with the continuum limit initialization around 0.3\% smaller than the identity initialization. Over iterations of gradient descent, the energy decreased by roughly 1-2\% for the identity initialization, compared to $<0.2\%$ for the continuum limit initialization, and roughly 30\% for the random initialization. This suggests that the continuum limit initialization is very close to the global minimizer of the t-SNE energy.

\subsection{Perona-Malik Connection}\label{sec:perona}

Unfortunately, the analysis in the preceding sections is restricted to the $d=m=1$ dimensional setting. Two key reasons restricting our analysis lie in the fact that i) in any other case we cannot write the energy $\Ec$ strictly in terms of any norm $|DT|$ of the Jacobian matrix $DT$, and ii) we could rearrange $T$ to ensure that the measure of the inverse image of $y \in \R^m$ was constant in order to simplify the form of the repulsion. However, the analysis in the previous sections can be extended to a related energy in the setting of $d \geq m=1$, given by 
\begin{equation}\label{eq:Ienergy}
\Ic_\lambda[T] = \int_\Omega \log(1 + |\nabla T|^2)\rho_X \dx + \lambda\log\left( \int_\Omega |\nabla T|^{-1}\rho_X^2 \dx\right),
\end{equation}
where $\Omega\subset \R^d$ is a bounded domain, $T\in H^1(\Omega)$, $\rho$ is a density on $\Omega$, as before, and $0 < \lambda < 2$ is a hyperparameter. This energy coincides with the Perona-Malik energy \cite{perona1990scale} when $\lambda=0$ and $\rho_X \equiv \frac{1}{|\Omega|}$, and is a type of perturbation thereof when $\lambda>0$ and $\rho_X$ is nonconstant. By setting $u = |\nabla T|$ and $\tilde \rho = \lambda^{-1}\rho_X$, we find that we can equivalently minimize the energy
\[\tilde\Ic_\lambda[u] =\int_\Omega  \int_\Omega \log(1 + u^2)\tilde \rho \dx + \log\left( \int_\Omega u^{-1}{\tilde \rho\hspace{0.5mm}}^2 \dx\right),\]
over positive $u\in L^\infty(\Omega)$. The same arguments that were used in Section \ref{sec:1D} apply here, due to the observations in Remarks \ref{rem:Phiexample} and \ref{rem:density_needed}, since
\[\int_\Omega \tilde \rho \dx = \lambda^{-1}\int_\Omega \rho \dx = \lambda^{-1} > \frac{1}{2},\]
due to our assumption that $0 < \lambda < 2$. 

After finding the minimizer $u$ of $\tilde \Ic_\lambda$, we can recover a minimizer $T$ of $\Ic_\lambda$ by solving the eikonal equation 
\begin{equation}\label{eq:eikonal_T}
\left\{
\begin{aligned}
|\nabla T| &= u,&& \text{in } \Omega \setminus \Gamma\\
u &= 0,&& \text{on } \Gamma,
\end{aligned}
\right.
\end{equation}
in the viscosity sense \cite{bardi1997optimal,ViscositySolutionsLN} for some choice of Dirichlet condition set $\Gamma \subset \R^d$. In order for there to exist a unique viscosity solution of \eqref{eq:eikonal_T}, we require $u$ to be a uniformly continuous function on $\Omega$, which requires $\rho$ to be uniformly continuous. In order to make sure $u = |\nabla T|$ holds in $\Omega$, it would be natural to choose $\Gamma \subset \partial \Omega$. Another alternative is to take some uniformly continuous extension of $u$ to $\R^d$, denoted $\overline u$,  and solve the eikonal equation $|\nabla T|=\overline u$ on $\R^d \setminus \Gamma$. In this setup we can make choices like $\Gamma = \{x_0\}$ where $x_0\not\in \Omega$, and can in some situations obtain a classical solution $T\in C^1(\R^d\setminus \Gamma)$ of the eikonal equation. There are clearly many choices one can make (we can also set $u=g$ on $\Gamma$ for some prescribed function $g$), and we leave this direction of research to future work. 

It is worth pointing out briefly that gradient descent in the $L^2$ sense on the Perona-Malik energy (i.e., $\Ic_\lambda$ with $\lambda=0$ and $\rho\equiv \frac{1}{|\Omega|}$) is, up to a time-rescaling, one version of the celebrated Perona-Malik equation, given by 
\begin{equation}\label{eq:pme}
u_t = \div\left(\frac{\nabla T}{1 + |\nabla T|^2} \right), \ \ t>0.
\end{equation}
There is a simple numerical scheme, proposed in the seminal paper \cite{perona1990scale}, that can be used to ``solve'' the Perona-Malik equation \eqref{eq:pme}, and it has been widely used for smoothing and segmentation of images. However, the Perona-Malik equation is provably ill-posed; it is an anisotropic diffusion equation where some directions can exhibit backwards diffusion, and so weak solutions fail to exist unless the initial data is smooth, in which case infintely many weak solutions can exist (see  \cite{kim2015convex,kichenassamy1997perona,catte1992image}). The existence of a simple, effective and seemingly stable numerical method for an ill-posed equation has been termed the \emph{Perona-Malik Paradox} \cite{kichenassamy1997perona}, and many works have attempted to address this through analysis of the numerical scheme directly \cite{esedoglu2001analysis,esedoglu2006stability}, by proposing various regularizations of the Perona-Malik equation \eqref{eq:pme} that are well-posed \cite{catte1992image,guidotti2012backward}, or by considering gradient descent with respect to a Sobolev inner product \cite{calder2011anisotropic}. 

While our regularized energy $\Ic_\lambda$ admits a Lipshitz minimizer for any $0 < \lambda < 2$, it also clearly does so when $\lambda=0$ (constant $T$). What is not clear is whether the $L^2$-gradient descent equation on $\Ic_\lambda$ for $\lambda>0$ is well-posed. We leave such questions to future work.

\section{Existence and nonexistence in higher dimensions}
\label{sec:higherdimensionanalysis}

In this section we study the continuum limit energy $\Ec[T;\Phi_s]$ defined in \eqref{eq:continuum_tSNE}, and generalizations thereof, in the higher dimensional setting. In Section \ref{sec:continuum} we showed that there exists a unique minimizer (up to a constant) of this energy when $d=m=1$ (and $s=1$). Since $\Phi_s$ has logarithmic (i.e., sublinear) growth, such a result is quite unexpected, and the analysis relied on a delicate balance between the attraction and repulsion terms, and on simplifications in the form of the energy in one dimension. It turns out that in the higher dimensional setting where there is strict dimension reduction $d> m\geq 1$, the energy $\Ec$ does not admit a minimizer. In particular, it is not bounded below over Lipschitz functions. We establish this in Section \ref{sec:nonexistence}.  Our nonexistence argument does not work for the case of $d=m$, so we leave the existence of minimizers as an open question when $d=m\geq 2$. The argument also does not work for the nonlocal continuum energy $\Ec^h$, and we expect this energy may have better properties than $\Ec$. 

Our nonexistence result in Section \ref{sec:nonexistence} exploits the sublinearity of the attraction potential $\Phi_s$. It is natural to examine what happens in the setting of linear or superlinear growth, which we recall from Remarks \ref{rem:SNEAttraction} and \ref{rem:SNERepulsion} is the setting of the original (symmetric) SNE algorithm. Here, one has standard functional analysis tools at our disposal from $BV(\Omega)$ and $W^{1,p}(\Omega)$ spaces, and we are able to establish existence of minimizers. This material is in Section \ref{sec:existence}. 

We recall that Assumption \ref{ass:main} continues to hold.

\subsection{Nonexistence results}
\label{sec:nonexistence}

We establish here nonexistence of minimizers of $\Ec[T;\Phi_s]$ in the case of strict dimension reduction $d>m\ge1$. Since the arguments are quite general and only require sublinearity of $\Phi$, we establish our result under the assumption that
\begin{equation}\label{eq:sublinearcase}
\Phi(A) \le C (1 + |A|^\alpha),
\end{equation}
for some $0 < \alpha < 1$, some constant $C>0$ and all $A\in \R^{m\times d}$. 
\begin{theorem}[Nonexistence]
\label{thm:sublinear}
Assume that $d>m\ge 1$, $\Omega=[0,1]^d$, and that $\Psi$ satisfies \eqref{eq:sublinearcase} with $0 < \alpha < 1$. Then there exists a sequence of Lipschitz functions $T_k:\Omega\to \R^m$ and a constant $C>0$ such that $\Rc[T_k] \leq C-\frac{1}{m}\log k$ and $\Ac[T_k;\Phi] \leq C$ for all $k\geq 1$. In particular, the energy $\Ec[T;\Phi]$ does not admit a minimizer. 
\end{theorem}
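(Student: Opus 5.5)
We will construct the maps $T_k$ explicitly so that the pushforward density $\rho_Y$ is spread over a region of volume roughly $k$ in $\R^m$, while the Jacobian blows up only on a thin set. The gain comes from the $d-m\geq 1$ spare directions. We split $\Omega=[0,1]^d$ into $k$ thin slabs $S_j=\{x: x_d\in [(j-1)/k, j/k)\}$, $j=1,\dots,k$. Write $x=(x',x_d)$ with $x'=(x_1,\dots,x_{d-1})$; since $d-1\geq m$, the first $m$ coordinates $\pi(x)=(x_1,\dots,x_m)$ are available. Next we choose $k^{1/m}$-spaced translation vectors $c_j\in\R^m$, for instance the points of the lattice $2\mathbb{Z}^m\cap[0,2\lceil k^{1/m}\rceil]^m$, so that the unit cubes $c_j+[0,1]^m$ are pairwise disjoint. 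The ideal (discontinuous) map is $T(x)=\pi(x)+c_j$ on $S_j$. Its pushforward density is $\rho_Y(y)=\sum_j \int_{\{x\in S_j:\pi(x)=y-c_j\}}\rho_X\,d\H^{d-m}\leq \rho_{max}/k$ on each of the $k$ disjoint cubes. Hence $\|\rho_Y\|_{L^2}^2\leq k\cdot(\rho_{max}/k)^2=\rho_{max}^2/k$, and for $m=1,2$ this gives $\Rc\le C-\log k\le C-\frac1m\log k$.

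For $m\ge3$ we must bound the Riesz energy $\iint \rho_Y(y)\rho_Y(y')|y-y'|^{-2}$. We split it into same-cube pairs and different-cube pairs. Same-cube pairs contribute $k\cdot(\rho_{max}/k)^2\cdot C=C/k$. Cubes $i\ne j$ contribute at most $k^{-2}|c_i-c_j|^{-2}$. Summing over the lattice gives $k^{-2}\sum_i\sum_{j\neq i}|c_i-c_j|^{-2}\lesssim k^{-1}\sum_{1\le |\ell|\lesssim k^{1/m}}|\ell|^{-2}$. That lattice sum is $\lesssim k^{(m-2)/m}$, so the total is $\lesssim k^{-2/m}$. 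Thus $\Rc\le C-\frac{2}{m}\log k$, which is stronger than needed. This matches the exponent $\frac1m$ in the statement in the worst case, and the statement's $-\frac1m\log k$ is then a uniform bound across all $m$.

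To make $T_k$ Lipschitz we interpolate across each slab interface in a layer of width $\delta_k$ in $x_d$, linearly moving $c_j$ to $c_{j+1}$. There $|DT_k|\lesssim 1+|c_{j+1}-c_j|/\delta_k\lesssim k^{1/m}/\delta_k$. Order the $c_j$ along a snake path through the lattice so that consecutive ones are adjacent, which gives $|DT_k|\lesssim 1/\delta_k$. The attraction then satisfies
\[\Ac[T_k;\Phi]\le C\int_\Omega(1+|DT_k|^\alpha)\rho_X\,dx\le C+C\,k\,\delta_k\,\delta_k^{-\alpha}.\]
Choosing $\delta_k=k^{-1/(1-\alpha)}$ gives $k\delta_k^{1-\alpha}=1$, so $\Ac\le C$; this is exactly where $\alpha<1$ is used.

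The main obstacle is controlling the repulsion after smoothing. The transition layers push mass into the gaps between cubes, and $\rho_Y$ must stay in $L^2$ with the same bounds. The transition layers carry total mass $\lesssim k\delta_k\to0$. In a layer the map is $(x',x_d)\mapsto \pi(x)+c_j+t(x_d)(c_{j+1}-c_j)$, and its pushforward is spread over a tube of width $1$ and length $2$ with density $\lesssim \rho_{max}\delta_k\cdot(\text{layer}^{-1}\text{ Jacobian})$. This is bounded by the same $\rho_{max}/k$ scale per cube, because the $x_d$-extent $\delta_k$ is mapped onto a length-$2$ segment. Adding these contributions therefore changes the $L^2$ and Riesz bounds only by constant factors. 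We also verify that $\rho_Y$ is absolutely continuous, which holds since $\pi$ and the interpolation are submersions. Finally, unboundedness below of $\Ec=\Ac+\Rc\le C-\frac1m\log k\to-\infty$ rules out a minimizer.
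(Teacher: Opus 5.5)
Your proposal is correct and is essentially the paper's proof. Both cut $[0,1]^d$ into $k$ slabs along a direction orthogonal to the first $m$ coordinates, translate the slabs apart so the pushforward density is $O(1/k)$, and join them by transition layers of width $k^{-1/(1-\alpha)}$ (the paper's $\mu/k$ with $\mu=k^{-\alpha/(1-\alpha)}$), whose total attraction cost $k\,\delta_k^{1-\alpha}$ stays bounded by sublinearity.

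The only real difference is cosmetic. The paper places the translated slabs contiguously in a row $[0,k]\times[0,1]^{m-1}$, shifting by one unit along $e_1$ across each layer, which makes your spaced lattice and snake-path ordering unnecessary. Both arrangements give the $C/k$ density bound and, for $m\geq 3$, the same $-\frac{2}{m}\log k$ repulsion estimate.
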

\begin{proof}
We begin with the standard orthogonal projection $P:\R^d\rightarrow \R^m$ that maps $(x_1,\dots,x_d)\in \R^d$ to the first $m$ variables. As $d>m$ we fix a direction $e:=e_{m+1}\perp \R^m$ and we want to ``cut'' the domain $\Omega=[0,1]^d$ into strips along this direction, and translate the images of the strips under the projection $P$. Heuristically, because the attraction kernel is strictly sublinear, the attraction energy will ignore these jump discontinuities, and this allows us to distribute the mass by appropriate translations so that the repulsion energy can be arbitrarily close to $-\infty$, as the number of cuts $k\to \infty$. 

As we are working with Lipschitz maps, we require transition sets $I_{k,n}\subset [0,1]$ of cutting, which are defined as follows
\[
I_{k,n}:=\left[\frac{n-\mu}{k},\frac{n}{k}\right]\subset [0,1],
\]
for a fixed positive integer $k$, a number $\mu\in(0,1/2)$ to be determined and $n=1,\dots,k-1$. 

Let $\displaystyle I_k:=\bigcup_{n=1}^{k-1} I_{k,n}$, we consider the following Lipschitz maps
\begin{equation}
\label{eq:cuttingmaps}
T_k(x):= P(x) + \left[\frac{k}{\mu}\int_0^{e\cdot x} 1_{I_k}(s) \d s\right] e_1.
\end{equation}
Note that $T_k$ maps $[0,1]^d$ onto $[0,k]\times [0,1]^{m-1}$.

Denote $\rho_{Y,k} = (T_k)_\#1_{[0,1]^d} $ as the density of the pushforward measure of $1_{[0,1]^d}$ under the map $T_k$. We claim that there is a constant $C>0$ such that for all $\mu\in(0,1/2)$ and large $k$
\begin{equation}
    \label{eq:uniformboundonthetargetdensity}
    \rho_{Y,k}(y) \le C/k
\end{equation}
for almost every $y\in [0,k]\times [0,1]^{m-1}$. To show this claim we need to estimate for each $y\in [0,k]\times [0,1]^{m-1}$ and small $\delta>0$ the volume of 
$$
D_\delta(y):=T_k^{-1}(B_\delta(y)) \cap [0,1]^d.
$$ 
Observe that when $\delta>0$ is small and $k$ large, there is a constant $c>0$ depending only on $d$ and $m$ such that the set $D_\delta(y)$ can be covered by at most four rectangles of the following forms
\[
[-\delta,\delta]^{m}\times[0,1/k]\times[0,1]^{d-m-1} \textup{ and }[-c\delta\mu/k,c\delta\mu/k]^{m}\times[0,2]^{d-m}
\]
after translations and rotations. Here the first type of rectangles cover the points $x\in D_\delta(y)$ outside the transition set $\{e\cdot x\in I_k\}$, and the second type cover the points inside the transition set. Therefore there is a constant $C=C(d,m)$ such that
\[
\rho_{Y,k}(y) \le \limsup_{\delta\rightarrow0} \frac{|D_\delta(y)|}{|B_\delta(y)|} \le C \max\{1/k, \mu^m/k^{m}\} = \frac{C}{k}
\]
when $k$ is large, which proves the claim.

In the case $d>m=1,2$, because $\rho_{Y,k}$ is supported on $[0,k]\times[0,1]^{m-1}$, we know that the repulsion energy satisfies
\[
\Rc[T_k]=\log\left[\int_{[0,k]\times[0,1]^{m-1}} \rho_{Y,k}^2(y) dy\right] \le C - \log k
\]
On the other hand, $DT_k(x) = DP+\frac{k}{\mu}1_{I_k}(e\cdot x)(e\otimes e_1)$ and $\Phi$ is strictly sublinear, so the attraction term satisfies for large $k$
\[
\Ac[T_k;\Phi]=\int_{\R^d} \Phi(\sigma(x)DT_k(x)) \rho_X(x) dx \leq C\left(1 + |DP|^\alpha + \left(\tfrac{ k}{\mu}\right)^\alpha |I_k|\right) 
\]
for a constant $C>0$. Since the measure $|I_k|=(k-1)\cdot(\mu/k)\leq\mu\in(0,1/2)$ we have
\[
\Ac[T_k;\Phi] \leq C_1 + C_2 {k^\alpha\mu^{1-\alpha}}\le C,
\]
for some positive constants $C_1,C_2 \textup{ and }C$ independent of $k$, if one chooses $\mu=k^{-\frac{\alpha}{1-\alpha}}$. This completes the proof when $m=1,2$.


In the case $m \ge 3$ the repulsion energy of $T_k$ satisfies, by using \eqref{eq:uniformboundonthetargetdensity}
\[
\begin{split}
    \Rc[T_k]&=\log\left[\int_{[0,k]\times[0,1]^{m-1}}\int_{[0,k]\times[0,1]^{m-1}}  \frac{\rho_{Y,k}(y)\rho_{Y,k}(y')}{|y-y'|^2} dydy'\right] \\
    &\le \log\left[\int_{[0,k]\times[0,1]^{m-1}}\int_{[0,k]\times[0,1]^{m-1}}  \frac{1}{|y-y'|^2} dydy'\right] + C - 2\log k.
\end{split}
\]
Note that if we write $L_k:=[0,k]\times[0,1]^{m-1}$ we have for some $\delta>0$ to be determined later
\[
\begin{split}
    \int_{L_k}\int_{L_k}  \frac{1}{|y-y'|^2} dydy'&= \iint_{L_k^2\cap\{|y-y'|\ge k^\delta\} }\frac{1}{|y-y'|^2} dydy' + \iint_{L_k^2\cap \{|y-y'|\le k^\delta\}} \frac{1}{|y-y'|^2} dydy'\\
&\le |L_k^2\cap\{|y-y'|\ge k^\delta\}| k^{-2\delta} + \int_{L_k} \int_{B_{k^\delta}(y')}  \frac{1}{|y-y'|^2} dydy'\\
&\le k^{2-2\delta} + \frac{C}{m-2}k^{1+(m-2)\delta}.
\end{split}
\]
This shows that, if we choose $\delta=\frac{1}{m}$
\[
\begin{split}
    \Rc[T_k]  &\le \log\lmb {k^{-2\delta} + \frac{C}{m-2}k^{(m-2)\delta-1} } \rmb   + C = C' - \frac{2}{m}\log k,
\end{split}
\]
which completes the proof in the case $d>m\ge 3$.
\end{proof}

The sequence $T_k$ is constructed in the proof of Theorem \ref{thm:sublinear} by cutting the domain $\Omega$ up into thin strips and infinitely spreading them out as $k\to \infty$. The sublinearity of $\Phi$ ensures that the cost of the cuts is negligible (similar to the ideas in Lemma \ref{lem:discontinuities}), and spreading the mass out leads to $\phi_Y \to 0$ in $L^2$, which drives the repulsion term to $-\infty$. When $d=m$, we do not have any additional directions along which to ``cut'' and the argument fails. 

If we specialize Theorem \ref{thm:sublinear} to the t-SNE attraction energy $\Ec[T;\Phi_\infty]$ for $m\geq 2$, then we can exploit the scaling properties of the energy from Section \ref{sec:scaling_inv} to construct an analogous sequence $\tilde T_k$ which drives the attraction term to $-\infty$ while keeping the repulsion bounded. 
\begin{corollary}
\label{cor:sublinear}
Assume that $d>m\ge 2$ and $\Omega=[0,1]^d$. Then there exists a sequence of Lipschitz functions $\tilde T_k:\Omega\to \R^m$ and a constant $C>0$ such that $\Rc[\tilde T_k] \leq C$ and $\Ac[\tilde T_k;\Phi_\infty] \leq C-\frac{1}{m}\log k$ for all $k\geq 1$. 
\end{corollary}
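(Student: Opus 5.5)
The plan is to take the sequence $T_k$ from the proof of Theorem \ref{thm:sublinear} and rescale it, $\tilde T_k = \lambda_k T_k$, using the scaling identities of Section \ref{sec:scaling_inv}. For $m\geq 2$, Proposition \ref{prop:scaling_inv} (in the limit $s=\infty$, see \eqref{eq:scaling_inv_inf}) gives $\Ec[\lambda T;\Phi_\infty]=\Ec[T;\Phi_\infty]$. In its proof the two terms move separately: $\Rc[\lambda T]=\Rc[T]-2\log\lambda$, and, by homogeneity of the logarithm in $\Phi_\infty$, $\Ac[\lambda T;\Phi_\infty]=\Ac[T;\Phi_\infty]+2\log\lambda$. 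Theorem \ref{thm:sublinear} gives $\Rc[T_k]\le C-\tfrac1m\log k$ (indeed $C-\log k$ for $m=2$ and $C-\tfrac2m\log k$ for $m\ge3$). The idea is to choose $\lambda_k$ so that the repulsion becomes bounded, which transfers the decay onto the attraction.

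\textbf{Step 1: check the hypothesis.} $\Phi_\infty$ does not satisfy \eqref{eq:sublinearcase} near $A=0$, because $\log|Az|^2\to-\infty$. However, only an upper bound is needed, and Remark \ref{rem:attraction} gives $\Phi_\infty(A)\le\log(c_\eta|A|^2)\le C(1+|A|^\alpha)$ for any $\alpha\in(0,1)$. So the attraction estimate in the proof of Theorem \ref{thm:sublinear} still gives $\Ac[T_k;\Phi_\infty]\le C$.

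\textbf{Step 2: make the attraction finite.} We also need $\Ac[T_k;\Phi_\infty]>-\infty$ so the quantities are meaningful. This holds because $DT_k$ always contains $DP$, which has rank $m$, so $|DT_k(x)z|\ge|Pz|$. Hence $\Phi_\infty(\sigma DT_k)\ge\int\eta\log(\sigma_{min}^2|Pz|^2)\dz>-\infty$; the logarithmic singularity on $\{Pz=0\}$ is integrable.

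\textbf{Step 3: choose the scaling.} Set $\lambda_k=k^{-1/(2m)}$ and $\tilde T_k=\lambda_kT_k$, which is Lipschitz. Then
\[\Rc[\tilde T_k]=\Rc[T_k]+\tfrac1m\log k\le C,\qquad \Ac[\tilde T_k;\Phi_\infty]=\Ac[T_k;\Phi_\infty]-\tfrac1m\log k\le C-\tfrac1m\log k.\]
This is the claimed bound.

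\textbf{Main obstacle.} The delicate point is justifying the separate scaling identities for the two terms at $s=\infty$. For the repulsion, this needs $\rho_{Y}$ of $\lambda T$ to equal $\lambda^{-m}\rho_Y(\cdot/\lambda)$, which follows from the change of variables \eqref{eq:change_of_variables}. For the attraction, we use $\log(\lambda^2|Az|^2)=2\log\lambda+\log|Az|^2$ pointwise, with finiteness supplied by Step 2. Once these hold, the corollary is a direct bookkeeping consequence of Theorem \ref{thm:sublinear}.
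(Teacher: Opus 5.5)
Your proposal is correct and is the paper's own proof: set $\tilde T_k=k^{-\frac{1}{2m}}T_k$ and apply the separate scalings $\Rc[\lambda T]=\Rc[T]-2\log\lambda$ and $\Ac[\lambda T;\Phi_\infty]=\Ac[T;\Phi_\infty]+2\log\lambda$ from the proof of Proposition \ref{prop:scaling_inv}. Your Step 1 supplies a check the paper leaves implicit, namely that $\Phi_\infty$ obeys \eqref{eq:sublinearcase}; since that is only a one-sided upper bound, $\Phi_\infty$ in fact satisfies it outright, contrary to your remark about $A=0$. One caveat: the pointwise bound $|DT_k(x)z|\ge|Pz|$ in Step 2 is false on the transition set (take $z_1=-\tfrac{k}{\mu}z_{m+1}\neq 0$). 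Step 2 is not needed for the stated upper bounds, and finiteness follows instead from $DT_k(x)DT_k(x)^T=I_m+\tfrac{k^2}{\mu^2}\one_{I_k}(e\cdot x)\,e_1e_1^T\succeq I_m$. This gives $|DT_k(x)z|\ge|\Pi_x z|$, where $\Pi_x$ is the orthogonal projection onto the row space of $DT_k(x)$, and rotation invariance of $\eta(|z|)$ then finishes the argument.
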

\begin{proof}
We simply set $\tilde T_k = k^{-\frac{1}{2m}}T_k$, where $T_k$ is the sequence defined in Theorem \ref{thm:sublinear}, and use the scaling deduced in Proposition \ref{prop:scaling_inv}. 
\end{proof}
The sequence $\tilde T_k$ is a simply obtained by shrinking $T_k$ down by the factor $k^{-\frac{1}{2m}}$. Hence, instead of spreading out mass to $\infty$, the sequence $\tilde T_k$ keeps the mass bounded and the cuts lead to the creation of \emph{microstructure}, which drives the attraction term to $-\infty$. The microstructure perspective is more closely linked to practice, since t-SNE embeddings do not inherently have any scale, and so they are essentially always unit normalized. Indeed, the creation of microstructure is somewhat reminiscent of how t-SNE breaks up the sphere in the plots in Figure \ref{fig:tsneplots}.

While these nonexistence results preclude the possibility of the discrete t-SNE minimizers converging to a minimizer of the continuum limit energy $\Ec[T;\Phi_s]$ when $d>m$, since the latter does not admit minimizers, there are nevertheless some important observations to be made. First, the t-SNE energy is nonconvex and it is generally not expected that gradient descent will find a global minimizer (recall the trouble we went through to find the global minimizer of the t-SNE energy numerically in one dimension in Section \ref{sec:num}). In particular, the sequence $T_k$ that we constructed requires \emph{exponentially} many cuts to decrease the energy linearly to $-\infty$. Such a complicated structure is unlikely to naturally arise in gradient descent, and in practice we often see that t-SNE prefers to make a small number of cuts. Thus, we expect the continuum energy $\Ec[T;\Phi_s]$ is still relevant to understanding t-SNE, even though it does not admit minimizers. 

The second, and perhaps more important, point to make is that our nonexistence argument does not work for the non-local energy $\Ec^h$ (see \eqref{eq:nonlocal_attraction} for the definition), as is verified in the following result. 
\begin{lemma}\label{lem:nonEh}
Assume that $d>m\ge 1$ and $\Omega=[0,1]^d$. Let $T_k$ be the sequence constructed in Theorem \ref{thm:sublinear} with parameter $\mu>0$ for the attraction term $\Phi_s$. Then there exists $C>0$ such that for $\mu \ll k h$ we have
\[\Ac^h[T_k] \geq C kh\log(1 + h^{-2}).\]
\end{lemma}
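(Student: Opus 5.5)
The plan is to bound $\Ac^h[T_k]$ from below by throwing away everything except pairs $(x,x')$ that straddle one of the $k-1$ thin transition layers $\{e\cdot x\in I_{k,n}\}$. Across such a layer $T_k$ jumps (in its $e_1$ component) by essentially $1$ over a width $\mu/k$. The idea is that the nonlocal kernel sees each layer as a genuine discontinuity of size $O(1)$ rather than as a steep Lipschitz ramp.

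\textbf{Step 1 (one layer).} Fix $n$ and consider $x,x'\in\Omega$ with $e\cdot x\le \frac{n-\mu}{k}$, $e\cdot x'\ge \frac nk$, and $|x-x'|\le \sigma_{min}h$. On this set
\[
(T_k(x')-T_k(x))\cdot e_1 \;=\; 1 + (x'-x)\cdot e_1 \;\ge\; 1 - \sigma_{min}h \;\ge\; \tfrac12
\]
for $h\ll1$. Hence $\log(1+h^{-2}|T_k(x)-T_k(x')|^2)\ge \log(1+\tfrac14 h^{-2})\ge c\log(1+h^{-2})$.

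\textbf{Step 2 (weights).} Since $\eta$ is decreasing with $\eta(0)>0$, there is $r_0>0$ with $\eta\ge\eta(r_0)>0$ on $[0,r_0]$. Shrinking the distance threshold to $\min(r_0,1)\sigma_{min}h$ gives
\[
\eta_{h\sigma(x)}(|x-x'|)\ge c\,h^{-d}
\]
on the relevant pairs. Using $\rho_X\ge\rho_{min}$, the contribution of this layer is then at least $c\,h^{-d}\log(1+h^{-2})$ times the Lebesgue measure of the set of such pairs.

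\textbf{Step 3 (measure of the pair set).} This is where $\mu\ll kh$ enters. The layer has thickness $\mu/k\le \tfrac12\sigma_{min}r_0 h$, so the straddling condition is compatible with $|x-x'|\lesssim h$. Choose $x$ in a slab of thickness $\asymp h$ just below the layer, and $x'$ in a ball of radius $\asymp h$ around $x$ intersected with the region above the layer. This gives a set of pairs of measure $\gtrsim h\cdot h^d$ (the cross-section of $\Omega$ is $[0,1]^{d-1}$, so this is $O(1)$). The needed geometric inequality is that a ball of radius $ch$ centered at distance at most $ch/4$ below a slab of width $\le ch/4$ has a portion of volume $\gtrsim h^d$ beyond the slab. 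Near $\partial\Omega$ and for the layers $n$ close to $0$ or $k$ one should double-check that there is room. This requires $1/k\gtrsim h$; otherwise the slabs from adjacent layers overlap, but one may then restrict to the slab between consecutive layers of width $\min(h,1/k)$. I expect this bookkeeping, especially the regime $kh\gtrsim1$ where the stated bound exceeds $\log(1+h^{-2})$ times $O(1)$, to be the main obstacle, and I would state the step assuming $kh\lesssim1$ first. Note that the slab $e\cdot x\in[\frac{n-\mu}{k}-ch,\frac{n-\mu}{k}]$ for different $n$ are disjoint when $h\ll 1/k$.

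\textbf{Step 4 (sum).} Summing the disjoint contributions over the $k-1\ge k/2$ layers gives
\[
\Ac^h[T_k]\;\ge\; c\,(k-1)\,h^{-d}\,h^{d+1}\log(1+h^{-2})\;\ge\; C\,kh\log(1+h^{-2}),
\]
since the integrand is nonnegative and the remaining pairs can be discarded.
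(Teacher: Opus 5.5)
Your proposal is correct and is essentially the paper's proof. The paper sets $\delta=rh\sigma_{min}$ with $\eta\ge C_1$ on $[0,r]$ and keeps only pairs with $e\cdot x$ in a strip of width $\delta/2$ on one side of each layer $I_{n,k}$ and $x'\in B(x,\delta)$ on the other side; this set has measure $\gtrsim\delta^d$ precisely because $\mu/k\ll h$. It then uses the jump of size about $1$ across the layer (your $1-\sigma_{min}h$ is slightly more accurate than the paper's $\ge 1$) and sums over the $k-1$ layers.

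Your caveat about $kh\gtrsim 1$ is well founded. The paper's summation tacitly needs these strips to be disjoint across $n$ (i.e.\ $\delta\lesssim 1/k$). Moreover, for $kh\gg 1$ the stated bound is actually false: $|T_k(x)-T_k(x')|\le (k+1)|x-x'|+2$ together with Jensen's inequality gives $\Ac^h[T_k]\le C\log(k+h^{-1})+C$. So the lemma should be read with $kh\lesssim 1$, which is exactly the regime your argument covers.
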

\begin{proof}
By our assumptions on $\eta$, there exists $r>0$ and $C_1>0$ such that $\eta(t) \geq C_1$ for all $0 \leq t \leq r$. Let $\delta = r h \sigma_{min}$. Then it holds that 
\begin{equation}\label{eq:eta_lower}
\eta_{h\sigma(x)}(|x-x'|) \geq \frac{C_1}{h^d\sigma_{max}^d} \ \ \text{provided} \ \ |x-x'|\leq \delta.
\end{equation}
Let $\mu, k,I_{n,k}$ and $T_k$ be as in the proof of Theorem \ref{thm:sublinear}, and define 
\[J_{n,k}^+ = \left[\tfrac{n}{k},\tfrac{n}{k} + \tfrac{\delta}{2}\right] \ \ \text{and}  \ \ J_{n,k}^- = \left[\tfrac{n}{k} - \tfrac{\delta}{2},\tfrac{n}{k}\right]\setminus I_{n,k}.\]
Then we have 
\begin{align*}
\Ac^h[T_k] &\geq \sum_{n=1}^{k-1} \int_{J^+_{n,k}}\int_{J^-_{n,k}\cap B(x,\delta)}\eta_{h\sigma(x)}(|x-x'|) \log(1 + h^{-2}|T(x) - T(x')|^2) \dx' \rho_X(x) \dx.
\end{align*}
Now, for $x\in J^+_{n,k}$ and $x'\in J^{-}_{n,k}$ we have $|T(x) - T(x')| \geq 1$ and $|J^-_{n,k}\cap B(x,\delta)| \geq C_2 \delta^d$ for a constant $C_2$, provided $\mu/k \ll h$. Combining this with \eqref{eq:eta_lower} yields
\begin{align*}
\Ac^h[T_k] &\geq \frac{C_1}{h^d\sigma_{max}^d} \log(1 + h^{-2})  \sum_{n=1}^{k-1} \int_{J^+_{n,k}}\int_{J^-_{n,k}\cap B(x,\delta)}\dx' \rho_X(x) \dx\\
&\geq C_1C_2 \left(\frac{r \sigma_{min} }{\sigma_{max}}\right)^d \log(1 + h^{-2})  \sum_{n=1}^{k-1} \int_{J^+_{n,k}}\rho_X(x) \dx\\
&\geq \frac{1}{2}C_1C_2 \rho_{min} \left(\frac{r \sigma_{min} }{\sigma_{max}}\right)^d k\, \delta \log(1 + h^{-2}),
\end{align*}
which completes the proof.
\end{proof}

Lemma \ref{lem:nonEh} shows that the nonlocal attraction term is sensitive to the cuts made in the construction of $T_k$ in Theorem \ref{thm:sublinear}, though this sensitivity vanishes as $h\to 0$. Since the growth of $\Ac^h$ is linear in $k$, while the respulsion is logarithmic, the nonlocal energy for fixed $h>0$ will only allow a finite number of cuts before the procedure fails to produce an energy decrease. Thus, one may expect the nonlocal energy $\Ec^h$ to possess better properties, and may even admit a minimizer when $d>m$. We may also be able to view the nonlocal energy as providing some type of regularization effect on $\Ec$ as $h\to 0$. In particular, in our consistency results in Section \ref{sec:consistency} we assumed $T$ was regular (i.e., $T$ and $DT$ Lipschitz). In the case where $T$ has discontinuities, or is nearly discontinuous, Lemma \ref{lem:nonEh} suggests that the limit $\lim_{h\to 0}\Ec^h$ may have additional terms that depend on the cuts in some way. We leave this analysis to future work.

\subsection{Existence results and SNE}
\label{sec:existence}

Here, we present some existence results under stronger assumptions on the growth of the attraction term. We also consider a more general functional of the form
\begin{equation}\label{eq:gen_func}
\Jc[T] = \int_\Omega \Psi(DT,x) \dx + \Rc_{SNE}[T],
\end{equation}
where we recall that the SNE repulsion $\Rc_{SNE}[T] = \log\left( \|\rho_Y\|_{L^2(\R^m)}^2\right)$, defined in Remark \ref{rem:SNERepulsion}, agrees with the t-SNE repulsion for $m=1,2$, but does not change for $m\geq 3$. We aim to show that minimizers exist in the Sobolev space $W^{1,p}(\Omega) = W^{1,p}(\Omega;\R^m)$ when $\Psi$ has at least \emph{linear growth} in $A$ and $m \geq 1$. The energy $\Jc[T]$ is a generalization of the continuum limit of the (symmetric) SNE algorithm, as discussed in Remark \ref{rem:SNERepulsion}, in which case $\Psi$ has quadratic (superlinear) growth. 

The majority of the existence argument follows standard ideas in the calculus of variations (see, e.g., \cite{dacorogna2008direct}), by establishing coercivity and weak lower semicontinuity. Along the way, special care must be taken concerning the repulsion term.  We first establish coercivity, which requires lower bounds on the growth of $\Psi$. In particular, we assume there exists $\alpha>0$ and $\beta\geq 0$ such that 
\begin{equation}\label{eq:superlinear}
\Psi(A,x) \geq \alpha|A|^p - \beta,
\end{equation}
where $p\in [1,\infty)$. 
\begin{lemma}[Coercivity]\label{lem:coercive}
Let  $p\geq 1$, and assume $\Psi$ satisfies \eqref{eq:superlinear}. Then there exist $C_1,C_2>0$ depending only on $\rho_X, \Omega, p$ and $d$ such that  
\begin{equation}\label{eq:coercive}
\Jc[T] \geq C_1\alpha\|T\|^p_{W^{1,p}(\Omega)} - \beta|\Omega| -C_2(\alpha^{-1}+1)
\end{equation}
holds for all $T\in W^{1,p}(\Omega)$ with zero mean.
\end{lemma}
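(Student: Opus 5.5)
The plan is to bound the attraction and repulsion terms from below separately and then combine them. For the attraction, the growth condition \eqref{eq:superlinear} gives
\[\int_\Omega \Psi(DT,x)\dx \geq \alpha\|DT\|_{L^p(\Omega)}^p - \beta|\Omega|.\]
Since $T$ has zero mean and $\Omega$ is bounded and Lipschitz, the Poincaré–Wirtinger inequality gives $\|T\|_{L^p} \leq C_P\|DT\|_{L^p}$. Hence $\|DT\|^p_{L^p}\geq c\|T\|^p_{W^{1,p}}$, and the attraction is at least $c\alpha\|T\|_{W^{1,p}}^p - \beta|\Omega|$.

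The real work is the repulsion, which can be very negative when the mass of $\rho_Y$ spreads out. I will bound $\log\|\rho_Y\|_{L^2}^2$ from below by $-C\log(1+\|T\|_{W^{1,p}})$ via a support/spread argument. Since $\rho_Y$ is a probability density, Cauchy–Schwarz on any set $S\subset\R^m$ gives
\[\mu(S)^2\leq |S|\,\|\rho_Y\|^2_{L^2}.\]
Take $S=B_R$ and use Chebyshev together with \eqref{eq:change_of_variables}:
\[\mu(\R^m\setminus B_R)=\int_{\{|T|>R\}}\rho_X\dx \leq \rho_{max}R^{-1}\|T\|_{L^1}\leq C R^{-1}\|T\|_{L^p}.\]
Choosing $R = 2C\|T\|_{L^p}$ (or $R=1$ if this is smaller) makes $\mu(B_R)\geq 1/2$. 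Therefore
\[\|\rho_Y\|_{L^2}^2\geq \frac{1}{4\omega_m R^m}, \qquad \Rc_{SNE}[T]\geq -C - m\log(1+\|T\|_{W^{1,p}}).\]

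Combining the two bounds with $t=\|T\|_{W^{1,p}}$:
\[\Jc[T]\geq c\alpha t^p - \beta|\Omega| - C - m\log(1+t).\]
It remains to absorb the logarithm. Since $p\geq1$, we have $m\log(1+t)\leq \tfrac{c\alpha}{2}t^p + C'(\alpha)$. Optimizing $m\log(1+t)-\tfrac{c\alpha}{2}t^p$ gives a maximum of order $\log(1/\alpha)+C$. The statement's $C_2(\alpha^{-1}+1)$ dominates this, since $\log(1/\alpha)\leq \alpha^{-1}$. Halving the constant then yields \eqref{eq:coercive} with $C_1=c/2$.

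I expect the main obstacle to be this repulsion lower bound, specifically making sure constants depend only on $\rho_X,\Omega,p,d$ (and $m$). The Chebyshev step uses only $\rho_X\leq\rho_{max}$ and Hölder on the bounded set $\Omega$, so this should be fine. The absorption step is elementary, using $\log(1+t)\leq t$ and Young's inequality when $p=1$, and gives the stated $\alpha^{-1}$ dependence.
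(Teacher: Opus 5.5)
Your proposal is correct. It departs from the paper only in the lower bound for the repulsion; the attraction step (growth condition plus Poincar\'e for zero-mean $T$) is identical.

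The paper's route for the repulsion has three steps:
\begin{itemize}
\item Jensen's inequality gives $\Rc_{SNE}[T]\geq\int\rho_Y\log\rho_Y$.
\item Nonnegativity of the Kullback--Leibler divergence against $g\propto e^{-|y|^{p/2}}$ bounds the entropy by the moment $\int_\Omega|T|^{p/2}\rho_X\,dx$.
\item Cauchy's inequality with $\varepsilon=\tfrac12C_2\alpha$ lets $\varepsilon\|T\|_{L^p}^p$ be absorbed directly into the attraction. The $\alpha^{-1}$ in the statement is exactly the $1/\varepsilon$ from this step.
\end{itemize}

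You instead use a mass-concentration argument. Markov's inequality places half of $\mu$ in a ball of radius $R\lesssim\max\{1,\|T\|_{L^p}\}$, and Cauchy--Schwarz on that ball then gives the stronger estimate $\Rc_{SNE}[T]\geq -C-m\log(1+\|T\|_{W^{1,p}})$. This logarithmic loss has the correct order, since $\Rc_{SNE}[\lambda T]=\Rc_{SNE}[T]-m\log\lambda$. It also uses only a first moment of $T$.

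The cost is a separate step absorbing $m\log(1+t)$ into $\tfrac{c\alpha}{2}t^p$. This must be done as a genuine maximum over $t\geq0$, for example by splitting $t\leq1$ and $t\geq1$, so that the constant stays bounded for large $\alpha$. The payoff is the better constant $C(1+\log^+(1/\alpha))$ in place of $C(1+\alpha^{-1})$, which is still dominated by $C_2(\alpha^{-1}+1)$.

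The paper's route, by contrast, gets the absorption for free from the choice of $\varepsilon$. The price is a polynomial rather than logarithmic loss in $\|T\|$.
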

\begin{proof}
By \eqref{eq:superlinear} and the Poincar\'e inequality we have 
\begin{equation}\label{eq:lowerbound}
\int_\Omega \Psi(DT,x) \dx \geq \alpha \int_\Omega |DT|^p \dx - \beta|\Omega| \geq  C_2\alpha \|T\|^p_{W^{1,p}(\Omega)} - \beta|\Omega|,
\end{equation}
due to our assumption that $T$ has zero mean. To bound the repulsion term, we first apply Jensen's inequality to obtain
\[\Rc_{SNE}[T]=\log\left( \int_{\R^m} \rho_Y^2 \dy\right) \geq \int_{\R^m} \rho_Y \log(\rho_Y)\dy = -H(\rho_Y),\]
where $H(\rho_Y)$ is the entropy of $\rho_Y$.  Let $g(y) = Ke^{-|y|^{p/2}}$ where $K = \left( \int_{\R^m}e^{-|y|^{p/2}}\dy\right)^{-1}$ so that $g$ is a probability density. Note $K=K(p)$. Since the Kullback-Leibler divergence between $\rho_Y$ and $g$ is nonnegative we have 
\[H(\rho_Y) \leq -\int_{\R^m} \rho_Y\log g \dy = \int_{\R^m} |y|^{p/2}\rho_Y\dy + C_3 = \int_\Omega |T(x)|^{p/2}\rho_X \dx + C_3,\]
where $C_3=-\log K$. We insert this above and use Cauchy's inequality with $\eps$ to obtain 
\[\Rc_{SNE}[T]\geq -\eps\int_\Omega |T|^p \dx - \frac{4}{\eps}\int_\Omega \rho_X^2 \dx - C_3 \ \ \text{where}  \ \ \eps>0.\]
Setting $\eps = \frac{1}{2}C_2\alpha$ we obtain 
\[\Rc_{SNE}[T]\geq -\frac{C_2\alpha}{2}\|T\|^p_{W^{1,p}(\Omega)} - C_4(\alpha^{-1}+1),\]
for a constant $C_4>0$. Adding this to \eqref{eq:lowerbound} completes the proof, upon relabeling constants.
\end{proof}

Existence of minimizers in the calculus of variations also requires weak lower semicontinuity of the functional $\Jc$. For the attraction term, this requires a convexity assumption on $\Psi$, and is otherwise a standard result. In particular, we assume that $A \mapsto \Psi(A,x)$ is \emph{quasiconvex} (we refer to \cite{dacorogna2008direct} for definitions). Quasiconvexity is a weaker notion of convexity (it is implied by convexity) and is exactly the condition needed for weak lower semicontinuity in the vectorial setting in the calculus of variations. 

The other key ingredient for weak lower semicontinuity is an upper bound on the growth of $\Psi$. In particular, we assume there exists a constant $C>0$ such that
\begin{equation}\label{eq:Psiupper}
\Psi(A,x) \leq C(|A|^p + 1).
\end{equation}
\begin{lemma}[Lower Semicontinuity]\label{lem:wls}
Let $p\geq 1$ and let $\Psi$ be a continuous function satisfying \eqref{eq:Psiupper} such that $A \mapsto \Psi(A,x)$ is quasiconvex for all $x\in \Omega$. Then $\Jc$ is (sequentially) weakly lower semicontinuous in $W^{1,p}(\Omega)$. 
\end{lemma}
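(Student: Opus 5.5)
The functional $\Jc$ splits into the attraction $\int_\Omega \Psi(DT,x)\dx$ and the repulsion $\Rc_{SNE}[T]=\log\|\rho_Y\|_{L^2(\R^m)}^2$, and I will treat the two terms separately. Take $T_k\rightharpoonup T$ weakly in $W^{1,p}(\Omega)$. For the attraction term I will invoke the classical lower semicontinuity theorem for quasiconvex integrands (Acerbi--Fusco, as in \cite{dacorogna2008direct}). It applies because $\Psi$ is continuous, quasiconvex in $A$, satisfies the upper growth bound \eqref{eq:Psiupper}, and is bounded below (adding a constant if needed; note that $\Omega$ has finite measure). Passing to a subsequence that realizes the $\liminf$ then gives
\[
\liminf_k \int_\Omega \Psi(DT_k,x)\dx \ge \int_\Omega \Psi(DT,x)\dx .
\]
For $p=1$ the theorem requires weak convergence in $W^{1,1}$, which is exactly what the statement assumes. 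I will note this and not aim for more.

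The main obstacle is the repulsion term, so I will show that $\rho_Y\mapsto \|\rho_Y\|_{L^2}^2$ is lower semicontinuous along the sequence. By Rellich--Kondrachov, since $\Omega$ is a bounded Lipschitz domain, $T_k\to T$ strongly in $L^p(\Omega)$. Passing to a further subsequence, $T_k\to T$ almost everywhere. Dominated convergence then gives
\[
\int_\Omega \phi(T_k)\rho_X\dx\to \int_\Omega \phi(T)\rho_X\dx \quad\text{for every } \phi\in C_b(\R^m),
\]
that is, by \eqref{eq:change_of_variables}, the pushforward measures $\mu_k=\rho_{Y_k}\dy$ converge narrowly to $\mu=T_\#\rho_X\dx$.

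Next I will use the dual representation
\[
\|\rho\|_{L^2}^2=\sup_{\phi\in C_c(\R^m)}\Bigl(2\int\phi\,\rho\dy-\int\phi^2\dy\Bigr).
\]
This is a supremum of functionals that are continuous with respect to narrow convergence of the measures, so it gives
\[
\liminf_k\|\rho_{Y_k}\|_{L^2}^2\ge \sup_\phi\Bigl(2\int\phi\,d\mu-\int\phi^2\Bigr).
\]
The right-hand side equals $\|\rho_Y\|_{L^2}^2$ when $\mu$ has an $L^2$ density, and equals $+\infty$ otherwise. The latter case is consistent with the convention of extending $\Rc$ by $+\infty$, and it is harmless for lower semicontinuity. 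Finally, $\log$ is monotone and continuous on $(0,\infty]$, and $\|\rho_Y\|_{L^2}>0$ because $\rho_Y$ is a probability density. Hence $\liminf_k\Rc_{SNE}[T_k]\ge\Rc_{SNE}[T]$.

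To finish, I add the two inequalities, using $\liminf(a_k+b_k)\ge\liminf a_k+\liminf b_k$. Both liminfs exceed $-\infty$: the attraction liminf because $\Psi$ is bounded below, and the repulsion liminf because Jensen's inequality together with the entropy bound from Lemma \ref{lem:coercive} controls it in terms of $\|T_k\|_{L^p}$, which stays bounded under weak convergence. The subsequence argument is justified by first choosing a subsequence that attains $\liminf \Jc[T_k]$. Both lower bounds then hold along further subsequences of it, which yields $\liminf_k \Jc[T_k]\ge \Jc[T]$.
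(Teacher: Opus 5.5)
Your argument is correct, and its outer structure matches the paper's. Both cite the Acerbi--Fusco theorem \cite{dacorogna2008direct} for the attraction, then use Rellich--Kondrachov and an a.e.\ convergent subsequence for the repulsion. Where you differ is the mechanism that gives lower semicontinuity of $\|\rho_Y\|_{L^2}^2$.

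The paper works in Fourier space. By \eqref{eq:repulsion-parseval}, $\|\rho_{Y_k}\|_{L^2}^2=\int_{\R^m}|\hat\rho_{Y_k}(\xi)|^2\d\xi$. Dominated convergence in $x$ gives $\hat\rho_{Y_k}(\xi)\to\hat\rho_Y(\xi)$ for every $\xi$, and Fatou's lemma concludes. You instead upgrade a.e.\ convergence to narrow convergence of the pushforwards, and write $\|\cdot\|_{L^2}^2$ as a supremum of narrowly continuous affine functionals.

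The Fourier route is shorter, given that the representation is already available. It also transfers directly, via Fatou with the weight $|\xi|^{2-m}$, to the negative Sobolev repulsion \eqref{eq:Neg_sobolev_1} that t-SNE uses when $m\ge 3$. Your route stays in physical space and needs only narrow convergence of $\mu_k$. It shows explicitly that the functional extended by $+\infty$ off $L^2$ densities remains lower semicontinuous, consistent with the paper's footnoted convention. It also extends to other convex integral functionals of $\rho_Y$ via Legendre duality.

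You are also more careful than the paper in two places. You handle the subsequence bookkeeping explicitly. You apply $\log$ only at the end, whereas the paper writes $\Rc_{SNE}[T_n]=\int f_n$ when $\exp\Rc_{SNE}$ is meant.

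Two small points:
\begin{enumerate}
\item A lower bound on $\Psi$ is not among the lemma's hypotheses, and it cannot be manufactured by ``adding a constant''. It has to be imported from \eqref{eq:superlinear}, as in Theorem \ref{thm:existence}; the paper's citation relies on this implicitly as well.
\item Your entropy argument that the two liminfs exceed $-\infty$ is unnecessary. The lower semicontinuity inequalities already bound them below by the finite numbers $\int_\Omega\Psi(DT,x)\dx$ and $\Rc_{SNE}[T]$.
\end{enumerate}
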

\begin{proof}
The attraction term $T \mapsto \int_\Omega \Psi(DT,x) \dx$ is sequentially weakly lower semicontinuous by \cite[Theorem 8.11]{dacorogna2008direct}. We only need to prove the same for the repulsion term $\Rc_{SNE}[T]$. For this, we take $T_n \rightharpoonup T$ in $W^{1,p}(\Omega)$. By the Rellich–Kondrachov compactness theorem we have $T_n \to T$ in $L^p(\Omega)$, and upon passing to a subsequence, we can assume that $T_n(x) \to T(x)$ for almost ever $x\in \Omega$. By the Fourier representation of the repulsion term \eqref{eq:repulsion-parseval} we have
\[\Rc_{SNE}[T_n] = \int_{\R^m} f_n(y) \dy \ \ \text{where} \ \ f_n(y) = \left|\int_{\Omega} e^{-2\pi iy \cdot  T_n(x)} \rho_X(x) \d x\right|^2.\]
and $\Rc_{SNE}[T] = \int_{\R^m} f_\infty(y) \dy$, where we define $T_\infty=T$.  By the Dominated Convergence Theorem we have $\lim_{n\to \infty}f_n(x) = f_\infty(x)$, and so we can apply Fatou's Lemma to obtain 
\[\Rc_{SNE}[T] =\int_{\R^m}\lim_{n\to \infty} f_n(y) \dy \leq \liminf_{n\to \infty}\int_{\R^m}f_n(y) \dy = \liminf_{n\to \infty}\Rc_m[T_n].\qedhere\]
\end{proof}

The existence of a minimizer of $\Jc$ in $W^{1,p}(\Omega)$ for $p>1$ is now a standard consequence of the direct method in the calculus of variations, given the coercivity (Lemma \ref{lem:coercive}) and weak lower semicontinuity (Lemma \ref{lem:wls}).
\begin{theorem}[Existence]\label{thm:existence}
Let $p>1$ and let $\Psi$ be a continuous function satisfying \eqref{eq:superlinear} and \eqref{eq:Psiupper} such that $A \mapsto \Psi(A,x)$ is quasiconvex for all $x\in \Omega$. Then $\Jc$ admits a minimizer over $W^{1,p}(\Omega)$. 
\end{theorem}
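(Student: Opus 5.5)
The plan is to run the direct method in the calculus of variations, using the translation invariance of $\Jc$ to fix the mean. Since $\rho_Y$ for $T+c$ is a translate of $\rho_Y$ for $T$, $\Rc_{SNE}[T+c]=\Rc_{SNE}[T]$. Also $D(T+c)=DT$, so the attraction term is unchanged. Hence $\Jc[T+c]=\Jc[T]$ for every constant $c\in\R^m$, and it suffices to minimize over the closed subspace $W_0:=\{T\in W^{1,p}(\Omega):\int_\Omega T\dx=0\}$. Any minimizer there is a minimizer over all of $W^{1,p}(\Omega)$.

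The first step is to show that $m_*:=\inf_{W_0}\Jc$ is finite. Lemma \ref{lem:coercive} gives the lower bound $m_*\ge -\beta|\Omega|-C_2(\alpha^{-1}+1)>-\infty$. For the upper bound I would use a specific competitor, for instance a linear map $T(x)=A(x-\bar x)$ with $A$ of full rank $m$ (for $d\ge m$) and $\bar x$ the barycenter of $\Omega$. Its attraction term is finite by \eqref{eq:Psiupper}. Its pushforward density is bounded, because $\rho_X$ is bounded and a surjective linear map sends Lebesgue-densities to Lebesgue-densities with $L^\infty$ bounds on bounded domains. It is also compactly supported, so $\rho_Y\in L^2$ and $\Jc<\infty$.

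Next I take a minimizing sequence $T_n\in W_0$. By \eqref{eq:coercive}, $\|T_n\|_{W^{1,p}}$ is bounded. Since $p>1$, $W^{1,p}$ is reflexive, so a subsequence converges weakly, $T_n\rightharpoonup T$. The limit $T$ still has zero mean, because weak convergence in $W^{1,p}$ gives strong $L^p$ convergence by Rellich--Kondrachov. Lemma \ref{lem:wls} then yields $\Jc[T]\le\liminf\Jc[T_n]=m_*$, so $T$ is a minimizer.

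The main obstacle is making sure the limit $T$ is admissible, meaning that its pushforward has an $L^2$ density as required by Assumption \ref{ass:main}(iii). I would read this off the Fourier representation \eqref{eq:repulsion-parseval} used in Lemma \ref{lem:wls}. Fatou gives $\int|\widehat{\mu_T}|^2\le\liminf\int|\widehat{\mu_{T_n}}|^2<\infty$, and a finite measure whose Fourier transform lies in $L^2$ has an $L^2$ density by Plancherel. So $\rho_Y\in L^2$ and $\Jc[T]$ is well defined. Note also that the lower semicontinuity statement needs the exponent of the logarithm handled correctly: $\log$ is increasing and continuous, so the Fatou bound on $\exp\Rc_{SNE}$ transfers directly to $\Rc_{SNE}$.
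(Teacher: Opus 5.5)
Your proposal is correct and follows the same route as the paper, which simply invokes the direct method using the coercivity estimate of Lemma \ref{lem:coercive} and the weak lower semicontinuity of Lemma \ref{lem:wls}. The extra details you supply are exactly the points the paper leaves implicit: translation invariance to reduce to zero-mean maps where the coercivity bound applies, a linear competitor showing the infimum is finite (which is where $d\ge m$ is tacitly needed), the check that the weak limit has an $L^2$ pushforward density, and restoring the logarithm dropped in the proof of Lemma \ref{lem:wls}.
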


\begin{remark}\label{rem:BV}
The case of $p=1$ is more delicate. In this case, we can aim to prove a similar result in the space $BV(\Omega)$ of functions of bounded variation. However, the lower semicontinuity result requires us to consider a relaxation of $\Jc$ to $BV$, in the general setting. In special cases it is easier, and in fact, the methods used in this section apply equally well to the functional 
\[\Jc[T] = \int_\Omega |DT| \dx + \Rc_{SNE}[T],\]
since the first term is exactly the total variation seminorm. We leave the more general extension of these results to $BV$ spaces to future work.
\end{remark}

\section{Conclusion and future work}
\label{sec:conclusion}

In this paper, we studied the continuum limit of the t-SNE method for data visualization in the limit as the number of data points $n\to \infty$, and the perplexity grows quickly enough to keep the graph connected. We established the existence of a family of continuum limit energies for smooth embedding maps, and identified particular choices from within this family that are relevant, depending on the embedding dimension. We proved that the continuum limit energy is well-posed when the embedding dimension and data dimension are both $d=m=1$, and we presented numerical results showing that minimizers appear to converge in this setting. We also showed that when $d > m$ the energies are formally ill-posed and do not admit minimizers. 

\subsection{Future Work}

There are several interesting open problems we foresee for future work. 
\begin{itemize}
\item[(P1)] Does the energy $\Ec[T;\Phi_\infty]$ admit a minimizer in a Sobolev space $W^{1,p}(\Omega;\R^m)$ when $d=m\geq 2$ for some $p\geq 1$?  Our non-existence argument in Section \ref{sec:nonexistence} requires $d > m$ and our well-posedness theory in Section \ref{sec:continuum} exploits simplifications in the energy that can only be made when $d=m=1$. If a minimizer exists, is it unique in any sense? 
\item[(P2)] Do minimizers of the discrete t-SNE energy $\Ec_n[T]$ converge to minimizers of the continuum energy $\Ec[T;\Phi_s]$ as $n\to \infty$ and $h\to 0$ at appropriate rates, when a minimizer of the continuum energy exists? We can at least pose this question for $d=m=1$, in which case $s=1$. Depending on the resolution of (P1), we may be able to pose this question for $m=d\geq 2$ as well, in which case $s=\infty$. 
\item[(P3)] In the case where $d>m$ and the continuum limit energy $\Ec[T;\Phi_s]$ is ill-posed, what object do the discrete t-SNE embeddings converge to as $n\to \infty$ and $h\to 0$, if they do converge at all? In particular, it appears that the nonlocal energy $\Ec^h[T]$ provides some kind of regularizing effect that may ameliorate the ill-posedness. 
\item[(P4)] How do the results in this paper extend to other variants of t-SNE, such as the UMAP algorithm?
\end{itemize}

\appendix 

\section{Concentration of measure for U-statistics}
\label{app:ustat}

Here, we recall some useful concentration of measure results for U-statistics that are used in Section \ref{sec:consistency}. We refer to \cite{boucheron2003concentration,CalculusofVariationsLN} for more details. 

Let $X_1,\dots,X_n$ be independent and identically distributed real-valued random variables. A U-statistic of order $2$ is a quantity of the form
\begin{equation}\label{eq:Ustat}
U_n = \frac{1}{n(n-1)}\sum_{i\neq j} f(X_i,X_j),
\end{equation}
where $f:\R^2\to \R$ is bounded, Borel measurable, and symmetric (i.e., $f(x,y)=f(y,x)$). Let us denote the mean and variance by 
\begin{equation}\label{eq:mean}
\mu = \E[f(X_i,X_j)],
\end{equation}
and
\begin{equation}\label{eq:var}
\sigma^2 = \E[(f(X_i,X_j) - \mu)^2].
\end{equation}
We also require an almost sure upper bound for the random variables. That is, we assume there exists $b>0$ such that $|f(X_i,X_j) - \mu|\leq b$ almost surely.

In this setting, the Hoeffding inequality for U-statistics states that
\begin{equation}\label{eq:HoeffdingU}
\P(|U_n - \mu|\geq t) \leq 2\exp\left( -\frac{nt^2}{6b^2}\right)
\end{equation}
for any $t>0$. By changing variables so that $\lambda^2 = \frac{nt^2}{6b^2}$ we can rephrase this as stating that  
\begin{equation}\label{eq:HoeffdingU2}
|U_n - \mu| \leq \sqrt{\frac{6 b^2 \lambda^2}{n}}
\end{equation}
holds with probability at least $1-2\exp(-\lambda^2)$ for any $\lambda>0$. 

When the variance $\sigma^2$ is much smaller than the square of the absolute bound $b^2$, the Bernstein inequality for U-statistics is sharper, and reads
\begin{equation}\label{eq:BernsteinU}
\P(|U_n - \mu|\geq t) \leq 2\exp\left( -\frac{nt^2}{6(\sigma^2 + \tfrac13 b t)}\right)
\end{equation}
for any $t>0$. If we make the restriction $0 < t \leq \frac{3\sigma^2}{b}$, then this can be written in the simpler form
\[\P(|U_n - \mu|\geq t) \leq 2\exp\left( -\frac{nt^2}{12\sigma^2}\right),\]
and we can see the resemblance to the Hoeffding inequality \eqref{eq:HoeffdingU}, except that $\sigma$ replaces $b$. Again, if we change variables so that $\lambda^2 = \frac{nt^2}{4\sigma^2}$, then the Bernstein inequality can be rephrased to state that
\begin{equation}\label{eq:BernsteinU2}
|U_n - \mu| \leq \sqrt{\frac{12 \sigma^2\lambda^2}{n}}
\end{equation}
holds with probability at least $1-2\exp(-\lambda^2)$ for any $0 < \lambda \leq \frac{3\sigma}{2b} \sqrt n$. The upper bound on $\lambda$ is normally not all that restrictive, since we are usually concerned with small values of $\lambda$ for which $\lambda \ll \sqrt n$. In particular, to ensure the probability is close to one, one can set, for example, $\lambda^2 = 2\log n$, and then 
\begin{equation}\label{eq:BernsteinU3}
|U_n - \mu| \leq \sqrt{\frac{24 \sigma^2\log n}{n}}
\end{equation}
holds with probability at least $1-\tfrac{2}{n^2}$ as long as $\frac{\log n}{n} \leq \frac{9\sigma^2}{8b^2}$.

We also note that both the Bernstein and Hoeffding inequalities for U-statistics hold, with slight modifications, when $f$ is not symmetric. Indeed, suppose we have a U-statistic
\begin{equation}\label{eq:Ustatnon}
U_n = \frac{1}{n(n-1)}\sum_{i\neq j} g(X_i,X_j),
\end{equation}
where $g:\R^2\to \R$ is bounded and Borel measurable. In this case we can write 
\[U_n =\frac{1}{n(n-1)}\sum_{i\neq j} f(X_i,X_j),\]
where $f(x,y) = \frac{1}{2}(g(x,y) + g(y,x))$ is symmetric. Then we have 
\[\mu = \E [U_n] = \E[f(X_i,X_j)] = \frac{1}{n(n-1)}\sum_{i\neq j} \E[g(X_i,X_j)].\]
We also have 
\[|f(X_i,X_j) - \mu| = \left| \frac{1}{2}(g(X_i,X_j) + g(X_j,X_i)) - \mu\right| \leq \frac{1}{2}|g(X_i,X_j) - \mu| + \frac{1}{2}|g(X_i,X_j) - \mu|.\]
So the absolute bound $b$ in the Hoeffding inequality \eqref{eq:HoeffdingU} and the Berstein inequality \eqref{eq:BernsteinU} can be chosen so that $|g(X_i,X_j) - \mu| \leq b$ almost surely, which is unchanged, except that we may have $\mu \neq \E[g(X_i,X_j)]$. Furthermore, the variance $\sigma^2$ can be bounded by 
\[\sigma^2 \leq \E[f(X_i,X_j)^2] \leq 2\,\E[g(X_i,X_j)^2 + g(X_j,X_i)^2] = 4\,\E[g(X_i,X_j)^2].\]

\bibliography{ref}
\bibliographystyle{cv}
\end{document}